\newif\ifcolorfuldiscussion
\newcommand{\sk}[1]{\noindent{\textcolor{magenta}{{\bf SK:} \em #1}}}
\newcommand{\rg}[1]{\noindent{\textcolor{red}{{\bf RG:} \em #1}}}
\newcommand{\sidford}[1]{\noindent{\textcolor{green}{{\bf AS:} \em #1}}}
\definecolor{electricpurple}{rgb}{0.75, 0.0, 1.0}
\newcommand{\rf}[1]{\noindent{\textcolor{electricpurple}{{\bf RF:} \em #1}}}
\newcommand{\sk}[1]{}
\newcommand{\rg}[1]{}
\newcommand{\sidford}[1]{}
\newcommand{\rf}[1]{}
\title{Competing with the Empirical Risk Minimizer in a Single Pass}
\newcommand\email[1]{\texttt{\small {#1}}}
\date{}
\author[1]{Roy Frostig} 
\author[2]{Rong Ge}
\author[2]{Sham M. Kakade}
\author[3]{Aaron Sidford} 
\affil[1]{Stanford University\authorcr
\email{rf@cs.stanford.edu}\vspace{0.4em}}
\affil[2]{Microsoft Research, New England\authorcr
\email{rongge@microsoft.com}, \email{skakade@microsoft.com}\vspace{0.4em}}
\affil[3]{MIT\authorcr
\email{sidford@mit.edu}}
\begin{document}

\maketitle

\begin{abstract}
In many estimation problems, \eg linear and logistic regression, we
wish to minimize an unknown objective given only unbiased samples
of the objective function. Furthermore, we aim to achieve this using as few samples as
possible.  In the absence of computational constraints, the
minimizer of a sample average of observed data -- commonly referred
to as either the empirical risk minimizer (ERM) or the $M$-estimator
-- is widely regarded as the estimation strategy of choice due to
its desirable statistical convergence properties. Our goal in this work is to perform
as well as the ERM, on \emph{every} problem,
while minimizing the use of computational resources such as running
time and space usage.

We provide a simple streaming algorithm which, under
standard regularity assumptions on the underlying problem, enjoys
the following properties:
\begin{enumerate}[noitemsep]
\item The algorithm can be implemented in linear time with a single
pass of the observed data, using space linear in the size of a
single sample.
\item The algorithm achieves the same statistical rate of
convergence as the empirical risk minimizer on every problem, even
considering constant factors.
\item The algorithm's performance depends on the initial error at a
rate that decreases super-polynomially.
\item The algorithm is easily parallelizable.
\end{enumerate}
Moreover, we quantify the (finite-sample) rate at which the
algorithm becomes competitive with the ERM.
\end{abstract}

\section{Introduction}

Consider the following optimization problem:
\begin{align}\label{eq:P}
\min_{w \in \mathcal{S}} P(w), \quad \text{where} \quad P(w) \defeq
\E_{\psi\sim \D} [\psi(w)]
\end{align}
and $\mD$ is a distribution over convex
functions from
a Euclidean space $\mathcal{S}$ to $\R$ (\eg
$\mathcal{S}=\R^d$ in the finite dimensional setting).
Let $w_*$ be a minimizer of $P$ and suppose we observe the functions $\psi_1,\psi_2,\ldots, \psi_N$
independently sampled from $\mD$.  Our objective is to compute an estimator
$\widehat w_N$ so that the expected \emph{error} (or, equivalently,
the \emph{excess risk}):
\[
\E[P(\widehat w_N) - P(w_*)]
\]
is small, where the expectation is over the estimator $\widehat{w}_N$
(which depends on the sampled functions).

Stochastic approximation algorithms, such as stochastic gradient
descent (SGD) \citep{robbins1951stoch}, are the most widely used in practice, due
to their ease of implementation and their efficiency with regards to
runtime and memory.  Without consideration for computational
constraints, we often wish to compute the \emph{empirical risk minimizer}
(ERM; or, equivalently, the \emph{$M$-estimator}):
\begin{align}\label{eq:ERM_definition}
\widehat w^{\textrm{ERM}}_{N}
&\in \argmin_{w\in\mathcal{S}} \frac{1}{N} \sum_{i=1}^N \psi_i(w).
\end{align}
In the context of statistical modeling, the ERM is the maximum
likelihood estimator (MLE). Under certain regularity conditions, and
under correct model specification,\footnote{A well specified
statistical model is one where the data is generated under some
model in the parametric class. See the linear regression
Section~\ref{subsec:regression}.} the MLE is asymptotically
efficient, in that no unbiased estimator can have a lower variance in
the limit (see
\citet{lehmann1998theory,vandervaart2000asymptotic}).\footnote{However,
note that biased estimators, such as the James-Stein estimator, can
outperform the MLE \citep{lehmann1998theory}. } Analogous arguments
have been made in the stochastic approximation setting, where we do
not necessarily have a statistical model of the distribution $\D$
(see~\citet{kushner2003stochastic}).

The question we aim to address is as follows.  Consider the ratio:
\begin{align} \label{eq:competition-ratio}
\frac {\E[P(\widehat w^{\textrm{ERM}}_{N}) - P(w_*)]}{\E[P(\widehat
w_N) - P(w_*)]} \, .
\end{align}
We seek an algorithm to compute $\widehat w_N$ in which: (1) under
sufficient regularity conditions, this ratio approaches $1$ on
\emph{every} problem $\D$ and (2) it does so quickly, at a rate
quantifiable in terms of the number of samples, the dependence on the
initial error (and other relevant quantities), and the computational time and space usage.

\subsection{This work}

Under certain smoothness assumptions on $\psi$ and strong convexity
assumptions on $P$ (applicable to linear and logistic regression,
generalized linear models, smoothed Huber losses, and various other
$M$-estimation problems), we provide an algorithm where:
\begin{enumerate}[noitemsep]
\item The algorithm achieves the same statistical rate of convergence
as the ERM on every problem, even considering constant factors, and
we quantify the sample size at which this occurs.
\item The algorithm can be implemented in linear time with a single
pass of the observed data, using space linear in the size of a
single sample.
\item The algorithm decreases the standard notion of initial error at
a super-polynomial rate.\footnote{A function is super-polynomial if
grows faster than any polynomial.}
\item The algorithm is trivially parallelizable (see Remark~\ref{rem:parallel}).
\end{enumerate}
Table~\ref{tab:rates} compares previous (and concurrent) algorithms
that enjoy the first two guarantees; this work is the first with a
finite-sample analysis handling the more general class of problems.
Our algorithm is a variant of the stochastic variance reduced gradient
procedure of \citet{johnson2013svrg}.

Importantly, we quantify how fast we obtain a rate comparable to that
of the ERM. For the case of linear regression, we have non-trivial
guarantees when the sample size $N$ is larger than a constant times
what can be interpreted as a condition number, $\conditionNumber =
L/\mu$, where $\mu$ is a strong convexity parameter of $P$ and where
$L$ is a smoothness parameter of each $\psi$. Critically, after $N$ is larger
than $\kappa$,
the initial error is divided by a factor that can be larger than any polynomial in $N/\kappa$.

Finally, in order to address this question on a per-problem basis, we
provide both upper and lower bounds for the rate of convergence of the
ERM.

\subsection{Related work}

\definecolor{royalpurple}{rgb}{0.3, 0.1, 0.8}
\newcommand\thiswork{\noalign{\vskip-2pt}\hspace{-0.3em}\scriptsize{\textbf{\textcolor{royalpurple}{This work:}}}}
\newcommand\minicell[2]{
\begin{minipage}[t]{#1}
{#2}
\end{minipage}}

\begin{table}[t]

\centering
\begin{tabular}{|p{4.1cm}|p{1.5cm}|p{2.4cm}|p{2.1cm}|p{1.3cm}|>{\centering\arraybackslash}p{2.4cm}|}
\hline
\centering Algorithm / analysis &
\centering Problem &
\centering Step size &
\centering Initial error dependence &
\centering Parallel-izable &
Finite-sample analysis \\ \hline \hline
{\footnotesize \citet{polyak1992stoch}} &
general &
\minicell{3cm}{decaying: $1/n^{c}$} &
\centering ? &
\centering ? &
\xmark \\ \hline
{\footnotesize \citet{polyak1992stoch}  / \citet{dieuleveut2014non}} &
\minicell{1.7cm}{linear \\ regression} & 
constant &
\centering $\Omega(1/n^2)$ &
\centering ? &
\cmark \\ \hline
\thiswork & & & & & \\[-0.2em]
\minicell{2.6cm}{{\footnotesize Streaming SVRG}} &
general &
constant &
\centering $1/n^{\omega(1)}$ &
\centering \cmark &
\cmark \padRowBot \\
\hline
\end{tabular}
\caption{Comparison of known \emph{streaming} algorithms which achieve
a \emph{constant} competitive ratio to the ERM.
\citet{polyak1992stoch} is an SGD algorithm with iterate
averaging. Concurrent to and independently from our work,
\citet{dieuleveut2014non} provide a finite-sample analysis for SGD with
averaging in the linear regression problem setting (where the
learning rate can be taken as constant).
In the ``problem'' column, ``general'' indicates problems under the
regularity assumptions herein.
\citet{polyak1992stoch} require the step size to decay with
the sample size $n$, as $1/n^c$ with $c$ strictly in the range $1/2 < c < 1$.
The dependence on $c$ in a finite-sample
analysis is unclear (and tuning the decay of learning
rates is often undesirable in practice).
The initial error is $P(w_0)-P(w_*)$, where $w_0$ is the starting
point of the algorithm. We seek algorithms in which the initial error
dependence is significantly lower in order, and we write $1/n^{\omega(1)}$
to indicate that it can be driven down to an arbitrarily low-order
polynomial.
See Remark~\ref{rem:parallel} with regard to parallelization.
}
\label{tab:rates}
\end{table}

Stochastic optimization dates back to the work of
\citet{robbins1951stoch} and has seen much subsequent
work~\citep{KushnerClark,kushner2003stochastic,
nemirovski1983problem}. More recently, questions of how to quantify
and compare rates of estimation procedures~-- with implications to
machine learning problems in the streaming and large dataset
settings~-- have been raised and discussed several times (see
\citet{bottou2008tradeoffs,agarwal2014lower}).

\paragraph{Stochastic approximation.}  The pioneering work
of~\citet{polyak1992stoch} and~\citet{RuppertSlow} provides an
asymptotically optimal streaming algorithm, by averaging the
iterates of an SGD procedure.
It is unclear how quickly these algorithms converge to the rate of
the ERM in finite sample; the relevant dependencies, such as the
dependence on the initial error~-- that is, $P(w_0)-P(w_*)$ where
$w_0$ is the starting point of the algorithm~-- are not specified.  In
particular, they characterize the limiting distribution of $\sqrt{N}
(\widehat w_N - w_*)$, essentially arguing that the variance of the
iterate-averaging procedure matches the asymptotic distribution of the
ERM (see~\citet{kushner2003stochastic}).

In a series of papers,
\citet{bach2011nonasymptotic}, \citet{bach2013nonstrong}, \citet{dieuleveut2014non}, and \citet{bach2014constant}
provide non-asymptotic analysis of the same averaging schemes. Of
these, for the specific case of linear least-squares regression,
\citet{dieuleveut2014non} and \citet{bach2014constant} provide rates which are
competitive with the ERM,
concurrently and independent of results presented herein. The work in
\citet{bach2011nonasymptotic} and \citet{bach2013nonstrong} either does not
achieve the ERM rate or has a dependence on the initial error which is
not lower in order; it is rather in \citet{dieuleveut2014non} and \citet{bach2014constant} that
dependence on the initial error decaying as
$1/N^2$ is shown.

For the special case of least squares, one could adapt the
algorithm and guarantees of \citet{dieuleveut2014non,bach2014constant}, by replacing global averaging with random
restarts, to obtain super-polynomial rates (results comparable to ours
when specializing to linear regression). For more general problems, it
is unclear how such an adaptation would work~-- using constant step
sizes alone may not suffice. In contrast, as shown in Table~\ref{tab:rates}, our algorithm is identical for a
wide variety of cases and does not need decaying rates (whose choices
may be difficult in practice).

We should also note that much work has characterized rates of
convergence under various assumptions on $P$ and $\psi$ different than
our own.  Our case of interest is when $P$ is strongly convex.
For such $P$, the rates of convergence of many algorithms are
$O(1/N)$,
often achieved by averaging the iterates in some way
\citep{nemirovski2009robust,juditsky:hal-00508933,RakShaSri12,JMLR:v15:hazan14a}.
These results do not achieve a constant competitive ratio, for a
variety of reasons (they have a leading order dependencies on various
quantities, including the initial error along with strong convexity and
smoothness parameters).  Solely in terms of the dependence on the
sample size $N$, these rates are known to be
optimal~\citep{nemirovski1983problem, nesterov2004introductory,
agarwal2012lowerbounds}.

\paragraph{Empirical risk minimization ($M$-estimation).}
In statistics, it is classically argued that the MLE, under certain
restrictions, is an asymptotically efficient estimator for
well-specified statistical models
\citep{lehmann1998theory,vandervaart2000asymptotic}. Analogously, in
an optimization context, applicable to mis-specified models, similar
asymptotic arguments have been made: under certain restrictions, the
asymptotically optimal estimator is one which has a limiting variance
that is equivalent to that of the ERM
\citep{anbar1971optimal,Fabian:1973:AES,KushnerClark}.

With regards to finite-sample rates,
\citet{agarwal2012lowerbounds} provide information-theoretic lower
bounds (for any strategy) for certain stochastic convex optimization
problems. This result does not imply our bounds as they do not consider
the same smoothness assumptions on $\psi$.  For the special case of
linear least-squares regression, there are several upper bounds (for
instance, \citet{CD07,HKZ_regression}).  Recently,
\citet{shamir14squaredloss} provides lower bounds specifically for the
least-squares estimator, applicable under model mis-specification, and sharp only for specific problems.

\paragraph{Linearly convergent optimization (and approaches based on doubling).}
There are numerous algorithms for optimizing sums of convex functions
that converge linearly, \ie that depend only logarithmically on the
target precision.
Notably, several recently developed such algorithms are applicable
in the setting where the sample size $N$ becomes large, due to their
stochastic nature \citep{strohmer09kaczmarz, leroux2012sag,
shalevshwartz2013sdca, johnson2013svrg}.
These procedures minimize a sum of $N$ losses in time (near to) linear
in $N$, provided $N$ is sufficiently large relative to the dimension
and the condition number.

Naively, one could attempt to use one of these algorithms to directly
compute the ERM. Such an attempt poses two difficulties.
First, we would need to prove concentration results for the empirical
function $\widehat P_N(w) = \frac{1}{N} \sum_{i=1}^N \psi_i(w)$; in
order to argue that these algorithms perform well in linear time with
respect to the objective $P$, one must relate the condition number of
$\widehat P_N(w)$ to the condition number of $P(w)$.
Second, we would need new generalization analysis in order to
relate the in-sample error $\eps_N(\widehat w_N)$, where $\eps_N(w)
\defeq \widehat P_N(w) - \min_{w'} \widehat P_N(w')$, to the
generalization error $\E[P(\widehat w_N) - P(w_*)]$.
To use existing generalization analyses would demand that $\eps_N(w_N)
= \Omega(1/N)$, but the algorithms in question all require at least
$\log N$ passes of the data (furthermore scaled by other
problem-dependent factors) to achieve such an in-sample error.
Hence, this approach would not immediately describe the generalization
error obtained in time linear in $N$.
Finally, it requires that entire observed data sample, constituting
the sum, be stored in memory.

A second natural question is: can one naively use a doubling trick
with an extant algorithm to compete with the ERM?  By this we mean to
iteratively run such a linearly convergent optimization algorithm, on
increasingly larger subsets of the data, with the hope of cutting the
error at each iteration by a constant fraction, eventually down to
that of the ERM. There are two points to note for this
approach. First, the approach is not implementable in a streaming
model as one would eventually have to run the algorithm on a constant
fraction of the entire dataset size, thus essentially holding the
entire dataset in memory.  Second, proving such an algorithm succeeds
would similarly involve the aforementioned type of generalization
argument.

We conjecture that these tight generalization arguments described are
attainable, although with a somewhat involved analysis. For linear
regression, the bounds in \citet{HKZ_regression} may suffice. More
generally, we believe the detailed ERM analysis provided herein could
be used.

In contrast, the statistical convergence analysis of our single-pass
algorithm is self-contained and does not go through any generalization
arguments about the ERM. In fact, it avoids matrix concentration
arguments entirely.

\paragraph{Comparison to related work.} To our
knowledge, this work provides the first streaming algorithm guaranteed
to have a rate that approaches that of the ERM (under certain
regularity assumptions on $\D$), where the initial error is decreased
at a super-polynomial rate. The previous work, in the general case
that we consider, only provides asymptotic convergence
guarantees~\citep{polyak1992stoch}. For the special case of linear
least-squares regression, the concurrent and independent work presented in
\citet{dieuleveut2014non} and \citet{bach2014constant} also converges to the rate
of the ERM, with a lower-order dependence on the initial error of
$\Omega(1/N^2)$. Furthermore, even if we ignored memory constraints and
focused solely on computational complexity, our algorithm compares
favorably to using state-of-the-art algorithms for minimizing sums of
functions (such as the linearly convergent algorithms in
\citet{leroux2012sag, shalevshwartz2013sdca, johnson2013svrg}); as
discussed above, obtaining a convergence rate with these algorithms
would entail some further generalization analysis.

It would be interesting if one could quantify an approach of
restarting the algorithm of \citet{polyak1992stoch} to obtain guarantees comparable to our streaming algorithm.
Such an analysis could be delicate in settings other than
linear regression, as their learning rates do not decay too quickly or too
slowly (they must decay strictly faster than
$1/\sqrt{N}$, yet more slowly than $1/N$). In contrast, our
algorithm takes a constant learning rate to obtain its
constant competitive ratio. Furthermore,
our algorithm is easily parallelizable and its analysis, we believe, is
relatively transparent.

\subsection{Organization}
Section~\ref{sec:summary} summarizes our main results, and
Section~\ref{sec:applications} provides applications to a few standard
statistical models. Section~\ref{sec:analysis} provides the main
technical claims for our algorithm, Streaming
SVRG (Algorithm~\ref{alg:streaming_svrg}). Section~\ref{sec:erm} provides finite-sample rates for the ERM,
along with proofs for these rates. The Appendix contains various
technical lemmas and proofs of our corollaries.

\section{Main results}
\label{sec:summary}
This section summarizes our main results, as corollaries of more
general theorems provided later. After providing our assumptions in
Section~\ref{sec:summary:assumptions}, Section~\ref{sec:alg} provides
the algorithm, along with performance guarantees.  Then
Section~\ref{sec:summary:erm} provides upper and lower bounds of the
statistical rate of the empirical risk minimizer.

First, a few preliminaries and definitions are needed. Denote
$
\|x\|^2_M \defeq x^\T M x
$
for a vector $x$ and a matrix $M$ of appropriate dimensions. Denote
$\lambda_{\max}(M)$ and $\lambda_{\min}(M)$ as the maximal and minimal
eigenvalues of a matrix $M$. Let $\identityMatrix$ denote the identity matrix. Also, for positive semidefinite symmetric
matrices $A$ and $B$, $A \preceq B$ if and only if $x^\T A x \leq x^\T
B x$ for all $x$.

Throughout, define $\sigma^2$ as:
\begin{align}
\sigma^2 &\defeq
\fullSigmaSquared
\end{align}
This quantity governs the precise (problem dependent) convergence rate of the
ERM. Namely, under certain restrictions on $\D$, we have
\begin{align}
\lim_{N \rightarrow \infty}\frac{\E[ P(\widehat w^{\textrm{ERM}}_{N})
- P(w_*)]} {\sigma^2/N} = 1.
\end{align}
This limiting rate is well-established in asymptotic statistics (see,
for instance, \cite{vandervaart2000asymptotic}), whereas
Section~\ref{sec:summary:erm} provides upper and lower bounds on this
rate for finite sample sizes $N$. Analogous to the Cram\'er-Rao lower
bound, under certain restrictions, $\sigma^2/N$ is the asymptotically
efficient rate for stochastic approximation problems
~\citep{anbar1971optimal,Fabian:1973:AES,kushner2003stochastic}.\footnote{Though,
as with Cram\'er-Rao, this may be improvable with biased
estimators.}

The problem dependent rate of $\sigma^2/N$ sets the
benchmark. Statistically, we hope to achieve a leading order
dependency of $\sigma^2/N$ quickly,
with rapidly-decaying dependence on the initial error.

\subsection{Assumptions}
\label{sec:summary:assumptions}

We now provide two assumptions under which we analyze the convergence
rate of our streaming algorithm,
Algorithm~\ref{alg:streaming_svrg}. Our first assumption is relatively
standard. It provides upper and lower quadratic approximations (the
lower approximation is on the full objective $P$).

\begin{assumption}
\label{ass:main}
Suppose that:
\begin{enumerate}
\item The objective $P$ is twice differentiable.
\item (Strong convexity) The objective $P$ is $\mu$-strongly convex, \ie for all $w,w' \in \mathcal{S}$,
\begin{align}
P(w) \geq P(w')+ \nabla P(w')^\T (w-w') +\frac{\mu}{2} \|w-w'\|_2^2,
\end{align}
\item (Smoothness) Each loss $\psi$ is $L$-smooth (with probability one), \ie for all $w,w' \in \mathcal{S}$,
\begin{align}  \label{equation:smoothness}
\psi(w) \leq \psi(w')+ \nabla \psi(w')^\T (w-w') +\frac{L}{2} \|w-w'\|_2^2,
\end{align}
\end{enumerate}
\end{assumption}
Our results in fact hold under a slightly weaker version of this
assumption -- see Remark~\ref{remark:av_smoothness}. Define:
\begin{align}
\conditionNumber \defeq \frac{L}{\mu}.
\end{align}
The quantity $\conditionNumber$ can be interpreted as the condition
number of the optimization objective \eqnref{eq:P}. The following
definition quantifies a global bound on the Hessian.
\begin{definition}[$\alpha$-bounded Hessian]
\label{def:alpha}
Let $\alpha\geq 1$ be the smallest value (if it exists) such that
for all $w\in \mathcal{S}$, $\nabla^2 P(w_*) \preceq \alpha \nabla^2 P(w)$.
\end{definition}

Under Assumption~\ref{ass:main}, we have $\alpha \leq
\conditionNumber$, because $L$-smoothness implies $\hess P(w_*)
\preceq L \identityMatrix$ and $\mu$-strong convexity implies $\mu
\identityMatrix \preceq \hess P(w)$. However, $\alpha$ could be much
smaller. For instance, $\alpha=1$ in linear regression, whereas
$\kappa$ is the maximum to minimum eigenvalue ratio of the design
matrix.

Our second assumption offers a stronger, local relationship on the
objective's Hessian, namely \emph{self-concordance}.  A function is
self-concordant if its third-order derivative is bounded by a multiple
of its second-order derivative. Formally, $f:\R\to \R$ is
$\selfConcordance$ self-concordant if and only if $f$ is convex and
$\abs{f'''(x)} \le \selfConcordance f''(x)^{3/2}$.
A multivariate function $f:\R^d\to \R$ is
$\selfConcordance$ self-concordant if and only if its restriction to any line is
$\selfConcordance$ self-concordant.

\begin{assumption}[Self-concordance]
\label{ass:selfconcordance}
Suppose that:
\begin{enumerate}
\item $P$ is $\selfConcordance$-self concordant (or that the weaker
condition in Equation~\eqnref{eq:self-concordance} holds).\\
\item The following kurtosis condition holds:
\begin{align*}
\frac{\bigE {\psi\sim \D} { \| \nabla \psi(w_*) \|_2^4 }}
{\left(\bigE {\psi\sim \D} { \| \nabla \psi(w_*) \|_2^2 }\right)^2}
\leq \kurtosis
\end{align*}
\end{enumerate}
\end{assumption}

Note that these two assumptions are also standard assumptions in the
analysis of the two phases of Newton's method (aside from the kurtosis
condition): the first phase of Newton's method gets close to the
minimizer quickly (based on a global strong convexity assumption) and
the second phase obtains quadratic convergence (based on local
curvature assumptions on how fast the local Hessian changes,
\eg self-concordance). Moreover, our proof of the streaming algorithm
follows a similar structure; we use Assumption~\ref{ass:main} to
analyze the progress of our algorithm when the current point is far
away from optimality and Assumption~\ref{ass:selfconcordance} when it
is close.

\subsection{Algorithm}
\label{sec:alg}

\begin{algorithm}[t]
\begin{algorithmic}

\INPUT Initial point $\tilde w_0$, batch sizes $\{k_0,
k_1,\ldots\}$, update frequency $m$, learning rate $\eta$,
smoothness $L$

\FOR {each stage $s = 0, 1, 2, \dots$}
\STATE Sample $\tilde \psi_1, \dots, \tilde \psi_{k_s}$ from $\mD$
and compute the estimate
\begin{align}
\label{eq:alg:sample_gradient}
\widehat{\nabla P(\tilde w_{s})} &=
\frac{1}{k_s}  \sum_{i\in[k_s]} \nabla \tilde \psi_i(\tilde w_{s}).
\end{align}

\STATE Sample $\tilde m$ uniformly at random from $\set{1,2,\dots,m}$.

\STATE $w_0 \gets \tilde w_s$

\FOR {$t = 0, 1, \dots, \tilde m - 1$}
\STATE Sample $\psi_t$ from $\mD$ and set
\begin{align}
\label{eq:alg:step}
w_{t+1} &\gets
w_t-\frac{\eta}{L} \left( \nabla \psi_t(w_t) - \nabla \psi_t(\tilde
w_{s})+\widehat{\nabla P(\tilde w_{s})} \right).
\end{align}
\ENDFOR
\STATE $\tilde w_{s+1} \gets w_{\tilde m}$
\ENDFOR
\end{algorithmic}
\caption{Streaming Stochastic Variance Reduced Gradient (Streaming
SVRG)}
\label{alg:streaming_svrg}
\end{algorithm}

Here we describe a streaming algorithm and provide its convergence
guarantees.  Algorithm~\ref{alg:streaming_svrg} is inspired by the
Stochastic Variance Reduced Gradient (SVRG) algorithm of
\citet{johnson2013svrg} for minimizing a strongly convex sum of smooth
losses. The algorithm follows a simple framework that proceeds in
stages. In each stage $s$ we draw $k_s$ samples independently at
random from $\D$ and use these samples to obtain an estimate of the
gradient of $P$ at the current point, $\tilde{w}_s$
(\eqref{eq:alg:sample_gradient}). This \emph{stable} gradient, denoted
$\widehat{\grad P(\tilde w_{s})}$, is then used to decrease the
variance of a gradient descent procedure. For each of $\tilde{m}$
steps (where $\tilde{m}$ is chosen uniformly at random from $\{1, 2,
\ldots, m\}$), we draw a sample $\psi$ from $\D$ and take a step
opposite to its gradient at the current point, plus a zero-bias
correction given by $\grad \psi(\tilde w_s) - \widehat{\grad P(\tilde
w_s)}$ (see \eqref{eq:alg:step}).

The remainder of this section shows that, for suitable choices of
$k_s$ and $m$, Algorithm~\ref{alg:streaming_svrg} achieves desirable
convergence rates under the aforementioned assumptions.

\begin{remark}[Generalizing SVRG]
\normalfont
Note that Algorithm~\ref{alg:streaming_svrg}  is a generalization of
SVRG. In particular if we chose $k_s = \infty$, \ie if
$\widehat{\nabla P(\tilde w_{s})} = \nabla P(\tilde w_{s})$, then our
algorithm coincides with the SVRG algorithm of
\citet{johnson2013svrg}. Also, note that \citet{johnson2013svrg} do
not make use of any self-concordance assumptions.
\end{remark}

\begin{remark}[Non-conformance to stochastic first-order oracle models]
\normalfont
Algorithm~\ref{alg:streaming_svrg} is not implementable in
the standard stochastic first-order oracle model, \eg that
which is assumed in order to obtain the lower bounds in \citet{nemirovski1983problem} and \citet{agarwal2012lowerbounds}.
Streaming SVRG computes the gradient of the randomly drawn $\psi$ at
two points, while the oracle model only allows gradient
queries at one point.
\end{remark}

We have the following algorithmic guarantee under only
Assumption~\ref{ass:main}, which is a corollary of
Theorem~\ref{thm:alphaconverge} (also see the Appendix).

\begin{corollary}[Convergence under $\alpha$-bounded Hessians]
\label{corollary:main}
Suppose Assumption~\ref{ass:main} holds.
Fix $\tilde w_0\in\R^d$.
For $p\geq2$ and $\base\geq 3$, set
$\eta=\frac{1}{20 \base^{p+1}}$,
$m=\frac{20
\base^{p+1}\conditionNumber}{\eta}$,
$k_0 =20\alpha\kappa\base^{p+1}$, and
$k_s = b k_{s-1}$. Denote:
\begin{align*}
N_s &\defeq \sum_{\tau=0}^{s-1} (k_\tau+m)
\end{align*}
($N_s$ is an upper bound on the number of samples drawn up to the
end of stage $s$).

Let $\widehat w_{N_s}$ be the parameter returned at iteration $s$ by
Algorithm~\ref{alg:streaming_svrg}. For $N_s\geq
b^{p^2+6p} \conditionNumber$ (and so $s>p^2+6p$), we have
\begin{align*}
\EpError{\widehat w_{N_s}}
&\leq
\left(
\left(1+\frac{4}{b}\right) \frac{\sqrt{\alpha}\sigma}{\sqrt{N_s}} +
\sqrt{\frac{P(\tilde  w_0)-P(w_*)}{\left( \frac{N_s}{\alpha \conditionNumber} \right)^{p} }}
\right)^2
\end{align*}
\end{corollary}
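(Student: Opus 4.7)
The argument reduces to applying Theorem~\ref{thm:alphaconverge} once per stage, unrolling the resulting linear recursion, and balancing the two contributions against the geometric schedule $k_s = b^s k_0$. Specifically, Theorem~\ref{thm:alphaconverge} should yield a per-stage guarantee of (roughly) the form
\[
\sqrt{\E\bigl[P(\tilde w_{s+1}) - P(w_*)\bigr]} \;\leq\; q\,\sqrt{\E\bigl[P(\tilde w_s) - P(w_*)\bigr]} \;+\; C\,\frac{\sqrt{\alpha}\,\sigma}{\sqrt{k_s}},
\]
where the contraction factor $q$ and variance constant $C$ are determined by $\eta$, $m$, and $\conditionNumber$. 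The additive term $\sqrt{\alpha}\,\sigma/\sqrt{k_s}$ captures the standard deviation of the stable gradient estimate $\widehat{\nabla P(\tilde w_s)}$, with the extra $\sqrt{\alpha}$ reflecting the conversion of gradient-norm error at $w_*$ to function-value error under an $\alpha$-bounded-Hessian assumption. With the choices $\eta = 1/(20 b^{p+1})$ and $m = 20 b^{p+1}\conditionNumber/\eta$, a standard SVRG-style stage analysis should give $q$ on the order of $b^{-(p+1)/2}$ and $C$ an absolute constant.

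Unrolling this inequality gives
\[
\sqrt{\E\bigl[P(\tilde w_s) - P(w_*)\bigr]} \;\leq\; q^s\sqrt{P(\tilde w_0) - P(w_*)} \;+\; C\sqrt{\alpha}\,\sigma \sum_{\tau=0}^{s-1} \frac{q^{s-1-\tau}}{\sqrt{k_\tau}}.
\]
Using $k_\tau = b^\tau k_0$, the sum equals $(1/\sqrt{k_{s-1}})\sum_{j=0}^{s-1}(q\sqrt{b})^j$, a geometric series with ratio $q\sqrt{b} \leq b^{-p/2} \ll 1$, hence bounded by $1/\sqrt{k_{s-1}}$ times a factor of $1+O(b^{-p/2})$. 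A direct computation shows $k_{s-1} \geq (N_s - sm)(b-1)/b$, and the threshold $N_s \geq b^{p^2+6p}\conditionNumber$ makes the $sm$ contribution negligible in that lower bound; after absorbing the resulting $\sqrt{b/(b-1)}$ and $(1+O(\cdot))$ factors, the variance contribution becomes at most $(1+4/b)\sqrt{\alpha}\,\sigma/\sqrt{N_s}$, which is the first summand in the stated bound.

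For the initial-error contribution, $q^s\sqrt{P(\tilde w_0) - P(w_*)}$ must be at most $\sqrt{(P(\tilde w_0) - P(w_*))/(N_s/(\alpha\conditionNumber))^p}$. Since $N_s \leq O(k_{s-1}) = O(\alpha\conditionNumber\,b^{s+p})$, we have $(N_s/(\alpha\conditionNumber))^{p/2} \leq O(b^{p(s+p)/2})$; combined with $q \leq b^{-(p+1)/2}$ this requires $s(p+1) \geq p(s+p) + O(1)$, i.e.\ $s \geq p^2 + O(p)$, which is ensured by the hypothesis $s > p^2+6p$ implied by $N_s \geq b^{p^2+6p}\conditionNumber$. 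Squaring the summed bound $\sqrt{\E[\ldots]} \leq A + B$ then yields exactly the claimed $(A+B)^2$ form. The main obstacle lies upstream, in establishing the per-stage contraction of Theorem~\ref{thm:alphaconverge} for a sample-based (rather than exact) stable gradient; given that theorem, the corollary itself is a careful bookkeeping exercise showing that the interlocking parameter choices $(\eta, m, k_0, b, p)$ and sample threshold $b^{p^2+6p}\conditionNumber$ deliver both the $(1+4/b)$ variance constant and the super-polynomial $(N_s/(\alpha\conditionNumber))^{-p}$ decay of the initial-error term simultaneously.
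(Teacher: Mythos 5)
Your proposal matches the paper's proof: both take square roots of the per-stage bound from Theorem~\ref{thm:alphaconverge} to get a linear recursion $E_{s}\leq b^{-(p+1)/2}E_{s-1}+O(\sqrt{\alpha}\,\sigma/\sqrt{k_{s-1}})$, resolve it (you by unrolling a geometric sum, the paper by an equivalent induction), convert $k_{s-1}$ and $b^{s(p+1)}$ into $N_s$ using the geometric schedule and the threshold $s>p^2+6p$, and square. The bookkeeping you describe for the $(1+4/b)$ constant and the $(N_s/(\alpha\conditionNumber))^{-p}$ exponent is exactly what the paper carries out.
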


When $\alpha=1$ (such as for least squares regression), the above
bound achieves the ERM rate of $\sigma^2/N$ (up to a constant factor,
which can be driven to one, as discussed later).  Furthermore, under
self-concordance, we can drive the competitive ratio
\eqnref{eq:competition-ratio} down from $\alpha$ to arbitrarily near
to $1$. The following is a corollary of
Theorem~\ref{thm:selfconcordanceconverge} (also see the Appendix):

\begin{corollary}[Convergence under self-concordance]
\label{corollary:selfconcordant}
Suppose Assumptions~\ref{ass:main} and
\ref{ass:selfconcordance} hold.
Consider $\tilde w_0\in\R^d$.
For $p\geq2$ and $\base\geq 3$, set
$\eta=\frac{1}{20 \base^{p+1}}$,
$m=\frac{20 \base^{p+1}\conditionNumber}{\eta}$,
$k_0 =\max\{400\conditionNumber^2\base^{2p+3}, 10 \kurtosis \} =\max\{b m
\conditionNumber, 10 \kurtosis \} $,
and $k_s = \base k_{s-1}$.
Denote $N_s \defeq \sum_{\tau=0}^{s-1} (k_s+m)$ (an upper bound on the
number of samples drawn up to the end of stage $s$).
Let $\widehat w_{N_s}$ be the parameter returned at iteration $s$ by
Algorithm~\ref{alg:streaming_svrg}.
Then:
\begin{eqnarray*}
\EpError{\widehat w_{N_s}}
&\leq
\left(
\left(1+\frac{5}{b}\right) \frac{\sigma}{\sqrt{N_s}} +
\left(2+\frac{5}{b}\right)\frac{\sqrt{\conditionNumber}\sigma}{\sqrt{N_s}} \min\bigset{1, \left(\frac{N_s}{2(M\sigma+1)^2k_0}\right)^{-p/2}} +
\sqrt{\frac{P(\tilde  w_0)-P(w_*)}{\left( \frac{N_s}{2k_0} \right)^{p+1} }}
\right)^2
\end{eqnarray*}
\end{corollary}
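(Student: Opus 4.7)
My plan is to derive the corollary from Theorem~\ref{thm:selfconcordanceconverge} by iterating its per-stage bound with the given parameter choices. Under Assumptions~\ref{ass:main} and~\ref{ass:selfconcordance} I expect the theorem to yield a one-stage recurrence of the form
\[
\sqrt{\EpError{\tilde w_{s+1}}} \;\leq\; \rho\,\sqrt{\EpError{\tilde w_{s}}} \;+\; c_{1}\,\frac{\sigma}{\sqrt{k_s}} \;+\; c_{2}\,\frac{\sqrt{\conditionNumber}\,\sigma}{\sqrt{k_s}}\,\chi_s,
\]
where $\rho$ is a per-stage geometric contraction set by $\eta$ and $m$, the first additive term is the usual mini-batch gradient variance contribution, and the last term (weighted by some $\chi_s \in [0,1]$) captures the self-concordance correction that is only active while the iterate lies outside the quadratic region around $w_*$. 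The kurtosis condition $k_0 \geq 10\kurtosis$ is precisely what lets this variance piece be controlled in expectation (rather than only with high probability).

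With $\eta = 1/(20 \base^{p+1})$ and $m = 20 \base^{p+1}\conditionNumber/\eta$, I expect $\rho \lesssim \base^{-(p+1)}$. The condition $k_0 \geq \base m \conditionNumber$ forces $k_s + m \leq (1 + 1/\base) k_s$, so that with $k_s = \base^{s}k_0$ one gets the sample-count sandwich $N_s/2 \leq k_s \leq N_s$ (up to $1+O(1/\base)$ factors, using $\base \geq 3$); hence any $\sigma/\sqrt{k_s}$ converts to $\sigma/\sqrt{N_s}$ with a small multiplicative overhead. Unrolling the recurrence, the geometric sum $\sum_{\tau=0}^{s-1} \rho^{s-1-\tau}\sigma/\sqrt{k_\tau}$ is dominated by its final term because $\rho\sqrt{\base} < 1$, producing the leading $(1+5/\base)\sigma/\sqrt{N_s}$ contribution. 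For the initial-error term, $\rho^s \leq \base^{-(p+1)s}$ together with $\base^{s} \geq N_s/(2 k_0)$ yields exactly the $\sqrt{(P(\tilde w_0) - P(w_*))/(N_s/2k_0)^{p+1}}$ piece, which is super-polynomial in $N_s$ for any chosen $p \geq 2$.

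The middle $\sqrt{\conditionNumber}\sigma/\sqrt{N_s}\cdot\min\{1,(N_s/(2(M\sigma+1)^2 k_0))^{-p/2}\}$ term arises from iterating the self-concordance correction. While $N_s$ is too small for the iterate to have entered the self-concordant ball around $w_*$ (roughly, $N_s \lesssim (M\sigma+1)^2 k_0$), the factor $\chi_s$ stays at $1$ and the full $\sqrt{\conditionNumber}$-penalized rate is paid; once the iterate enters the local regime, $\chi_s$ itself begins to contract at rate $\rho$, which unrolls to the polynomial decay $(N_s/\text{threshold})^{-p/2}$. The main obstacle I anticipate is exactly this bookkeeping: pinpointing the stage at which the iterate crosses into the self-concordant region, and verifying that beyond this crossing the correction inherits super-polynomial decay while before it the bound still respects the coarser $\sqrt{\conditionNumber}$-penalized ERM rate. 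A secondary care point is preserving the tight constants $1 + 5/\base$ and $2 + 5/\base$ when summing the geometric contributions, which requires comparing each partial sum to its last term rather than using crude bounds; here the choice $\base \geq 3$ is just large enough to make the tail of each geometric sum contribute at most an additional $O(1/\base)$ relative to the dominant term.
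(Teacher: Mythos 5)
Your plan matches the paper's proof: the paper iterates exactly the two per-stage recurrences you describe --- the crude one from Theorem~\ref{thm:alphaconverge} (with coefficient $\sqrt{3\conditionNumber+1/\base}$ on $\sigma/\sqrt{k_{s-1}}$) in early stages, and the refined one from Theorem~\ref{thm:selfconcordanceconverge} (coefficient $1+1/\base$) thereafter --- then unrolls by induction and converts $k_{s-1}$ to $N_s$ via $N_s\leq 2k_{s-1}$ and $\base^{s-1}\geq N_s/(2k_0)$. One clarification on the ``main obstacle'' you flag: the switch-over stage $t_0$ is not the (random) time the iterate enters a self-concordant neighborhood of $w_*$, but simply the deterministic first stage at which $k_{t_0}\geq (M\sigma+1)^2 k_0$, i.e.\ when the batch size is large enough that the $M\sigma\conditionNumber/\sqrt{k}$ terms in Theorem~\ref{thm:selfconcordanceconverge} are absorbed into the contraction, so no trajectory-tracking argument is needed and your $\chi_s$ is precisely the paper's $\min\{1,(\sqrt{\base})^{-p(s-t_0)}\}$; note also that the per-stage contraction of $\sqrt{\E[P(\tilde w_s)-P(w_*)]}$ is $(\sqrt{\base})^{-(p+1)}$, not $\base^{-(p+1)}$, which is exactly what is needed to produce the $(N_s/2k_0)^{-(p+1)}$ decay of the initial-error term inside the square.
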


\begin{remark}[Implementation and parallelization]
\label{rem:parallel}
\normalfont
Note that Algorithm~\ref{alg:streaming_svrg} is simple to implement
and requires little space. In each iteration, the space usage is linear in the size of
a single sample (along with needing to count to $k_s$ and
$m$). Furthermore, the algorithm is easily parallelizable
once we have run enough stages. In both
Theorem~\ref{thm:alphaconverge} and
Theorem~\ref{thm:selfconcordanceconverge} as $s$ increases $k_s$ grows
geometrically, whereas $m$ remains constant. Hence,
the majority of the computation time is spent averaging the
gradient, \ie \eqref{eq:alg:sample_gradient}, which is easily
parallelizable.
\end{remark}

Note that the constants in the parameter settings for the Algorithm
have not been optimized.  Furthermore, we have not attempted to fully
optimize the time it takes the algorithm to enter the second phase (in
which self-concordance is relevant), and we conjecture that the
algorithm in fact enjoys even better dependencies. Our emphasis is on
an analysis that is flexible in that it allows for a variety of
assumptions in driving the competitive ratio to $1$ (as is done in the
case of logistic regression in Section~\ref{sec:applications}, where
we use a slight variant of self-concordance).

Before providing statistical rates for the ERM, let us remark that the
above achieves super-polynomial convergence rates and that the
competitive ratio can be driven to $1$ (recall that $\sigma^2/N$ is
the rate of the ERM).

\begin{remark}[Linear convergence and super-polynomial convergence]
\normalfont
Suppose the ratio $\gamma$ between $P(\tilde{w}_0) - P(w_*)$ and
$\sigma^2$ is known approximately (within a multiplicative factor),
we can let $k_s = k_0$ for $\log_\base \gamma$ number of iterations,
then start increasing $k_s = \base k_{s-1}$. This way in the first
$\log_\base \gamma$ iterations $\E[P(\hat{w}_{N_s})-P(w_*)]$ is
decreasing geometrically.
Furthermore, even without knowing the ratio $\gamma$, we can can
obtain a super-polynomial rate of convergence by setting the
parameters as we specify in the next remark. (The dependence on the
initial error will then be $2^{-\Omega(\log N/\log\log N)^2}$.)
\end{remark}

\begin{remark}[Driving the ratio to $1$]
\normalfont
By choosing $b$ sufficiently large, the competitive ratio
\eqnref{eq:competition-ratio} can be made close to $1$ (on every
problem). Furthermore, we can ensure this constant goes to $1$ by
altering the parameter choices adaptively: let $k_s = 4^s(s!)  k_0$,
and let $\eta_s = \eta/2^s$, $m_s = m \cdot 4^s$. Intuitively, $k$
grows so fast that $\lim_{s\to \infty} k_s/N_s = 1$; $\eta_s$ and
$m_s$ are also changing fast enough so the initial error vanishes very
quickly.
\end{remark}

\subsection{Competing with the ERM}
\label{sec:summary:erm}

Now we provide a finite-sample characterization of the rate of
convergence of the ERM under regularity conditions.
This essentially gives the numerator of \eqref{eq:competition-ratio},
allowing us to compare the rate of the ERM against the rate achieved
by Streaming SVRG.
We provide the more general result in Theorem~\ref{theorem:ERM}; this
section focuses on a corollary.

In the following, we constrain the domain $\mathcal{S}$; so the ERM,
as defined in~\eqref{eq:ERM_definition}, is taken over this restricted
set. Further discussion appears in Theorem~\ref{theorem:ERM} and the
comments thereafter.

\begin{corollary}[of Theorem~\ref{theorem:ERM}]
\label{corollary:ERM}
Suppose $\psi_1,\psi_2,\ldots,\psi_N$ are an independently drawn
sample from $\mathcal{D}$. Assume the following regularity conditions
hold; see Theorem~\ref{theorem:ERM} for weaker conditions.
\begin{enumerate}
\item $\mathcal{S}$ is compact.
\item $\psi$ is convex  (with probability one).
\item $w_*$ is an interior point of $\mathcal{S}$, and $\nabla^2 P(w_*)$ exists and is positive definite.
\item (Smoothness) Assume the first, second, and third
derivatives of $\psi$ exist and are uniformly bounded on $\mathcal{S}$.
\end{enumerate}

Then, for the ERM $\widehat w^{\textrm{ERM}}_{N}$
(as defined in~\eqref{eq:ERM_definition}), we have
\begin{align*}
\lim_{N \rightarrow \infty}\frac{\E[ P(\widehat w^{\textrm{ERM}}_{N})
- P(w_*)]} {\sigma^2/N} = 1
\end{align*}

In particular, the following lower and upper bounds hold. With problem
dependent constants $C_0$ and $C_1$ (polynomial in the
relevant quantities, as specified in Theorem~\ref{theorem:ERM}), we have
for all $p\geq 2$, if $N$ satisfies $\frac{p\log
d N}{N}\leq C_0 $, then
\begin{align*}
\left(1- C_1 \sqrt{\frac{p\log
d N}{N}}\right) \frac{\sigma^2}{N}
&\leq
\E[ P(\widehat w^{\textrm{ERM}}_{N}) - P(w_*)] \\
&\leq
\left(1+C_1 \sqrt{\frac{p\log
d N}{N}}\right) \frac{\sigma^2}{N} + \frac{\max_{w\in
\mathcal{S}} \left(P(w)-P(w_*)\right)}{N^p}
\end{align*}
\end{corollary}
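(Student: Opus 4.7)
The plan is to analyze the ERM through its first-order optimality condition combined with a quadratic Taylor expansion of $P$ around $w_*$, and then control every residual via concentration inequalities so that the leading term is exactly $\sigma^2/N$ up to a $(1 \pm C_1\sqrt{p\log(dN)/N})$ multiplicative correction. Concretely, since $\nabla P(w_*) = 0$ and (by assumption 4) the third derivatives of $\psi$ (and hence of $P$) are uniformly bounded on $\mathcal{S}$ by some $M$, Taylor's theorem gives
\begin{equation*}
P(w) - P(w_*) = \tfrac{1}{2}\|w - w_*\|_{H_*}^2 + R_3(w), \qquad H_* \defeq \nabla^2 P(w_*), \qquad |R_3(w)| \le M\|w-w_*\|_2^3 .
\end{equation*}
The first-order optimality $\nabla \widehat P_N(\widehat w_N) = 0$ combined with a mean-value expansion yields $\widehat w_N - w_* = -\widehat H_N^{-1}\widehat g_N$, where $\widehat g_N \defeq \nabla \widehat P_N(w_*) = \frac{1}{N}\sum_i \nabla \psi_i(w_*)$ has mean zero and $\widehat H_N$ is an average of sample Hessians on the segment $[w_*,\widehat w_N]$.

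The next step is concentration. Define a high-probability event $\mathcal{E}$ on which: vector Bernstein gives $\|\widehat g_N\|_2 = O(\sqrt{p\log(dN)/N})$; matrix Bernstein gives $\|\widehat H_N(w_*) - H_*\|_{\mathrm{op}} = O(\sqrt{p\log(dN)/N})$; and the third-derivative bound upgrades this to the same control of $\widehat H_N$ evaluated on the segment since $\|\widehat w_N - w_*\| = O(\sqrt{p\log(dN)/N})$ via a fixed-point/strong-convexity argument (using the positive definiteness of $H_*$ from assumption~3). The failure probability can be made at most $1/N^{p+1}$ by choosing constants in the Bernstein applications proportional to $p\log(dN)$. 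On $\mathcal{E}$ we have $\widehat H_N \succeq \tfrac{1}{2}H_*$, which validates the linearization.

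Now I would compute the main term. Decompose
\begin{equation*}
\tfrac{1}{2}\|\widehat w_N - w_*\|_{H_*}^2 = \tfrac{1}{2}\widehat g_N^\top H_*^{-1}\widehat g_N + \tfrac{1}{2}\widehat g_N^\top\bigl(\widehat H_N^{-1} H_* \widehat H_N^{-1} - H_*^{-1}\bigr)\widehat g_N .
\end{equation*}
Since $\mathbb{E}\bigl[\widehat g_N \widehat g_N^\top\bigr] = \tfrac{1}{N}\Sigma_*$ with $\Sigma_* = \mathbb{E}[\nabla\psi(w_*)\nabla\psi(w_*)^\top]$, the first term has expectation $\tfrac{1}{2N}\mathrm{tr}(H_*^{-1}\Sigma_*) = \sigma^2/N$. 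The second term is pointwise bounded on $\mathcal{E}$ by $\|\widehat H_N - H_*\|_{\mathrm{op}}\cdot \|\widehat g_N\|_{H_*^{-1}}^2 \cdot O(1) = O(\sqrt{p\log(dN)/N}) \cdot \sigma^2/N$, after taking expectation and using the Bernstein bound together with $\mathbb{E}[\widehat g_N^\top H_*^{-1}\widehat g_N] = 2\sigma^2/N$. The Taylor remainder $R_3(\widehat w_N)$ contributes $O((p\log(dN)/N)^{3/2})$, which is lower order and absorbed into $C_1\sqrt{p\log(dN)/N}\cdot\sigma^2/N$.

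Finally, on $\mathcal{E}^c$, compactness of $\mathcal{S}$ (assumption 1) gives $P(\widehat w_N) - P(w_*) \le \max_{w\in\mathcal{S}}(P(w)-P(w_*))$, so this tail contributes at most $\max_w(P(w)-P(w_*))/N^{p+1}$, absorbed into the stated $1/N^p$ term. Putting upper and lower inequalities on the two-sided quadratic expansion yields both bounds; the asymptotic statement $\mathbb{E}[P(\widehat w_N) - P(w_*)]/(\sigma^2/N)\to 1$ follows by letting $N\to\infty$ with $p$ fixed. The main technical obstacle is obtaining the sharp $\sqrt{p\log(dN)/N}$ constant rather than some larger power: $\widehat g_N$ and $\widehat H_N$ depend on the same sample, so a naive Cauchy--Schwarz on the cross term would lose. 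The cleanest route is to first establish the tight quadratic-form concentration for $\widehat g_N^\top H_*^{-1}\widehat g_N$ (a standard Hanson--Wright style bound using the kurtosis-type moment implied by the bounded third derivatives and compactness), and then use operator-norm control of $\widehat H_N^{-1} H_* \widehat H_N^{-1} - H_*^{-1}$ as a multiplicative perturbation, so that the combined error is the product of two small quantities rather than an additive loss.
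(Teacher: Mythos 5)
Your proposal follows essentially the same route as the paper's proof of Theorem~\ref{theorem:ERM}: first-order optimality plus a mean-value expansion gives $\widehat w_N - w_* = -\widehat H_N^{-1}\widehat g_N$, vector and matrix Bernstein bounds with deviation parameter proportional to $p\log(dN)$ define the good event (including the localization of $\widehat w_N$ in a small ball), the leading term $\tfrac{1}{2}\E\|\widehat g_N\|^2_{(\nabla^2 P(w_*))^{-1}} = \sigma^2/N$ is computed exactly, the Hessian error enters as a multiplicative $1\pm\eps_N$ perturbation, and compactness of $\mathcal{S}$ absorbs the failure event into the $1/N^p$ term. The only cosmetic difference is that the paper writes $P(\widehat w_N)-P(w_*)$ exactly as $\tfrac{1}{2}\|\widehat w_N - w_*\|^2_{\nabla^2 P(\tilde z_N)}$ with an intermediate point and bounds the eigenvalues of the resulting matrices deterministically on the good event, so no Hanson--Wright-type quadratic-form concentration is needed (only the exact expectation of the quadratic form) and your cubic remainder $R_3$ never appears; note that if you do keep $R_3$, it must be controlled the same multiplicative way (one factor of $\|\widehat g_N\|$ bounded pointwise on the good event, two kept in expectation), since the crude additive bound $O((p\log(dN)/N)^{3/2})$ is not literally dominated by $\sqrt{p\log(dN)/N}\cdot\sigma^2/N$ for large $N$.
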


\section{Applications: one pass learning and generalization}
\label{sec:applications}

This section provides applications to a few standard statistical
models, in part providing a benchmark for comparison on concrete
problems.
For the widely studied problem of least-squares regression, we
also instantiate upper and lower bounds for the ERM.
The applications in this section can be extended to include
generalized linear models, some $M$-estimation problems, and other loss functions (\eg the Huber
loss).

\subsection{Linear least-squares regression} \label{subsec:regression}
In linear regression, the goal is to minimize the
(possibly $\ell_2$-regularized) squared loss $\psi_{X,Y}(w) =\allowbreak
(Y-w^\T X)^2 + \lambda \norm{w}_2^2$ for a random data point $(X, Y)
\in \R^d \by \R$.
The objective \eqnref{eq:P} is
\begin{align}
P(w) &= \E_{X,Y \samp \mD}\bigsqbra{ (Y-w^\top X)^2 } + \lambda
\norm{w}_2^2 \, .
\end{align}

\subsubsection{Upper bound for the algorithm}
Using that $\alpha=1$, the following corollary illustrates that
Algorithm~\ref{alg:streaming_svrg} achieves the rate of the ERM,
\begin{corollary}[Least-squares performance of streaming SVRG]
\label{cor:lls}
Suppose that
$\|X\|^2\leq L$. Define $\mu =
\lambda+\lambda_{\min}(\Sigma)$.  Using the parameter settings of
Theorem~\ref{corollary:main} and supposing that $N\geq
b^{p^2+6p}\conditionNumber$,
\begin{align*}
\EpError{\tilde w_{N}}
&\leq
\left(\left(1+\frac{4}{b}\right) \frac{\sigma}{\sqrt{N}} +
\sqrt{\frac{P(\tilde  w_0)-P(w_*)}{\left( \frac{N}{\conditionNumber}  \right)^p} }\right)^2
\end{align*}
\end{corollary}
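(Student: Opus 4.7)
The plan is to observe that linear least-squares regression falls directly under the hypotheses of Corollary~\ref{corollary:main} with the bounded-Hessian parameter $\alpha = 1$, so the stated bound is obtained by specializing that corollary. There is essentially no new analysis required, only structural verification.

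First, I would verify Assumption~\ref{ass:main}. Differentiating $\psi_{X,Y}(w) = (Y - w^\top X)^2 + \lambda \|w\|_2^2$ gives $\nabla^2 \psi(w) = 2 X X^\top + 2\lambda I$, which is deterministic (given $X$) and independent of $w$. Since $\|X\|^2 \le L$, each draw $\psi$ is $L$-smooth in the sense of \eqnref{equation:smoothness} (up to the constants absorbed into $L$, and noting $\lambda \le L$). Taking expectations yields
\begin{equation*}
\nabla^2 P(w) \;=\; 2\Sigma + 2\lambda I, \qquad \Sigma \defeq \E[X X^\top],
\end{equation*}
and hence $\nabla^2 P \succeq \mu\, I$ for $\mu = \lambda + \lambda_{\min}(\Sigma)$, so $P$ is $\mu$-strongly convex. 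The condition number $\conditionNumber = L/\mu$ is as defined.

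Second, I would establish that $\alpha = 1$. The distinctive structural feature of least-squares is that $\nabla^2 P(w)$ is \emph{independent of $w$}: it equals $2(\Sigma + \lambda I)$ at every point. Consequently $\nabla^2 P(w_*) = \nabla^2 P(w)$ for all $w \in \mathcal{S}$, and Definition~\ref{def:alpha} is satisfied with $\alpha = 1$ (the smallest admissible value).

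Finally, with Assumption~\ref{ass:main} verified and $\alpha = 1$, I would invoke Corollary~\ref{corollary:main}. The prescribed choices $\eta = 1/(20 b^{p+1})$, $m = 20 b^{p+1}\conditionNumber/\eta$, $k_0 = 20\alpha\conditionNumber b^{p+1}$, and $k_s = b\, k_{s-1}$ specialize to the stated parameter settings upon substituting $\alpha = 1$, and the sample-size threshold $N_s \ge b^{p^2 + 6p}\conditionNumber$ is the hypothesis assumed in the corollary. The bound from Corollary~\ref{corollary:main} becomes, after setting $\alpha = 1$ (so that $\sqrt{\alpha} = 1$ and $\alpha\conditionNumber = \conditionNumber$),
\begin{equation*}
\EpError{\tilde w_N} \;\le\; \left( \left(1 + \tfrac{4}{b}\right) \frac{\sigma}{\sqrt{N}} + \sqrt{\frac{P(\tilde w_0) - P(w_*)}{(N/\conditionNumber)^p}} \right)^2,
\end{equation*}
which is the claim.

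There is no real obstacle in this argument: it is a direct specialization. The only step with any content is the observation that the Hessian is constant in $w$, which forces $\alpha = 1$; this is precisely the structural property of least-squares that lets Streaming SVRG attain the ERM rate without invoking self-concordance (Assumption~\ref{ass:selfconcordance}) and without losing a factor of $\alpha$ in the leading variance term.
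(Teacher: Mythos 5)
Your proposal is correct and is essentially the paper's own argument: the paper presents Corollary~\ref{cor:lls} as a direct instantiation of Corollary~\ref{corollary:main} ``using that $\alpha=1$,'' which holds precisely because the least-squares Hessian $\nabla^2 P(w) = 2(\Sigma+\lambda \mathrm{I})$ is constant in $w$, exactly as you observe. Your verification of Assumption~\ref{ass:main} (with the same mild looseness in constants that the paper itself tolerates in relating $\|X\|^2\leq L$ to the smoothness parameter) and the substitution $\alpha=1$ into the bound match the intended proof.
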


\begin{remark}[When $N\leq \conditionNumber$]
\normalfont
If the sample size is less than $\conditionNumber$ and
$\lambda=0$, there exist distributions on $X$ in which the ERM is
not unique (as the sample matrix $\frac{1}{N}\sum X_i
X_i^\top$ will not be invertible, with reasonable probability, on
these distributions by construction).
\end{remark}

\begin{remark}[When do the streaming SVRG bounds become meaningful?]
\normalfont
Algorithm~\ref{alg:streaming_svrg} is competitive with the
performance of the ERM when the sample size $N$ is slightly larger
than a constant times $\conditionNumber$. In
particular, as the sample $N$ size grows larger than $\conditionNumber$,
then the initial error is decreased
at an arbitrary polynomial rate in $N/\conditionNumber$.
\end{remark}

Let us consider a few special cases. First, consider the unregularized
setting where $\lambda=0$. Assume also that the least-squares problem
is well-specified. That is, $Y=w_*^\top X +\eta$ where $\E[\eta]=0$
and $\E[\eta^2]=\sigma^2_{\textrm{noise}}$. Define $\Sigma
= \E[XX^\top]$.
Here, we have
\begin{align}
\sigma^2 = \E \|\eta^2 X\|^2_{\Sigma^{-1}} = d \sigma^2_{\textrm{noise}}.
\end{align}
In other words, Corollary~\ref{cor:lls} recovers the classical rate in
this case.

In the mis-specified case~-- where we do not assume the aforementioned
model is correct (\ie $\E[Y|X]$ may not equal $w_*^\top X$ )~-- define
$Y_*(X) =w_*^\top X$, and we have
\begin{align}
\sigma^2
&=
\bigE {} { (Y-Y_*(X))^2\| X\|^2_{\Sigma^{-1}} } \label{sigma:least_squares} \\
&=
\bigE {} { (Y-\E[Y \mid X])^2\| X\|^2_{\Sigma^{-1}} } + \bigE {} { (\E[Y \mid X]-Y_*(X))^2\| X\|^2_{\Sigma^{-1}} } \\
&=
\bigE {} { \var(Y \mid X)\| X\|^2_{\Sigma^{-1}} } + \bigE {} { \bias(X)^2
\| X\|^2_{\Sigma^{-1}} }
\end{align}
where the last equality exposes the effects of the approximation error:
\begin{align}
\var(Y \mid X) &\defeq \E[(Y-\E[Y \mid X])^2 \mid X]
&\text{ and }&&
\bias(X) &\defeq \E[Y \mid X]-Y_*(X).
\label{eq:lls-bias-var}
\end{align}

In the regularized setting (\aka ridge regression)~-- also not
necessarily well-specified~-- we have
\begin{align}
\sigma^2 &= \E[ \|(Y-Y_*(X)) X+\lambda w_*\|^2_{(\Sigma+\lambda \mathrm{I})^{-1}} ]
\label{eq:sigma-ridge}
\end{align}

\subsubsection{Statistical upper and lower bounds}
For comparison, the following corollary (of Theorem~\ref{theorem:ERM})
provides lower and upper bounds for the statistical rate of the ERM.
\begin{corollary}[Least-squares ERM bounds]
Suppose that $\|X\|^2_{(\Sigma+\lambda \mathrm{I})^{-1}}\leq
\widetilde \conditionNumber$ and the dimension is $d$ (in the infinite dimensional setting, we
may take $d$ to be the intrinsic dimension, as per
Remark~\ref{remark:dimension}).  Let $c$ be an appropriately chosen
universal constant.  For all $p>0$, if
$\frac{p \log N}{N}\leq \frac{c}{\widetilde\conditionNumber}$, then
\begin{align*}
\EpError{\widehat w^{\textrm{ERM}}_{N}}
&\geq
\left(1-c \sqrt{\frac{\widetilde\conditionNumber p\log  d N}{N}}\right) \frac{\sigma^2}{N}
- \frac{\sqrt{\E\left[Z^4\right]}}{N^{p/2}}
\end{align*}
where $Z=\left\|\nabla \psi(w_*)\right\|_{(\nabla^2
P(w_*))^{-1}}=\|(Y-w_*^\top X) X+\lambda w_*\|_{(\Sigma+\lambda
I)^{-1}}$.

For an upper bound, we have
two cases:
\begin{itemize}
\item (Unregularized case) Suppose $\lambda=0$. Assume that we constrain the ERM to lie
in some compact set $\mathcal{S}$ (and supposing $w_*\in\mathcal{S}$).  Then for all $p>0$, if
$\frac{p \log N}{N}\leq \frac{c}{\widetilde \conditionNumber}$, we have
\begin{align*}
\E[ P(\widehat w^{\textrm{ERM}}_{N}) - P(w_*)]
\leq
\left(1+c \sqrt{\frac{\widetilde \conditionNumber p\log  d N}{N}}\right) \frac{\sigma^2}{N}
+ \frac{\max_{w\in \mathcal{S}} \left(P(w)-P(w_*)\right)}{N^p}
\end{align*}
\item (Regularized case) Suppose $\lambda>0$. Then for all $p>0$, if
$\frac{p \log N}{N}\leq \frac{c}{\widetilde \conditionNumber}$, we have
\begin{align*}
\E[ P(\widehat w^{\textrm{ERM}}_{N}) - P(w_*)]
\leq
\left(1+c \sqrt{\frac{\widetilde \conditionNumber p\log  d N}{N}}\right) \frac{\sigma^2}{N}
+ \frac{\frac{\lambda_{\max}(\Sigma+\lambda)}{\lambda}\sigma^2}{N^p}
\end{align*}
(this last equation follows from a modification of the argument in
Equation~\eqref{eq:erm_upper}).
\end{itemize}
\end{corollary}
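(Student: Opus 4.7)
The plan is to specialize the general Theorem~\ref{theorem:ERM} to the quadratic loss. First I compute the key problem-dependent quantities. Since $\psi_{X,Y}(w) = (Y - w^\top X)^2 + \lambda\|w\|_2^2$ is quadratic in $w$, the Hessian $\nabla^2 P(w) = 2(\Sigma + \lambda I)$ is constant in $w$, so the self-concordance/Taylor-remainder terms that normally enter Theorem~\ref{theorem:ERM} vanish. The third-moment regularity is trivial, and all remaining curvature information is captured by the local norm: the hypothesis $\|X\|^2_{(\Sigma+\lambda I)^{-1}}\leq \widetilde\conditionNumber$ is exactly the ``effective condition number'' of the problem in the natural metric $\nabla^2 P(w_*) = 2(\Sigma+\lambda I)$. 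In this metric, $\sigma^2$ is exactly $\E\|(Y - w_*^\top X) X + \lambda w_*\|^2_{(\Sigma+\lambda I)^{-1}}$, matching \eqref{eq:sigma-ridge}.

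\textbf{Lower bound.} I would read off the lower bound of Theorem~\ref{theorem:ERM} directly. The delicate multiplicative factor $(1 - c\sqrt{\widetilde\conditionNumber p\log(dN)/N})$ comes from how close the empirical Hessian $\frac{1}{N}\sum_i X_i X_i^\top + \lambda I$ is to $\Sigma + \lambda I$ in the relative (operator) sense, which is a matrix-Bernstein/Chernoff bound parameterized by the almost-sure bound $\widetilde\conditionNumber$ on $\|X\|^2_{(\Sigma+\lambda I)^{-1}}$. The additive $\sqrt{\E[Z^4]}/N^{p/2}$ tail then drops out of Theorem~\ref{theorem:ERM}, where $Z = \|\nabla\psi(w_*)\|_{(\nabla^2 P(w_*))^{-1}} = \|(Y - w_*^\top X)X + \lambda w_*\|_{(\Sigma+\lambda I)^{-1}}$ by the above Hessian identification; no approximation terms appear beyond what is already in Theorem~\ref{theorem:ERM}, because the cubic Taylor remainder is identically zero.

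\textbf{Upper bounds.} For the unregularized case ($\lambda = 0$) the empirical second moment $\frac{1}{N}\sum_i X_i X_i^\top$ may be singular, so the ERM is defined only on a compact $\mathcal{S}$. Theorem~\ref{theorem:ERM} (specifically the reasoning behind \eqref{eq:erm_upper}) gives the leading $(1 + c\sqrt{\widetilde\conditionNumber p\log(dN)/N})\sigma^2/N$ term on the ``good'' Hessian-concentration event, and on the complementary event the loss is controlled by the crude worst-case bound $\max_{w\in\mathcal{S}}(P(w) - P(w_*))$, weighted by the failure probability $N^{-p}$; this yields the first upper bound verbatim. For the regularized case ($\lambda > 0$), the global $\lambda$-strong convexity of both $P$ and its empirical counterpart means the ERM exists without a compactness assumption, and I would modify the bad-event argument to avoid referencing $\mathcal{S}$. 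On the failure event, strong convexity gives $\|\widehat w^{\textrm{ERM}}_N - w_*\|_2^2 \le C\sigma^2/(\lambda N)$ from a norm bound on the empirical gradient at $w_*$, which converts to an excess-risk bound via $P(w) - P(w_*) \le \lambda_{\max}(\Sigma+\lambda I)\|w - w_*\|_2^2$. This produces precisely the factor $\lambda_{\max}(\Sigma+\lambda I)/\lambda$ appearing in the stated residual term.

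\textbf{Main obstacle.} The routine work is the specialization and the matrix concentration; the only genuinely new piece is the regularized upper bound's worst-case contribution. The challenge there is to trade off two metrics cleanly: bounding $\nabla\widehat P_N(w_*)$ in the $(\Sigma+\lambda I)^{-1}$ norm (which matches $\sigma^2$) while applying strong convexity in the Euclidean norm, which introduces the ratio $\lambda_{\max}(\Sigma+\lambda I)/\lambda$. Aligning constants so that the leading $\sigma^2/N$ coefficient remains $1 + c\sqrt{\widetilde\conditionNumber p\log(dN)/N}$ (rather than inheriting a spurious factor from this metric mismatch) is where I expect the bookkeeping to be most delicate; everything else follows by reading off Theorem~\ref{theorem:ERM}.
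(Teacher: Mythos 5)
Your proposal matches the paper's (largely implicit) proof: the corollary is obtained exactly by specializing Theorem~\ref{theorem:ERM} with $L_3=0$ (constant Hessian kills the cubic remainder), $L_2 = O(\widetilde\conditionNumber)$ from the assumed bound on $\|X\|^2_{(\Sigma+\lambda I)^{-1}}$, and, for the regularized case, replacing the compactness-based failure-event bound with one driven by the $2\lambda$-strong convexity of $\widehat P_N$ and the metric conversion via $\lambda_{\max}(\Sigma+\lambda I)$ --- precisely the ``modification of Equation~\eqref{eq:erm_upper}'' the paper alludes to. One caution: your failure-event step quotes $\|\widehat w^{\textrm{ERM}}_N - w_*\|_2^2 \le C\sigma^2/(\lambda N)$ as if it held pointwise on the bad event, whereas it is only an in-expectation bound; to actually extract the $N^{-p}$ factor you must combine the deterministic inequality $\|\widehat w_N - w_*\|_2 \le \frac{1}{2\lambda}\|\nabla\widehat P_N(w_*)\|_2$ with the low probability of the bad event via Cauchy--Schwarz or the $z_0$-truncation device used in the lower-bound part of the proof of Theorem~\ref{theorem:ERM}.
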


\begin{remark}[ERM comparisons]
\normalfont
Interestingly, for the upper bound (when $\lambda=0$), we see no way
to avoid constraining the ERM to lie in some compact set; this
allows us to bound the loss $P$
in the event of some extremely low probability failure (see
Theorem~\ref{theorem:ERM}).
The ERM upper bound has a term comparable to the initial
error of our algorithm. In contrast, the lower bound is for the
usual unconstrained least-squares estimator.
\end{remark}

\subsection{Logistic regression}

In (binary) logistic regression, we have a distribution on $(X,Y) \in
\R^d \by \{0,1\}$. For any $w$, define
\begin{align}
\Pr(Y=y \mid w,X) &\defeq \frac {\exp(yX^\T w)} {1+\exp(X^\T w)}
\end{align}
for $X \in \R^d$ and  $y \in \set{0,1}$. We do not assume the best
fit model $w_*$ is correct. The loss function is taken to be the regularized
log likelihood
$\psi_{X,y}(w) = -\log \Pr(Y \mid w,X) + \lambda \norm{w}_2^2$ and the objective \eqnref{eq:P}
instantiates as the negative expected (regularized) log likelihood $P(w) = \E[ -\log
\Pr(Y \mid w,X)]+ \lambda \norm{w}_2^2$.
Define $Y_*(X) = \Pr(Y=1 \mid w_*,X)$ and $\Sigma_* = \nabla^2P(w_*) =
\E[ Y_*(X) (1-Y_*(X)) X X^\top]+\lambda \mathrm{I}$.  Analogous to the least-squares case, we
can interpret $Y_*(X)$ as the conditional expectation of $Y$ under the
(possibly mis-specified) best fit model. With this notation, $\sigma^2$ is
similar to its instantiation under regularized least-squares (Equation~\eqnref{eq:sigma-ridge}):
\begin{align}
\sigma^2
&= \bigE {} { \frac{1}{2} \|(Y-Y_*(X)) X+\lambda w_*\|^2_{\Sigma_*^{-1}} }
\end{align}

Under this definition of $\sigma^2$, by
Theorem~\ref{corollary:selfconcordant} together with the following
defined quantities, the single-pass estimator of
Algorithm~\ref{alg:streaming_svrg} achieves a rate competitive with
that of the ERM:
\begin{corollary}[Logistic regression performance]
Suppose that $\|X\|^2\leq L$. Define
$\mu = \lambda$ and $\selfConcordance=\alpha\E[\|X\|^3_{(\nabla^2
P(w_*))^{-1}}]$.
Under parameters from Theorem~\ref{corollary:main}, we have
\begin{eqnarray*}
\EpError{\widehat w_{N}}
&\leq
\left(
\left(1+\frac{5}{b}\right) \frac{\sigma}{\sqrt{N}} +
\left(2+\frac{5}{b}\right)\frac{\sqrt{\conditionNumber}\sigma}{\sqrt{N}} \min\bigset{1, \left(\frac{N}{2(M\sigma+1)^2k_0}\right)^{-p/2}} +
\sqrt{\frac{P(\tilde  w_0)-P(w_*)}{\left( \frac{N}{2k_0} \right)^{p+1} }}
\right)^2
\end{eqnarray*}
\end{corollary}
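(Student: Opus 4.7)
The plan is to derive this corollary by verifying that the (possibly mis-specified) regularized logistic loss satisfies Assumptions~\ref{ass:main} and \ref{ass:selfconcordance} (under the noted ``slight variant'' of self-concordance alluded to in the paper's discussion), identifying the $\sigma^2$ and $M$ appearing in those assumptions with the quantities in the corollary statement, and then invoking Theorem~\ref{thm:selfconcordanceconverge} verbatim with the parameter choices from Corollary~\ref{corollary:selfconcordant}.

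First I would check smoothness and strong convexity. A direct differentiation of $\psi_{X,Y}(w) = -\log \Pr(Y \mid w,X) + \lambda \|w\|_2^2$ yields $\nabla^2 \psi_{X,Y}(w) = p(w;X)(1-p(w;X))\,XX^\top + 2\lambda I$, where $p(w;X)=\Pr(Y=1\mid w,X)\in(0,1)$. Since $p(1-p)\le 1/4$ and $\|X\|^2\le L$, each loss is $L$-smooth (up to a harmless constant absorbed into $L$; using $L$ as in the corollary is a valid upper bound modulo this constant). Strong convexity follows from the regularizer: $\nabla^2 P(w)\succeq 2\lambda I\succeq \lambda I$, giving $\mu=\lambda$.

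The main obstacle is the self-concordance step, since the usual logistic link is \emph{not} standard self-concordant. I would therefore use the variant: along any direction $u$ with $f(t)=\psi_{X,Y}(w+tu)$, a calculation gives $|f'''(t)|\le |X^\top u|\cdot f''(t)$ (up to the regularizer term, which is affine in $t$ for its quadratic piece and contributes nothing to the third derivative). This is the ``generalized self-concordance'' bound that replaces the $f''(t)^{3/2}$ factor by $f''(t)$ times a scalar controlled by $|X^\top u|$. Averaging this local bound under $\nabla^2 P(w_*)$ produces the \emph{expected} parameter $M=\alpha\,\E[\|X\|^3_{(\nabla^2 P(w_*))^{-1}}]$: the factor $\alpha$ (the $\alpha$-bounded Hessian constant of Definition~\ref{def:alpha}) propagates because we compare the local Hessian at $w$ to $\nabla^2 P(w_*)$ when applying the self-concordance step in the proof of Theorem~\ref{thm:selfconcordanceconverge}. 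I would verify that Equation~(\ref{eq:self-concordance}) in Assumption~\ref{ass:selfconcordance} (the weaker condition referenced there) is precisely satisfied by this bound with the advertised value of $M$, which is exactly why the paper remarks that it uses a slight variant for logistic regression.

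Finally I would compute $\sigma^2=\tfrac{1}{2}\E[\|(Y-Y_*(X))X+\lambda w_*\|_{\Sigma_*^{-1}}^2]$ from $\nabla \psi_{X,Y}(w_*) = -(Y-Y_*(X))X+2\lambda w_*$ and $\nabla^2 P(w_*)=\Sigma_*$, matching the stated expression for $\sigma^2$. The kurtosis clause of Assumption~\ref{ass:selfconcordance} is the remaining technicality; under any standard tail assumption on $(X,Y)$ (e.g.\ $\|X\|$ bounded and $Y\in\{0,1\}$, which already hold here since $\|X\|^2\le L$), $\|\nabla\psi(w_*)\|$ is almost surely bounded, so the kurtosis condition holds with some finite $\kappa_4$ that is absorbed into the theorem's parameter choice $k_0\ge 10\kappa_4$. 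With all three assumption-hypotheses in hand, the rate bound is obtained by direct invocation of Theorem~\ref{thm:selfconcordanceconverge}, yielding exactly the displayed inequality with $\sigma$, $M$, $\conditionNumber$, and $k_0$ as specified.
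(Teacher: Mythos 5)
Your overall architecture matches the paper's: check smoothness and strong convexity (trivial here), establish the self-concordance-type lower bound with $\selfConcordance=\alpha\E[\|X\|^3_{(\nabla^2 P(w_*))^{-1}}]$, identify $\sigma^2$ from $\nabla\psi(w_*)$ and $\nabla^2 P(w_*)=\Sigma_*$, and then invoke Theorem~\ref{thm:selfconcordanceconverge} with the parameters of Corollary~\ref{corollary:selfconcordant}. The one substantive step, however, is handled differently. The paper's Lemma~\ref{lemma:selfcorcordantbound_logistic} works directly with the \emph{population} objective $P$: a two-level Taylor expansion gives $\nabla^2 P(z_1)=\nabla^2 P(w_*)+\nabla^3 P(z_2)(z_1-w_*)$, the third-derivative term $\E[p(1-p)(1-2p)((w_t-w_*)^\top X)^2 (z_1-w_*)^\top X]$ is bounded by $\|w_t-w_*\|^2_{\nabla^2 P(w_*)}\|z_1-w_*\|_{\nabla^2 P(w_*)}\E[\|X\|^3_{(\nabla^2 P(w_*))^{-1}}]$, and this is combined with the $\alpha$-bounded-Hessian lower bound via the elementary inequality $\max\{1/\alpha,\,1-Z\}\geq (1+\alpha Z)^{-2}$, which is exactly where the factor $\alpha$ in $\selfConcordance$ comes from. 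Your route instead starts from the per-sample generalized self-concordance inequality $|f'''(t)|\leq |X^\top u|\,f''(t)$ and proposes to ``average'' it. Taken literally this does not yield the advertised $\selfConcordance$: integrating a per-sample differential inequality of that form produces factors like $e^{|X^\top u|\,t}$, and taking expectations afterwards does not reduce to $\E[\|X\|^3_{(\nabla^2 P(w_*))^{-1}}]$ (one would at best get a bound in terms of $\sup\|X\|$). The expectation must enter \emph{before} any integration, i.e.\ by bounding $\nabla^3 P$ itself, which is precisely what the paper's Taylor argument does; and your explanation of where the $\alpha$ comes from (``comparing the local Hessian at $w$ to $\nabla^2 P(w_*)$'') is not the actual mechanism, which is the $\max\{1/\alpha,1-Z\}$ trick. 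So the plan is sound and reaches the right quantities, but the self-concordance verification as sketched needs to be replaced by (or reworked into) the population-level third-derivative bound to be airtight. Your remarks on $\sigma^2$ and on the kurtosis condition being satisfied under the boundedness of $\|X\|$ are consistent with the paper's (implicit) treatment.
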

The corollary uses Lemma~\ref{lemma:selfcorcordantbound_logistic}, a
straightforward lemma to
handle self-concordance for logistic regression, which is included for
completeness. See \citet{bach2010self} for techniques for analyzing the self-concordance
of logistic regression.

\section{Analysis of Streaming SVRG}
\label{sec:analysis}

Here we analyze
Algorithm~\ref{alg:streaming_svrg}. Section~\ref{sec:analysis:helper}
provides useful common lemmas. Section~\ref{sec:analysis:progress}
uses these lemmas to characterize the behavior of the
Algorithm~\ref{alg:streaming_svrg}. These are then used to prove convergence in
terms of both $\alpha$-bounded Hessians
(Section~\ref{sec:analysis:first_assumption}) and
$\selfConcordance$-self-concordance
(Section~\ref{sec:analysis:second_assumption}).

\subsection{Common lemmas}
\label{sec:analysis:helper}

Our first lemma is a consequence of smoothness. It is
the same observation made in \citet{johnson2013svrg}.

\begin{lemma} \label{lemma:smoothness}
If $\psi$ is smooth (with probability one), then
\begin{align}
\label{lemma:smoothness:equation:av_smoothness:}
\bigE {\psi \sim \D} { \| \nabla \psi(w) - \nabla \psi(w_*) \|^2 }
&\leq 2 L \bigpar{ \pError{w} }.
\end{align}
\end{lemma}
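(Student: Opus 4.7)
The plan is to reduce the claim to a pointwise inequality on each draw of $\psi$ and then pass to expectation. Specifically, for any convex $L$-smooth function $\psi$ I will establish the co-coercivity bound
\begin{align*}
\|\nabla \psi(w) - \nabla \psi(w_*)\|^2 \leq 2L\bigl(\psi(w) - \psi(w_*) - \nabla \psi(w_*)^\top (w - w_*)\bigr).
\end{align*}
Once this is in hand, taking expectation over $\psi \sim \D$ collapses the right-hand side to $2L(P(w) - P(w_*))$, because $\E[\nabla \psi(w_*)] = \nabla P(w_*) = 0$ by the optimality of $w_*$, which is exactly the claim.

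To obtain the co-coercivity bound I would use the standard short reduction. Introduce the shifted loss $h(y) \defeq \psi(y) - \nabla \psi(w_*)^\top y$, which inherits convexity and $L$-smoothness from $\psi$ and satisfies $\nabla h(w_*) = 0$, so $w_*$ is a global minimizer of $h$. The smoothness upper bound \eqref{equation:smoothness} applied at $w$ and evaluated at the one-step gradient point $w - \nabla h(w)/L$ yields
\begin{align*}
h\bigl(w - \nabla h(w)/L\bigr) \;\leq\; h(w) - \tfrac{1}{2L}\|\nabla h(w)\|^2,
\end{align*}
and bounding the left side below by $h(w_*)$ and unpacking the definition of $h$ rearranges exactly into the desired inequality.

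The one subtle point is that the pointwise step uses convexity of each $\psi$, not merely of $P$; this is standard in SVRG-style analyses and holds across all the applications treated in the paper (least squares, logistic regression, generalized linear models). Given convexity, no concentration or higher-moment reasoning is involved, and the lemma is essentially the same observation made in \citet{johnson2013svrg}, restated for the streaming population objective. There is no substantive obstacle here; the only thing worth flagging is that the conclusion can be taken directly as the ``average smoothness'' hypothesis alluded to in Remark~\ref{remark:av_smoothness}, which is why the downstream analysis in Section~\ref{sec:analysis} need not revisit per-sample smoothness.
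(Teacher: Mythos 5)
Your proof is correct and follows essentially the same route as the paper's: both establish the pointwise co-coercivity bound by applying the one-step descent inequality to a linearly shifted copy of $\psi$ (your $h$ differs from the paper's $g$ only by an additive constant), lower-bound by the value at $w_*$, and then take expectations using $\nabla P(w_*)=0$. Your remark that convexity of each $\psi$ is needed to make $w_*$ a global minimizer of the shifted function is apt, since the paper's step $\min_{w'} g(w') = g(w_*)$ relies on it in exactly the same way.
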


\begin{remark}[A weaker smoothness assumption]
\label{remark:av_smoothness}
Instead of the smoothness Assumption~\ref{ass:main} in
Equation~\ref{equation:smoothness}, it suffices to directly assume
\eqref{equation:av_smoothness}
and still have all results hold as presented. In doing so, we incur an
additional factor of $2$ as in this case we have $\hess P(w_*) \preceq
2 L \identityMatrix $ by Lemma~\ref{lem:hessian_opt_bound}. For
further explanation see Appendix~\ref{sec:weaker}.
\end{remark}

\begin{proof}
For an $L$-smooth function $f: \R^d \rightarrow \R$, we have
\begin{align}
f(w)-\min_{w'} f(w')
&\geq \frac{1}{2L}\|\nabla f(w)\|^2.
\end{align}
To see this, observe that
\begin{align*}
\min_{w'} f(w') &\leq \min_\eta f(w-\eta \nabla f(w)) \\
&\leq \min_\eta \left(
f(w)-\eta \|\nabla f(w)\|^2+\frac{1}{2}\eta^2L\|\nabla f(w)\|^2\right)
=f(w) - \frac{1}{2L}\|\nabla f(w)\|^2
\end{align*}
using the definition of $L$-smoothness.

Now define:
\begin{align}
g(w) &= \psi(w)-\psi(w_*)-(w-w_*)^\top \nabla \psi(w_*).
\end{align}
Since $\psi$ is $L$-smooth (with probability one) $g$ is $L$-smooth (with probability one) and it follows that:
\begin{align*}
\| \nabla \psi(w) - \nabla \psi(w_*) \|^2 & = \|\nabla g(w)\|^2 \\
& \leq 2L (g(w) - \min_{w'} g(w'))\\
& \leq 2L (g(w) - g(w_*)) \\
& = 2L( \psi(w)-\psi(w_*)-(w-w_*)^\top \nabla \psi(w_*) )
\end{align*}
where the second step follows from smoothness.  The proof is completed
by taking expectations and noting that $\E[\nabla \psi(w_*)]=\nabla P(w_*)=0$.
\end{proof}

Our second lemma bounds the variance of $\psi \sim \D$ in the $\hessOptInv$ norm.

\begin{lemma} \label{lemma:variance} Suppose Assumption~\ref{ass:main}
holds. Let $w\in \R^d$ and let $\psi \sim \D$. Then
\begin{align}
\E \left\| \nabla  \psi(w) - \nabla P(w) \right\|^2_{(\nabla^2 P(w_*))^{-1}}
\leq
2 \left( \sqrt{\conditionNumber \left(\pError{w}\right)}
+ \sigma \right)^2 \, .
\end{align}
\end{lemma}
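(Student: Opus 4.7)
The plan is to reduce the bound on the variance of $\nabla\psi(w)$ (in the anisotropic norm $\|\cdot\|_{\hessOptInv}$, where $\hessOptInv = (\nabla^2 P(w_*))^{-1}$) to (a) the $L^2$ triangle inequality around $\nabla\psi(w_*)$ and (b) the smoothness lemma (Lemma~\ref{lemma:smoothness}) already proved.

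\textbf{Step 1 (variance $\leq$ second moment around an anchor).} Since $\nabla P(w) = \bigE{\psi\sim\D}{\nabla\psi(w)}$ is deterministic, for any positive semidefinite matrix $M$ we have $\E\|\nabla\psi(w)-\nabla P(w)\|_M^2 \leq \E\|\nabla\psi(w)\|_M^2$ (variance is dominated by any recentered second moment). I specialize to $M=(\nabla^2 P(w_*))^{-1}$.

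\textbf{Step 2 (split and apply Minkowski).} Write $\nabla\psi(w) = \bigl(\nabla\psi(w)-\nabla\psi(w_*)\bigr) + \nabla\psi(w_*)$ and apply Minkowski's inequality (the triangle inequality in $L^2(\D)$ with respect to $\|\cdot\|_{(\nabla^2 P(w_*))^{-1}}$) to obtain
\[
\sqrt{\E\|\nabla\psi(w)\|^2_{(\nabla^2 P(w_*))^{-1}}} \leq \sqrt{\E\|\nabla\psi(w)-\nabla\psi(w_*)\|^2_{(\nabla^2 P(w_*))^{-1}}} + \sqrt{\E\|\nabla\psi(w_*)\|^2_{(\nabla^2 P(w_*))^{-1}}}.
\]
The second term on the right is exactly $\sigma$ by definition.

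\textbf{Step 3 (bound the first term using smoothness and strong convexity).} Strong convexity gives $\mu\,\identityMatrix\preceq\nabla^2 P(w_*)$, so $(\nabla^2 P(w_*))^{-1}\preceq\frac{1}{\mu}\identityMatrix$ and therefore $\|x\|^2_{(\nabla^2 P(w_*))^{-1}}\leq \frac{1}{\mu}\|x\|^2$ for every $x$. Combined with Lemma~\ref{lemma:smoothness},
\[
\E\|\nabla\psi(w)-\nabla\psi(w_*)\|^2_{(\nabla^2 P(w_*))^{-1}} \leq \frac{1}{\mu}\cdot 2L\bigl(\pError{w}\bigr) = 2\kappa\bigl(\pError{w}\bigr).
\]

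\textbf{Step 4 (combine).} Chaining steps 1--3 yields
\[
\E\|\nabla\psi(w)-\nabla P(w)\|^2_{(\nabla^2 P(w_*))^{-1}} \leq \bigl(\sqrt{2\kappa(\pError{w})} + \sigma\bigr)^2.
\]
To match the stated form, I use $\sqrt{2\kappa(\pError{w})} + \sigma \leq \sqrt{2}\bigl(\sqrt{\kappa(\pError{w})} + \sigma\bigr)$ (since $1\leq\sqrt{2}$), which after squaring gives the desired bound $2\bigl(\sqrt{\kappa(\pError{w})}+\sigma\bigr)^2$.

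There is no real obstacle: each step is a one-line invocation of a standard inequality (variance bound, Minkowski, operator inequality, smoothness lemma). The only thing to be careful about is choosing the Minkowski split around $\nabla\psi(w_*)$ (rather than $0$ or $\nabla P(w)$) so that one summand is controlled by smoothness and the other is exactly $\sigma$; any other centering would either lose the $\sigma^2$ structure or fail to invoke Lemma~\ref{lemma:smoothness} cleanly.
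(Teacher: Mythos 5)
Your proof takes essentially the same route as the paper's: split around $\nabla\psi(w_*)$, apply the $L^2$ triangle inequality, bound one summand via Lemma~\ref{lemma:smoothness} together with $\mu\identityMatrix\preceq\nabla^2 P(w_*)$, and identify the other with the $\sigma$ term (the paper merely performs the split before discarding the centering, and recenters only the first summand). The one discrepancy is in your Step~2: the paper's $\sigma^2$ carries a factor $\tfrac{1}{2}$, i.e.\ $\sigma^2=\tfrac{1}{2}\,\E\|\nabla\psi(w_*)\|^2_{(\nabla^2 P(w_*))^{-1}}$ (which is why the paper's own proof writes $\sqrt{2}\sigma$ for that term and why the regression instantiations of $\sigma^2$ carry a $\tfrac12$), so the second Minkowski term is $\sqrt{2}\sigma$, not ``exactly $\sigma$.'' This does not break anything: with the correct value your chain gives $\bigl(\sqrt{2\kappa(P(w)-P(w_*))}+\sqrt{2}\sigma\bigr)^2=2\bigl(\sqrt{\kappa(P(w)-P(w_*))}+\sigma\bigr)^2$ exactly, so the $1\le\sqrt{2}$ slack you invoke in Step~4 is simply absorbed and the stated bound still follows.
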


\begin{proof}
For random vectors $a$ and $b$, we have
\[
\E \|a+b\|^2 = \E \|a\|^2 + 2\E a\cdot b + \E\|b\|^2
\leq  \E\|a\|^2 + 2\sqrt{\E \|a\|^2 \E \|b\|^2} + \E\|b\|^2
= \left(\sqrt{\E\|a\|^2}+ \sqrt{\E\|b\|^2}\right)^2
\]
Consequently,
\begin{align*}
&
\E \left\| \nabla  \psi(w) - \nabla P(w) \right\|^2_{(\nabla^2 P(w_*))^{-1}}
\\
& \leq
\left( \sqrt{\E \left\| \nabla  \psi(w) -\nabla  \psi(w_*) - \nabla P( w) \right\|^2_{(\nabla^2 P(w_*))^{-1}}}
+\sqrt{ \E \left\| \nabla  \psi(w_*) \right\|^2_{(\nabla^2  P(w_*))^{-1}}} \right)^2 \\
& \leq
\left( \sqrt{\frac{1}{\mu}\E \left\| \nabla  \psi( w) -\nabla  \psi(w_*) - \nabla P( w) \right\|^2}
+\sqrt{2}\sigma \right)^2
\end{align*}
where the last step uses $\mu\mathrm{I} \preceq \nabla^2 P(w_*)$ and
the definition of $\sigma^2$.

Observe that
\[
\E \left[\nabla  \psi( w) - \nabla  \psi(w_*) \right]
= \nabla P(w) - \nabla P(w_{*})
= \nabla P(w) \, .
\]
Applying Lemma~\ref{lemma:smoothness} and for random $a$, that $\E\|a - \E a\|^2 \leq \E\|a\|^2$,
we have
\[
\E \left\| \nabla  \psi(w) -\nabla  \psi(w_*) - \nabla P( w) \right\|^2
\leq
\E \left\| \nabla  \psi(w) -\nabla  \psi(w_*) \right\|^2
\leq 2 L(P(w) - P(w_*)).
\]
Combining and using the definition of $\conditionNumber$ yields the result.
\end{proof}

\subsection{Progress of the algorithm}
\label{sec:analysis:progress}

The following bounds the progress of one step of Algorithm~\ref{alg:streaming_svrg}.

\begin{lemma}\label{lem:contraction_one_step}
Suppose Assumption~\ref{ass:main} holds, $\tilde w_0 \in \R^d$, and
$\tilde \psi_1, \ldots \tilde \psi_k$ are functions from
$\R^d \rightarrow \R$. Suppose $\psi_1, \ldots \psi_m$ are sampled
independently from $\D$. Set $w_0=\tilde w_0$ and for $t\in
\{0,1,\ldots m-1\}$, set:
\[
w_{t+1} \defeq w_t -
\frac{\stepParameter}{L}
\left(\nabla \psi_t(w_t) - \nabla \psi_t(\tilde{w}_0) +
\frac{1}{k} \sum_{i \in [k]} \nabla \tilde{\psi}_i(\tilde{w}_0)
\right)
\]
for some $\eta>0$.
Define:
\[
\Delta \defeq \frac{1}{k} \sum_{i \in [k]} \nabla \tilde{\psi}_i
(\tilde{w}_0) - \nabla P(\tilde{w}_0) \, .
\]
For all $t$ let $\alpha_t$ be such that
\begin{equation}
\label{eq:local_strong_convexity}
P(w_*) \geq P(w_t) + (w_* - w_t)^\top \nabla P(w_t) + \frac{1}{2\alpha_t} \| w_t - w_*\|_{\nabla^2 P(w_*)}^2
\end{equation}
(note that such an $\alpha_t$ exists by Assumption~\ref{ass:main}, as
$\alpha_t \leq \kappa$).

Then for all $t$ we have
\begin{align}
\E L \|w_{t+1}-w_*\|^2
&\leq
\E \left[ L
\|w_t - w_*\|^2 -2 \stepParameter (1 - 4 \stepParameter) \left(P(w_t) -P(w_*)\right)
+ 8 \stepParameter^2 \left(P(\tilde{w}_0) -P(w_*)\right)
\right.
\nonumber
\\
&\enspace
\left.
+ \left(\alpha_t \stepParameter + 2 \stepParameter^2\right)
\left\|\Delta\right\|^2 _{(\nabla^2  P(w_*))^{-1}}
\right]
\label{eq:progres_lemma_recurrence}
\end{align}
\end{lemma}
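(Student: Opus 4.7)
The plan is to take a one-step expansion of $\|w_{t+1}-w_*\|^2$, take expectation conditioned on $(w_t, \tilde w_0, \Delta)$, and bound the resulting three pieces (linear in the gradient direction, quadratic in the update magnitude, and cross terms) so that the $\|w_t-w_*\|^2_{\nabla^2P(w_*)}$ contributions cancel and only $P$-progress and $\|\Delta\|^2_{(\nabla^2 P(w_*))^{-1}}$ remain. Writing $v_t \defeq \nabla\psi_t(w_t) - \nabla\psi_t(\tilde w_0) + \tfrac{1}{k}\sum_i \nabla\tilde\psi_i(\tilde w_0)$, the identity
\[
L\|w_{t+1}-w_*\|^2 = L\|w_t-w_*\|^2 - 2\eta\,(w_t-w_*)^\top v_t + \tfrac{\eta^2}{L}\|v_t\|^2
\]
is the starting point. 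Since $\E_{\psi_t}[v_t] = \nabla P(w_t) + \Delta$, the inner-product term cleanly splits into a ``true gradient'' piece and a ``bias'' piece $-2\eta(w_t-w_*)^\top\Delta$.

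For the true-gradient piece I would apply \eqref{eq:local_strong_convexity} (rearranged) to get $(w_t-w_*)^\top\nabla P(w_t) \geq P(w_t)-P(w_*) + \tfrac{1}{2\alpha_t}\|w_t-w_*\|^2_{\nabla^2P(w_*)}$, which produces the desired $-2\eta(P(w_t)-P(w_*))$ plus a useful negative quadratic in $\|w_t-w_*\|_{\nabla^2 P(w_*)}$. For the bias piece $-2\eta(w_t-w_*)^\top\Delta$ I would pass to the $\nabla^2P(w_*)$ geometry by writing $(w_t-w_*)^\top\Delta = \langle A^{1/2}(w_t-w_*),\,A^{-1/2}\Delta\rangle$ with $A=\nabla^2P(w_*)$, and then AM--GM with weight $\alpha_t$: this yields $\tfrac{\eta}{\alpha_t}\|w_t-w_*\|^2_{\nabla^2P(w_*)} + \eta\alpha_t\|\Delta\|^2_{(\nabla^2 P(w_*))^{-1}}$. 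The two $\tfrac{\eta}{\alpha_t}\|\cdot\|^2_{\nabla^2P(w_*)}$ terms are chosen exactly to cancel, which is the point of the split.

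For the $\E\|v_t\|^2$ piece, decompose $v_t = X + Y + \Delta$ where $X = \nabla\psi_t(w_t)-\nabla\psi_t(w_*)$ and $Y = \nabla P(\tilde w_0) - (\nabla\psi_t(\tilde w_0)-\nabla\psi_t(w_*))$, apply $\|a+b+c\|^2\leq 4\|a\|^2+4\|b\|^2+2\|c\|^2$ (not the symmetric $3$-term bound, since I need a $2$ in front of $\|\Delta\|^2$), and then invoke Lemma~\ref{lemma:smoothness} on $\E\|X\|^2$ and on $\E\|\nabla\psi_t(\tilde w_0) - \nabla\psi_t(w_*)\|^2$; the latter dominates $\E\|Y\|^2$ because $Y$ is the centered version of that random vector (so $\E\|Y\|^2 \leq \E\|\nabla\psi_t(\tilde w_0)-\nabla\psi_t(w_*)\|^2$). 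This gives $\E\|v_t\|^2 \leq 8L(P(w_t)-P(w_*)) + 8L(P(\tilde w_0)-P(w_*)) + 2\|\Delta\|^2$. Multiplying by $\eta^2/L$ produces exactly the $8\eta^2(P(w_t)-P(w_*))$ (which combines with $-2\eta(P(w_t)-P(w_*))$ to form the $-2\eta(1-4\eta)$ coefficient) and the $8\eta^2(P(\tilde w_0)-P(w_*))$ terms demanded by the lemma.

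The one subtle conversion is handling the leftover $\tfrac{2\eta^2}{L}\|\Delta\|^2$ in Euclidean norm: since $\nabla^2P(w_*)\preceq L\,\identityMatrix$ (an immediate consequence of $L$-smoothness, as noted in Remark~\ref{remark:av_smoothness}), we have $\|\Delta\|^2 \leq L\,\|\Delta\|^2_{(\nabla^2P(w_*))^{-1}}$, which upgrades this term to $2\eta^2\|\Delta\|^2_{(\nabla^2P(w_*))^{-1}}$ and, combined with the $\eta\alpha_t$ contribution from the bias step, produces the final coefficient $\alpha_t\eta + 2\eta^2$. Collecting terms and taking the outer expectation (over $\tilde\psi_1,\ldots,\tilde\psi_k$) yields \eqref{eq:progres_lemma_recurrence}. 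I expect the main obstacle to be the bookkeeping around the two norms: getting $\Delta$ into the $(\nabla^2 P(w_*))^{-1}$ norm requires using the AM--GM weight $\alpha_t$ in a way tuned precisely to cancel the $\|w_t-w_*\|^2_{\nabla^2 P(w_*)}$ negative term from local strong convexity, and separately using $\nabla^2 P(w_*) \preceq L\identityMatrix$ for the $\|v_t\|^2$ contribution; everything else is a routine SVRG-style computation.
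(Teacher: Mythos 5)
Your proposal is correct and follows essentially the same route as the paper's proof: the same one-step expansion, the same use of \eqref{eq:local_strong_convexity} paired with an $\alpha_t$-weighted AM--GM on the $\Delta$ cross term so the $\|w_t-w_*\|^2_{\nabla^2 P(w_*)}$ contributions cancel, the same three-way decomposition of the update direction with the asymmetric $4/4/2$ split and centering bound, and the same use of $\nabla^2 P(w_*)\preceq L\,\identityMatrix$ to move $\|\Delta\|^2$ into the $(\nabla^2 P(w_*))^{-1}$ norm. All coefficients check out.
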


\begin{proof}
Letting
\[
g_t(w) = \psi_t(w)- w^\top  \left( \nabla \psi_t(\tilde w_0) - \frac{1}{k}
\sum_{i\in[k]} \nabla \tilde \psi_i(\tilde w_0) \right)
\]
and recalling the definition of $w_{t + 1}$ and $\Delta$ we have
\begin{eqnarray}
\hspace*{-0.2in}\E_{\psi_t\sim\D} \|w_{t+1}-w_*\|^2
& = &
\E_{\psi_t\sim\D}\|w_t - w_*- \frac{\stepParameter}{L} \nabla g_t(w_t)\|^2
\nonumber
\\
& = & \E_{\psi_t\sim\D}\left[ \|w_t - w_*\|^2 -2 \frac{\stepParameter}{L} (w_t - w_*)^\top \nabla g_t(w_t)
+ \frac{\stepParameter^2}{L^2} \|\nabla g_t(w_t)\|^2 \right]
\nonumber
\\
& = & \|w_t - w_*\|^2 -2 \frac{\stepParameter}{L} (w_t - w_*)^\top
(\nabla P(w_t) + \Delta)
+ \frac{\stepParameter^2}{L^2} \E_{\psi_t\sim\D}\|\nabla g_t(w_t)\|^2
\label{eq:alg_lemma_1}
\end{eqnarray}
Now by \eqref{eq:local_strong_convexity} we know that
\begin{equation}
\label{eq:alg_lemma_2}
-2 (w_t-w_*)^\top \nabla P(w_t) \leq
-2(P(w_t) -P(w_*)) - \frac{1}{\alpha_t}\|w_t-w_*\|^2 _{\nabla^2  P(w_*)}
\, .
\end{equation}
Using Cauchy-Schwarz and that $2 a \cdot b \leq a^2 + b^2$ for scalar $a$ and $b$, we have
\begin{equation}
\label{eq:alg_lemma_3}
-2 (w_t - w_*)^\top \Delta
\leq \frac{1}{\alpha_t} \|w_t - w_*\|^2 _{\nabla^2  P(w_*)}
+ \alpha_t \left\| \Delta \right\|_{(\nabla^2  P(w_*))^{-1}}^2 \, .
\end{equation}
Furthermore
\begin{align*}
\E_{\psi_t\sim\D}\|\nabla g_t(w_t)\|^2
& = \E_{\psi_t\sim\D} \left\|\nabla \psi_t(w_t) - \nabla \psi_t(\tilde w_0) + \frac{1}{k} \sum_{i\in[k]} \nabla \tilde \psi_i(\tilde w_0)  \right\|^2
\\
& = \E_{\psi_t\sim\D} \left\|
\left(\nabla \psi_t(w_t) - \nabla \psi_t(w_*)\right)
- \left(\nabla \psi_t(\tilde w_0) - \nabla \psi_t(w_*)- \nabla
P(\tilde w_0)\right)
+ \Delta
\right\|^2\\
& \leq 2\E_{\psi_t\sim\D} \left\|
\left(\nabla \psi_t(w_t) - \nabla \psi_t(w_*)\right)
- \left(\nabla \psi_t(\tilde w_0) - \nabla \psi_t(w_*)- \nabla
P(\tilde w_0)\right)\|^2
+ 2\|\Delta \right\|^2\\
& \leq 4\E_{\psi_t\sim\D} \left\|\nabla \psi_t(w_t) - \nabla \psi_t(w_*)\right\|^2
+ 4\E_{\psi_t\sim\D} \left\|\nabla \psi_t(\tilde w_0) - \nabla
\psi_t(w_*)- \nabla P(\tilde w_0)\right\|^2
+ 2\|\Delta \|^2\\
& \leq
4\E_{\psi_t\sim\D} \left\|\nabla \psi_t(w_t) - \nabla \psi_t(w_*)\right\|^2+
4\E_{\psi_t\sim\D} \left\| \nabla \psi_t(\tilde w_0) - \nabla \psi_t(w_*) \right\|^2
+2\left\| \Delta\right\|^2
\end{align*}
where we have used that $\E[\nabla \psi_t(\tilde w_0) - \nabla
\psi_t(w_*)- \nabla P(\tilde w_0)]=0$ and $\E\|a - \E a\|^2 \leq \E\|a\|^2$.
Applying Lemma~\ref{lemma:smoothness} and using $\hess P(w_*) \preceq  L \identityMatrix$ yields
\begin{equation}
\label{eq:alg_lemma_4}
\E_{\psi_t\sim\D}\|\nabla g_t(w_t)\|^2
\leq
8 L \left(P(w_t)-P(w_*)\right) + 8 L \left(P(\tilde w_0)-P(w_*)\right)
+ 2L \left\| \Delta \right\|^2_{(\nabla^2 P(w_*))^{-1}} \, .
\end{equation}

Combining \eqref{eq:alg_lemma_1}, \eqref{eq:alg_lemma_2},  \eqref{eq:alg_lemma_3}, and \eqref{eq:alg_lemma_4} yields
\begin{align*}
\E_{\psi_t\sim\D} \|w_{t+1}-w_*\|^2
&\leq
\|w_t - w_*\|^2 -2 \frac{\stepParameter}{L} (1 - 4 \stepParameter) \left(P(w_t) -P(w_*)\right)
+ 8 \frac{\stepParameter^2}{L} \left(P(\tilde{w}_0) -P(w_*)\right)
\\
&\enspace + \left(\alpha_t \frac{\stepParameter}{L} + 2 \frac{\stepParameter^2}{L} \right)
\left\|\Delta\right\|^2 _{(\nabla^2  P(w_*))^{-1}},
\end{align*}
and multiplying both sides by $L$ yields the result.
\end{proof}

Finally we bound the progress of one stage of Algorithm~\ref{alg:streaming_svrg}.

\begin{lemma}\label{lem:contraction_multiple_steps}
Under the same assumptions as Lemma~\ref{lem:contraction_one_step},
for $\tilde m$ chosen uniformly at random in $\{1,\ldots m\}$ and $\tilde
w_1 \defeq w_{\tilde m}$, we have
\[
\E[\pError{\tilde w_1}]
\leq
\frac{1}{1 - 4 \stepParameter}
\left[
\left(
\frac{\kappa}{m \stepParameter} + 4 \stepParameter \right)
\pError{\tilde w_0}
+
\E\left[
\frac{\alpha_{\tilde m} + 2 \stepParameter}{2}
\right]
\| \Delta \|_{\hessOptInv}^{2}
\right]
\]
where we are conditioning on $\tilde w_0 $ and $\tilde \psi_1, \ldots
\tilde \psi_k$.
\end{lemma}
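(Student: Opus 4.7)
The plan is to telescope the one-step bound of Lemma~\ref{lem:contraction_one_step} over all $m$ inner iterations and then convert the resulting sum into an expectation using the uniform random index $\tilde m$. Because $\Delta$ is fixed under the conditioning on $\tilde w_0$ and $\tilde \psi_1, \ldots, \tilde \psi_k$, the factor $\|\Delta\|_{\hessOptInv}^{2}$ pulls out of the expectations, and taking expectations of the recurrence in \eqref{eq:progres_lemma_recurrence} gives, for every $t \in \{0, 1, \ldots, m-1\}$,
\[
\E L\|w_{t+1}-w_*\|^2 \leq \E L\|w_t-w_*\|^2 - 2\stepParameter(1-4\stepParameter)\,\E[\pError{w_t}] + 8\stepParameter^2 \pError{\tilde w_0} + (\E[\alpha_t]\stepParameter + 2\stepParameter^2)\,\|\Delta\|_{\hessOptInv}^{2} \, .
\]

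Summing over $t = 0, \ldots, m-1$ telescopes the distance terms on the left into $\E L\|w_m - w_*\|^2 - L\|w_0 - w_*\|^2$. I would drop the first (nonnegative) piece and apply $\mu$-strong convexity of $P$ at $w_*$ to the second via $L\|\tilde w_0 - w_*\|^2 \leq (2L/\mu)\,\pError{\tilde w_0} = 2\conditionNumber\,\pError{\tilde w_0}$. After rearranging and dividing by $2\stepParameter(1-4\stepParameter)m$, this produces exactly the prefactors appearing in the claim: the coefficient $\frac{\conditionNumber}{m\stepParameter} + 4\stepParameter$ on $\pError{\tilde w_0}$ and $\frac{1}{m}\sum_{t}\frac{\E[\alpha_t]+2\stepParameter}{2}$ on $\|\Delta\|_{\hessOptInv}^{2}$.

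The final step is to identify the two averages with expectations over $\tilde m$. Since the iterates $w_0, \ldots, w_m$ can be generated from $\psi_1, \ldots, \psi_m$ independently of $\tilde m$, marginalizing over $\tilde m$ turns $\frac{1}{m}\sum_t \E[\pError{w_t}]$ into $\E[\pError{\tilde w_1}]$ and $\frac{1}{m}\sum_t \E[\alpha_t]$ into $\E[\alpha_{\tilde m}]$, yielding the stated inequality.

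No step here is especially delicate; the computation is essentially bookkeeping once Lemma~\ref{lem:contraction_one_step} is in hand. The only mild accounting subtlety is an off-by-one between the telescoping range $\{0,\ldots,m-1\}$ and the sampling range $\{1,\ldots,m\}$ of $\tilde m$, but since $w_0 = \tilde w_0$ the boundary discrepancy is a single copy of $\pError{\tilde w_0}$ (or of $\E[\pError{w_m}]$, bounded similarly) which can be absorbed into the $\pError{\tilde w_0}$ coefficient on the right-hand side without changing the form of the bound.
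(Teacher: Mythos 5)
Your proposal is correct and follows essentially the same route as the paper's proof: sum the one-step recurrence over $t=0,\dots,m-1$, telescope the distance terms, drop $\E L\|w_m-w_*\|^2\geq 0$, bound $L\|\tilde w_0 - w_*\|^2 \leq 2\conditionNumber\,(P(\tilde w_0)-P(w_*))$ by strong convexity, divide by $2\stepParameter(1-4\stepParameter)m$, and identify the resulting averages with expectations over $\tilde m$. The off-by-one between the index ranges $\{0,\dots,m-1\}$ and $\{1,\dots,m\}$ that you flag is in fact passed over silently in the paper ("applying the definition of $\tilde w_1$"), so your explicit acknowledgment of it is, if anything, more careful than the original.
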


\begin{proof}
Taking an unconditional expectation with respect to
$\{\psi_t\}$ and summing \eqref{eq:progres_lemma_recurrence} from
Lemma~\ref{lem:contraction_one_step} from $t=m-1$ down to $t=0$
yields
\begin{eqnarray*}
L \cdot
\E \|w_m-w_*\|^2
& \leq & L \cdot \|\tilde w_0 - w_*\|^2
-2 \stepParameter (1-4 \stepParameter) \sum_{t=0}^{m-1} \E\left(P(w_t)  -P(w_*)\right) \\
&&
8m\stepParameter^2 \left( \E P(\tilde w_0)-P(w_*) \right)
+
\sum_{t = 0}^{m - 1} \E\left[\left(\alpha_t \eta + 2 \stepParameter^2 \right)
\left\|\Delta\right\|^2_{(\nabla^2  P(w_*))^{-1}} \right]
\end{eqnarray*}
By strong convexity,
\[
\|\tilde w_0 - w_*\|^2 \leq \frac{2}{\mu} \left( P(\tilde
w_0)-P(w_*) \right)
\]
and a little manipulation yields that:

\begin{eqnarray*}
\frac{2 \stepParameter (1 - 4 \stepParameter)}{m} \sum_{t=0}^{m-1} \E\left(P(w_t)  -P(w_*)\right)
&
\leq & \left(\frac{2 \conditionNumber}{m} + 8 \stepParameter^2 \right) \left( P(\tilde w_0)-P(w_*) \right) \\
&&
+  \sum_{t = 0}^{m - 1} \E\left[ \frac{\alpha_t \stepParameter + 2 \stepParameter^2}{m}
\E \left\|\Delta\right\|^2_{(\nabla^2  P(w_*))^{-1}} \right]
\end{eqnarray*}
Rearranging terms and applying the definition of $\tilde w_1$ then yields the result.
\end{proof}

\subsection{With $\alpha$-bounded Hessians}
\label{sec:analysis:first_assumption}

Here we prove the progress made by Algorithm~\ref{alg:streaming_svrg}
in a single stage under only Assumption~\ref{ass:main}.

\begin{theorem}[Stage progress with $\alpha$-bounded Hessians]
\label{thm:alphaconverge}
Under Assumption~\ref{ass:main}, for
Algorithm~\ref{alg:streaming_svrg}, we have for all $s$:
\begin{align*}
& \E[\pError{\tilde w_{s + 1}}] \nonumber\\
\leq &
\frac{1}{1 - 4\eta }
\left[\left(\frac{\conditionNumber}{m \eta} + 4\eta \right) \E[\pError{\tilde w_s}]
+ \frac{\alpha + 2 \eta }{k}
\left(\sqrt{\conditionNumber \cdot \E[\pError{\tilde{w}_s}]} + \sigma
\right)^2
\right] \, .
\end{align*}
\end{theorem}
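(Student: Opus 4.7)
The plan is as follows. I would start from Lemma~\ref{lem:contraction_multiple_steps} (which already bounds one stage conditional on $\tilde w_s$ and the mini-batch $\tilde\psi_1,\dots,\tilde\psi_k$), then marginalize over the mini-batch, and finally take the outer expectation over $\tilde w_s$. Three subclaims are needed along the way: (i) a uniform bound $\alpha_t \le \alpha$ to control the coefficient $\E[(\alpha_{\tilde m}+2\eta)/2]$ that appears in Lemma~\ref{lem:contraction_multiple_steps}; (ii) a variance-of-the-mean calculation, coupled with Lemma~\ref{lemma:variance}, to bound $\E[\|\Delta\|^2_{\hessOptInv}\mid \tilde w_s]$; and (iii) a Jensen step to close the recursion in $\E[\pError{\tilde w_s}]$.

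For (i), Definition~\ref{def:alpha} gives $\nabla^2 P(w_*) \preceq \alpha\,\nabla^2 P(w)$ everywhere. Taylor's theorem with integral remainder at $w_t$ yields
\[
P(w_*) - P(w_t) - (w_*-w_t)^\top \nabla P(w_t)
= \int_0^1 (1-s)\,(w_*-w_t)^\top \nabla^2 P(w_t+s(w_*-w_t))(w_*-w_t)\,ds,
\]
and lower-bounding each Hessian along the segment by $\tfrac{1}{\alpha}\nabla^2 P(w_*)$, the right-hand side is at least $\tfrac{1}{2\alpha}\|w_*-w_t\|^2_{\nabla^2 P(w_*)}$. Hence the constant choice $\alpha_t = \alpha$ satisfies \eqnref{eq:local_strong_convexity} and $\E[(\alpha_{\tilde m}+2\eta)/2] \le (\alpha+2\eta)/2$.

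For (ii), conditional on $\tilde w_s$ the vectors $\nabla\tilde\psi_i(\tilde w_s)-\nabla P(\tilde w_s)$ are i.i.d.\ with zero mean, so
\[
\E\!\left[\|\Delta\|^2_{\hessOptInv}\,\middle|\,\tilde w_s\right]
= \tfrac{1}{k}\,\E\!\left[\|\nabla\tilde\psi(\tilde w_s)-\nabla P(\tilde w_s)\|^2_{\hessOptInv}\,\middle|\,\tilde w_s\right]
\le \tfrac{2}{k}\left(\sqrt{\conditionNumber\,\pError{\tilde w_s}}+\sigma\right)^2,
\]
where the last inequality is Lemma~\ref{lemma:variance}. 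Combining (i) and (ii) inside Lemma~\ref{lem:contraction_multiple_steps} produces the coefficient $(\alpha+2\eta)/k$ multiplying $(\sqrt{\conditionNumber\,\pError{\tilde w_s}}+\sigma)^2$, matching the claim.

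Finally, for (iii), I would take the outer expectation over $\tilde w_s$. The only nonlinearity in $\pError{\tilde w_s}$ on the right-hand side sits inside $(\sqrt{\conditionNumber\,\pError{\tilde w_s}}+\sigma)^2$; expanding and applying Jensen's inequality $\E\sqrt{X}\le\sqrt{\E X}$ yields
\[
\E\!\left[(\sqrt{\conditionNumber\,\pError{\tilde w_s}}+\sigma)^2\right]
\le (\sqrt{\conditionNumber\,\E[\pError{\tilde w_s}]}+\sigma)^2,
\]
which closes the recursion. I do not anticipate a serious obstacle here: Lemma~\ref{lem:contraction_multiple_steps} has already done the heavy lifting, so what remains is a one-line Taylor argument to identify $\alpha_t$, a variance-of-the-mean calculation fed into Lemma~\ref{lemma:variance}, and a single Jensen estimate. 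The only thing to be careful about is the order in which the conditional expectations are unwound, since $\|\Delta\|^2_{\hessOptInv}$ is fixed under Lemma~\ref{lem:contraction_multiple_steps}'s conditioning but random once we average out the mini-batch.
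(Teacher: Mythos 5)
Your proposal is correct and follows essentially the same route as the paper's proof: plug $\alpha_t \le \alpha$ into Lemma~\ref{lem:contraction_multiple_steps}, bound $\E[\|\Delta\|^2_{\hessOptInv}]$ by $\tfrac{1}{k}$ times the single-sample variance via Lemma~\ref{lemma:variance}, and close the recursion with Jensen. The only difference is that you spell out, via Taylor's theorem with integral remainder, why Definition~\ref{def:alpha} yields the lower bound \eqnref{eq:local_strong_convexity} with $\alpha_t=\alpha$ -- a step the paper asserts directly "by definition of $\alpha$" -- and your handling of the order of conditioning is exactly right.
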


\begin{proof}
By definition of $\alpha$, we have $\alpha_t \leq \alpha$ for all
$t$ in Lemma~\ref{lem:contraction_multiple_steps} and therefore
\[
\E[\pError{\tilde w_{s + 1}}]
\leq
\frac{1}{1 - 4 \stepParameter}
\left[
\left(
\frac{\kappa}{m \stepParameter} + 4 \stepParameter \right)
\E[\pError{\tilde w_s}]
+
\frac{\alpha + 2 \stepParameter}{2}
\E\left[
\| \Delta \|_{\hessOptInv}^{2}
\right]
\right]
\]
Now using that the $\tilde{\psi}_i$ are independent and that $\E[\nabla \tilde{\psi}_i (\tilde{w}_s)] = \nabla P(\tilde{w}_s)$ we have
\begin{align*}
\E[\|\Delta\|_{\hessOptInv}^2]
&=
\frac{1}{k} \bigE {\psi \sim \D} { \|\nabla \tilde{\psi}_1 (\tilde{w}_s) - P(\tilde{w}_s)\|_{\hessOptInv}^2 } \\
&\leq
\frac{2}{k} \E \left[ \conditionNumber (\pError{\tilde{w}_s})
+\sigma \sqrt{\conditionNumber (\pError{\tilde{w}_s}}) + \sigma^2
\right] \\
&\leq
\frac{2}{k} \left[ \conditionNumber \E [\pError{\tilde{w}_s}]
+\sigma \sqrt{\conditionNumber \E [\pError{\tilde{w}_s}]} + \sigma^2 \right] \\
&=
\frac{2}{k} \bigpar{ \sqrt{\conditionNumber \cdot \E[\pError{\tilde{w}_s}]} + \sigma }^2
\end{align*}
where we have also used
Lemma~\ref{lemma:variance} and Jensen's inequality.
\end{proof}

\subsection{With $\selfConcordance$-self-concordance}
\label{sec:analysis:second_assumption}

Our main result in the self-concordant case follows.

\begin{theorem}[Convergence under self-concordance]
\label{thm:selfconcordanceconverge}
Suppose Assumption~\ref{ass:main} and \ref{ass:selfconcordance}
hold. Under Algorithm~\ref{alg:streaming_svrg}, for $\stepParameter
\leq \frac{1}{8}$, $k \geq 10\kurtosis $, and all $s$, we have
\begin{align*}
\E[\pError{\tilde w_{s+1}}]
&\leq
\frac{1}{1 - 4 \stepParameter}
\left[
\left(
\frac{\conditionNumber}{m \stepParameter} + 4 \stepParameter \right)
\E[\pError{\tilde w_s}]
\right.
\\
&
\left.
+
\frac{1}{k}
\left(
\left(
2 \selfConcordance \sigma \conditionNumber
+ 9 \conditionNumber
\right) \sqrt{ \E[\pError{\tilde w_s}]}
+
\left(
1 + 2\sqrt{\eta}
+ \frac{10 \selfConcordance \sigma \conditionNumber}{\sqrt{k}}
\right) \sigma
\right)^2
\right]
\end{align*}
\end{theorem}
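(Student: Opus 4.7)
The plan is to parallel the proof of Theorem~\ref{thm:alphaconverge} but to replace the worst-case estimate $\alpha_t \leq \kappa$ in~\eqref{eq:local_strong_convexity} by a tighter bound exploiting the $\selfConcordance$-self-concordance of $P$. The key idea is that whenever $w_t$ is close to $w_*$ in the $\nabla^2 P(w_*)$-norm, self-concordance forces $\alpha(w_t)$ to be only $1 + O(\selfConcordance\,\|w_t - w_*\|_{\nabla^2 P(w_*)})$ rather than as large as $\kappa$. I would start from Lemma~\ref{lem:contraction_multiple_steps}, which already produces a recursion involving $\E[\alpha_{\tilde m}]\,\|\Delta\|^2_{\hessOptInv}$, and substitute this refined local bound for $\alpha_{\tilde m}$.

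The auxiliary self-concordance lemma I would establish is standard: for any $w$ with $r(w) \defeq \selfConcordance\|w - w_*\|_{\nabla^2 P(w_*)} < 1$, one has $\nabla^2 P(w) \succeq (1 - r(w))^2\, \nabla^2 P(w_*)$; integrating along the segment from $w_*$ to $w$ yields the local lower bound~\eqref{eq:local_strong_convexity} with $\alpha(w) \leq (1 - r(w))^{-2} \leq 1 + O(r(w))$ when $r(w)$ is bounded away from $1$. Paired with the universal fallback $\alpha(w) \leq \kappa$, this gives a controllable expression for $\alpha_{\tilde m}$ in terms of $\|w_{\tilde m} - w_*\|_{\nabla^2 P(w_*)}$. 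I would then bound $\E[r(w_{\tilde m})]$ by combining the smoothness/strong-convexity estimate $\|w - w_*\|^2_{\nabla^2 P(w_*)} \leq L\|w - w_*\|^2 \leq 2\kappa\,\pError{w}$, Jensen's inequality, and the telescoping argument inside the proof of Lemma~\ref{lem:contraction_multiple_steps} (which bounds $\tfrac{1}{m}\sum_t \E[\pError{w_t}]$ in terms of $\pError{\tilde w_s}$ and $\|\Delta\|^2_{\hessOptInv}$). This yields a bound of the form $\E[r(w_{\tilde m})] \lesssim \selfConcordance\sqrt{\kappa\,\E[\pError{\tilde w_s}]} + \selfConcordance\sigma\sqrt{\kappa/k}$, which is what injects the $2\selfConcordance\sigma\kappa$ contribution into the coefficient $A = 2\selfConcordance\sigma\kappa + 9\kappa$ of the target bound; the remaining $9\kappa\sqrt{\E[\pError{\tilde w_s}]}$ and the leading $\sigma$ term arise exactly as in Theorem~\ref{thm:alphaconverge} from Lemma~\ref{lemma:variance}, which gives $\E[\|\Delta\|^2_{\hessOptInv}] \leq \tfrac{2}{k}(\sqrt{\kappa\,\E[\pError{\tilde w_s}]} + \sigma)^2$.

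The principal obstacle is that $\alpha_{\tilde m}$ and $\|\Delta\|^2_{\hessOptInv}$ are not independent: the update~\eqref{eq:alg:step} explicitly feeds $\Delta$ into each $w_t$ and hence into $r(w_{\tilde m})$. I would decouple them via Cauchy--Schwarz on $\E[(\alpha_{\tilde m} - 1)\,\|\Delta\|^2_{\hessOptInv}]$, paying a factor of $\sqrt{\E[\|\Delta\|^4_{\hessOptInv}]}$. This is precisely where the kurtosis assumption enters: since the $\nabla\tilde\psi_i(\tilde w_s)$ are i.i.d.\ with bounded kurtosis relative to $\sigma^2$, a standard fourth-moment calculation for the sample mean $\Delta$ gives $\E[\|\Delta\|^4_{\hessOptInv}] \lesssim \kurtosis\sigma^4/k^2$, which in turn produces the $10\selfConcordance\sigma\kappa/\sqrt{k}$ additive correction inside the coefficient $B = 1 + 2\sqrt{\eta} + 10\selfConcordance\sigma\kappa/\sqrt{k}$ of the final bound. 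The hypothesis $k \geq 10\kurtosis$ ensures that the tail event $r(w_{\tilde m}) \geq 1$, on which we must revert to $\alpha_{\tilde m} \leq \kappa$, contributes only lower-order terms. Assembling the bounds on $\E[\alpha_{\tilde m}]$, $\E[\|\Delta\|^2_{\hessOptInv}]$, and $\E[\|\Delta\|^4_{\hessOptInv}]$ and simplifying yields the stated inequality.
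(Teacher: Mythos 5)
Your overall architecture matches the paper's: both start from Lemma~\ref{lem:contraction_multiple_steps}, replace the worst-case $\alpha_t\le\conditionNumber$ by the local self-concordance bound $\alpha_t\le\min\{\conditionNumber,(1+\frac{\selfConcordance}{2}\|w_t-w_*\|_{\nabla^2 P(w_*)})^2\}$ (the paper's Lemma~\ref{lemma:selfcorcordantbound} is essentially the statement you propose in the $(1-r)^{-2}$ form), control $\E\|w_t-w_*\|^2_{\nabla^2 P(w_*)}$ by a crude recurrence (Lemma~\ref{lem:expected-residual}), and decouple $\alpha_{\tilde m}$ from $\|\Delta\|^2_{\hessOptInv}$ using fourth moments and the kurtosis condition.

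The gap is in the decoupling step. You apply Cauchy--Schwarz to $\E[(\alpha_{\tilde m}-1)\|\Delta\|^2_{\hessOptInv}]$ and invoke ``$\E\|\Delta\|^4_{\hessOptInv}\lesssim\kurtosis\sigma^4/k^2$,'' but $\Delta=\frac1k\sum_i\nabla\tilde\psi_i(\tilde w_s)-\nabla P(\tilde w_s)$ is the gradient noise at $\tilde w_s$, not at $w_*$. Assumption~\ref{ass:selfconcordance} controls fourth moments only of $\nabla\psi(w_*)$; even the \emph{second} moment of $\Delta$ carries a component of order $\conditionNumber(\pError{\tilde w_s})/k$ (Lemma~\ref{lemma:variance}), and nothing in the assumptions gives a fourth-moment analogue of Lemma~\ref{lemma:smoothness} for $\nabla\psi(\tilde w_s)-\nabla\psi(w_*)$. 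The paper's proof is organized precisely around this obstruction: it splits $\Delta$ into $\Delta_*=\frac1k\sum_i\nabla\tilde\psi_i(w_*)$ plus the centered difference, and arranges the argument (the terms $(B)$ through $(E)$, Lemma~\ref{lem:delta_bound}, and the decoupling Lemma~\ref{lem:random-fourth-moment}) so that fourth moments are only ever taken of $\Delta_*$, while the difference part enters solely through its second moment inside a quantity capped at $1$. Without this decomposition your Cauchy--Schwarz step cannot be completed from the stated assumptions. A secondary imprecision: even for $\Delta_*$ the correct fourth-moment bound is $12(1+\kurtosis/k)(\sigma^2/k)^2\le 14(\sigma^2/k)^2$ under $k\ge10\kurtosis$ (Lemma~\ref{lem:delta_bound}); the form $\kurtosis\sigma^4/k^2$ you quote would inject a multiplicative $\sqrt{\kurtosis}$ (potentially as large as $\sqrt{k/10}$) and would wash out the $k^{-1/2}$ decay of the $10\selfConcordance\sigma\conditionNumber/\sqrt{k}$ correction in the target bound.
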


The proof utilizes the following lemmas.  First, we show how self
concordance implies that there is a better effective strong convexity
parameter in $\nabla^2P(w_*)$ norm when we are close to $w_*$.

\begin{lemma}
\label{lemma:selfcorcordantbound}
If $P$ is $\selfConcordance$-self-concordant, then
\begin{equation}\label{eq:self-concordance}
P(w_*) \ge P(w_t) + (w_*-w_t)^\top \nabla P(w_t) +
\frac{\|w_t-w_*\|^2_{\nabla^2
P(w_*)}}{2(1+\selfConcordance\|w_t-w_*\|_{\nabla^2 P(w_*)})^2}.
\end{equation}
\end{lemma}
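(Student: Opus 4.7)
The plan is to reduce the inequality to a one-dimensional statement along the line segment from $w_*$ to $w_t$ and then invoke standard self-concordant calculus. Define $f(s) \defeq P(w_* + s(w_t - w_*))$ for $s \in [0,1]$. As the restriction of an $\selfConcordance$-self-concordant function to a line, $f$ is itself $\selfConcordance$-self-concordant. By construction, $f(0) = P(w_*)$, $f(1) = P(w_t)$, $f'(0) = (w_t - w_*)^\top \nabla P(w_*) = 0$ (since $w_*$ minimizes $P$), and $f''(0) = \|w_t - w_*\|^2_{\nabla^2 P(w_*)}$ is exactly the quadratic form appearing in the lemma. Integration by parts in the Taylor expansion, using $f'(0) = 0$ and $f'(1) = (w_t - w_*)^\top \nabla P(w_t)$, yields
\[
P(w_*) = f(0) = f(1) - f'(1) + \int_0^1 s f''(s) \, ds = P(w_t) + \nabla P(w_t)^\top (w_* - w_t) + \int_0^1 s f''(s) \, ds.
\]
The task thus reduces to showing $\int_0^1 s f''(s) \, ds \geq f''(0)/\bigl(2(1 + \selfConcordance \sqrt{f''(0)})^2\bigr)$.

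For this step, I would lower bound $f''(s)$ using self-concordance. The definition $|f'''(s)| \leq \selfConcordance f''(s)^{3/2}$ can be rewritten as $\bigl|\frac{d}{ds} f''(s)^{-1/2}\bigr| \leq \selfConcordance/2$; integrating from $0$ to $s$ gives $f''(s)^{-1/2} \leq f''(0)^{-1/2} + (\selfConcordance/2) s$, and hence $f''(s) \geq f''(0)/\bigl(1 + (\selfConcordance/2) s\sqrt{f''(0)}\bigr)^2$. Setting $a \defeq (\selfConcordance/2) \sqrt{f''(0)}$, the required inequality becomes the purely elementary claim
\[
\int_0^1 \frac{s}{(1+as)^2} \, ds \geq \frac{1}{2(1+2a)^2}, \qquad a \geq 0.
\]

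The closing step is a one-variable calculus check. The integral evaluates in closed form to $a^{-2}\bigl[\ln(1+a) - a/(1+a)\bigr]$, so the claim is equivalent to $\ln(1+a) - a/(1+a) \geq a^2/(2(1+2a)^2)$. Letting $h(a)$ denote the difference of the two sides, one verifies $h(0) = 0$ and $h'(a) = a \bigl[(1+a)^{-2} - (1+2a)^{-3}\bigr] \geq 0$ for all $a \geq 0$, since $(1+2a)^3 \geq (1+a)^2$ on that range; combining this with $a = (\selfConcordance/2)\sqrt{f''(0)}$ and $f''(0) = \|w_t - w_*\|^2_{\nabla^2 P(w_*)}$ yields the lemma. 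I expect the main obstacle to be bookkeeping rather than insight: keeping the constant $\selfConcordance/2$ versus $\selfConcordance$ straight through the substitutions, and confirming the final scalar inequality. The geometric content is entirely captured by the one-line self-concordance bound on $f''(s)$.
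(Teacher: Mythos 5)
Your proof is correct, and it reaches the bound by a mildly but genuinely different route than the paper. Both arguments restrict $P$ to the segment between $w_*$ and $w_t$ and both ultimately rest on the same self-concordance fact, the Hessian decay bound $f''(s) \ge f''(0)/\bigl(1+\tfrac{\selfConcordance}{2}s\sqrt{f''(0)}\bigr)^2$. The paper, however, anchors its Taylor/lower-bound expansion at $w_t$, which produces the quantity $\|w_t-w_*\|_{\nabla^2 P(w_t)}$; it then needs a second self-concordance property to convert from the $\nabla^2 P(w_t)$ norm to the $\nabla^2 P(w_*)$ norm, and finally massages the function $h(x)=x-\ln(1+x)$ composed with that ratio. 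You instead anchor the Hessian bound at $w_*$ (where the norm in the lemma lives) and pair it with the exact integral remainder $\int_0^1 s f''(s)\,ds$ obtained by integration by parts, so no norm conversion is needed; the price is a different scalar inequality, $\ln(1+a)-\tfrac{a}{1+a}\ge \tfrac{a^2}{2(1+2a)^2}$, which your monotonicity check ($h(0)=0$, $h'(a)=a[(1+a)^{-2}-(1+2a)^{-3}]\ge 0$ since $(1+2a)^3\ge(1+a)^2$ for $a\ge 0$) correctly establishes. Your version is arguably cleaner and more self-contained, deriving everything from the definition of self-concordance rather than quoting two standard properties; the paper's version is the more textbook-standard decomposition. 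Two tiny remarks: the fact $f'(0)=0$ you mention is never actually used (the integration-by-parts identity holds regardless), and the step dividing by $f''(s)^{3/2}$ implicitly needs $f''>0$ along the segment, which is guaranteed here by the strong convexity of $P$ in Assumption~\ref{ass:main} (the paper's proof has the same implicit requirement).
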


\begin{proof}
First we use the property of self-concordant functions: if $f$ is $\selfConcordance$-self-concordant, then
$$
f(t) \ge f(0) + tf'(0)+ \frac{4}{\selfConcordance^2}\left(t\frac{\selfConcordance}{2}\sqrt{f''(0)}-\ln\left(1+t\frac{\selfConcordance}{2}\sqrt{f''(0)}\right)\right).
$$

Apply this property to the function $P$ restricted to the line between
$w_t$ and $w_*$, where the $0$ point is at $w_t$ and $t$ is
$\|w_t-w_*\|_{\nabla^2 P(w_t)}$, then we have
\[
P(w_*) \ge P(w_t) + (w_*-w_t)^\top \nabla P(w_t) + \frac{4}{\selfConcordance^2}\left( \frac{\selfConcordance}{2}\|w_t-w_*\|_{\nabla^2 P(w_t)} - \ln\left(1+\frac{\selfConcordance}{2}\|w_t-w_*\|_{\nabla^2 P(w_t)}\right)\right).
\]

In order to convert $\nabla^2 P(w_t)$ norm to $\nabla^2 P(w_*)$ norm, we use another property of self-concordant function:
$$
f''(t) \ge \frac{f''(0)}{(1+t\frac{\selfConcordance}{2}\sqrt{f''(0)})^2}.
$$
Again we restrict to the line between $w_*$ and $w_t$, where $0$ point corresponds to $w_*$, and $t$ is $\|w_t-w_*\|$, and we get
$$
\|w_t-w_*\|_{\nabla^2 P(w_t)}^2 \ge \frac{\|w_t-w_*\|_{\nabla^2 P(w_*)}^2}{(1+\frac{\selfConcordance}{2}\|w_t-w_*\|_{\nabla^2 P(w_*)})^2}.
$$

Now consider the function let $h(x) = x-\ln(1+x)$. The function has the following two properties:
When $x\ge 0$, $h(x)$ is monotone and $h(x)\ge x^2/2(1+x)$. This claim
can be verified directly by taking derivatives.

Therefore
\begin{align*}
h\left(\frac{\selfConcordance}{2}\|w_t-w_*\|_{\nabla^2 P(w_t)}\right)
&
\ge h\left(\frac{\frac{\selfConcordance}{2}\|w_t-w_*\|_{\nabla^2 P(w_*)}}{(1+\frac{\selfConcordance}{2}\|w_t-w_*\|_{\nabla^2 P(w_*)})}\right)
\\
& \ge \frac{\frac{\selfConcordance^2}{4}\|w_t-w_*\|_{\nabla^2 P(w_*)}^2}{(1+\frac{\selfConcordance}{2}\|w_t-w_*\|_{\nabla^2 P(w_*)})^2}
\cdot
\frac{1}{2\left(1+\frac{\frac{\selfConcordance}{2}\|w_t-w_*\|_{\nabla^2 P(w_*)}}{(1+\frac{\selfConcordance}{2}\|w_t-w_*\|_{\nabla^2 P(w_*)})}\right)}
\\
& = \frac{\frac{\selfConcordance^2}{4}\|w_t-w_*\|_{\nabla^2 P(w_*)}^2}{2(1+\frac{\selfConcordance}{2}\|w_t-w_*\|_{\nabla^2 P(w_*)})(1+\selfConcordance\|w_t-w_*\|_{\nabla^2 P(w_*)})}\\
&
\ge \frac{\selfConcordance^2\|w_t-w_*\|_{\nabla^2 P(w_*)}^2}{8(1+\selfConcordance\|w_t-w_*\|_{\nabla^2 P(w_*)})^2} \, .
\end{align*}

This concludes the proof.
\end{proof}

Essentially, this means when $\|w_t-w_*\|_{\nabla^2P(w_*)}^2$ is small
the effective strong convexity in $\norm{\cdot}_{\hessOpt}$
is small. In particular,
\[
\alpha_t \leq
\min \left\{  \alpha , \left(1 + \frac{\selfConcordance}{2} \| w_t -
w_* \|_{\hessOpt}\right)^2 \right\}
\leq
\min \left\{  \conditionNumber , \left(1 + \frac{\selfConcordance}{2} \| w_t -
w_* \|_{\hessOpt}\right)^2 \right\}
\]

Thus we need to bound the residual error $ \|w_t - w_*\|_{\hessOpt}^2$.

\begin{lemma}[Crude residual error bound]
\label{lem:expected-residual}
Suppose the same assumptions in Lemma~\ref{lem:contraction_one_step}
hold and that $\stepParameter \leq \frac{1}{8}$. Then for all $t$,
we have
\[
\E \|w_t - w_*\|_{\hessOpt}^2
\leq
3 \conditionNumber
(\pError{\tilde w_0})
+ 6 \conditionNumber^2
\left\|\Delta\right\|^2 _{(\nabla^2  P(w_*))^{-1}}
\]
\end{lemma}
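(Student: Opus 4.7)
The plan is to iterate the per-step bound of Lemma~\ref{lem:contraction_one_step} and resolve the resulting affine recursion via a geometric series. Starting from
$$ L\, \E\|w_{t+1} - w_*\|^2 \le L\|w_t - w_*\|^2 - 2\eta(1-4\eta)(P(w_t)-P(w_*)) + 8\eta^2 (P(\tilde w_0)-P(w_*)) + (\alpha_t \eta + 2\eta^2)\|\Delta\|^2_{\hessOptInv}, $$
I would first use Assumption~\ref{ass:main} to upper bound $\alpha_t \le \conditionNumber$ for every $t$, apply the hypothesis $\eta \le 1/8$ to ensure $2\eta(1-4\eta) \ge \eta$, and use $\mu$-strong convexity in the form $P(w_t) - P(w_*) \ge (\mu/2)\|w_t - w_*\|^2 = L\|w_t - w_*\|^2/(2\conditionNumber)$. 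This turns the negative ``progress'' term into a genuine contraction on the quantity $a_t \defeq L\,\E\|w_t - w_*\|^2$ and yields the affine recurrence
$$ a_{t+1} \le (1 - \eta/(2\conditionNumber))\, a_t + D, \qquad D \defeq 8\eta^2 (P(\tilde w_0) - P(w_*)) + (\conditionNumber\,\eta + 2\eta^2)\|\Delta\|^2_{\hessOptInv}. $$
Crucially, $D$ is independent of $t$ because both $\Delta$ and $\tilde w_0$ are fixed under the conditioning in the hypothesis of the lemma.

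Next I would unroll this recursion. Writing $\rho \defeq 1 - \eta/(2\conditionNumber) \in (0,1)$, the geometric series bound gives $a_t \le \rho^t a_0 + D/(1-\rho) \le a_0 + (2\conditionNumber/\eta)\, D$, uniformly in $t$. For the initial value, $\mu$-strong convexity again supplies $a_0 = L\|\tilde w_0 - w_*\|^2 \le 2\conditionNumber\,(P(\tilde w_0) - P(w_*))$. Substituting the definition of $D$ and simplifying using $\eta \le 1/8$ (so that $16\eta \le 2$) and $\conditionNumber \ge 1$ (so that $\conditionNumber + 2\eta \le \tfrac{5}{4}\conditionNumber$), the coefficient of $P(\tilde w_0) - P(w_*)$ collapses to an absolute constant times $\conditionNumber$ and the coefficient of $\|\Delta\|^2_{\hessOptInv}$ to an absolute constant times $\conditionNumber^2$, matching the stated $3\conditionNumber$ and $6\conditionNumber^2$ (up to sharpening the bookkeeping of the $\eta \le 1/8$ hypothesis).

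Finally, to convert the bound on $a_t = L\,\E\|w_t - w_*\|^2$ into the claimed bound on $\E\|w_t - w_*\|^2_{\hessOpt}$, I would invoke the fact that by $L$-smoothness of $\psi$ we have $\nabla^2\psi \preceq L\identityMatrix$ and hence $\hessOpt = \E[\nabla^2\psi(w_*)] \preceq L\identityMatrix$. This gives $\|w_t - w_*\|^2_{\hessOpt} \le L\|w_t - w_*\|^2$ pointwise, and taking expectations yields $\E\|w_t - w_*\|^2_{\hessOpt} \le a_t$. The main obstacle is purely arithmetic: threading the $\eta \le 1/8$ assumption and $\conditionNumber \ge 1$ through the bookkeeping tightly enough to reach the claimed constants. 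Conceptually the argument is a routine resolution of a contractive affine recurrence with a $t$-independent driving term, and the uniform-in-$t$ validity of the bound is automatic from the fact that $a_0$ and $D$ do not depend on $t$.
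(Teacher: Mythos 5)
Your argument is essentially the paper's own proof: the same per-step bound from Lemma~\ref{lem:contraction_one_step} with $\alpha_t \le \conditionNumber$, the same use of strong convexity to turn the progress term into a contraction $a_{t+1} \le (1-\eta/(2\conditionNumber))a_t + D$, and the same final conversion via $\nabla^2 P(w_*) \preceq L\,\identityMatrix$. The only loose end is the constant: bounding the unrolled recursion by $a_0 + D/(1-\rho)$ gives $4\conditionNumber$ on the initial-error term at $\eta=1/8$; instead observe that $\rho^t a_0 + (1-\rho^t)\tfrac{D}{1-\rho} \le \max\{a_0,\, D/(1-\rho)\}$, and since $a_0 \le 2\conditionNumber(P(\tilde w_0)-P(w_*))$ is dominated by the fixed point, the stated $3\conditionNumber$ and $6\conditionNumber^2$ follow.
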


\begin{proof}
Since $\alpha_t \leq  \conditionNumber$  and by Lemma~\ref{lem:contraction_one_step} we have
\begin{align*}
\E L \|w_{t+1}-w_*\|^2
&\leq
\E \left[ L
\|w_t - w_*\|^2 -2 \stepParameter (1 - 4 \stepParameter) \left(P(w_t) -P(w_*)\right)
+ 8 \stepParameter^2 \left(P(\tilde{w}_0) -P(w_*)\right)
\right.
\\
&\enspace
\left.
+ \left(\conditionNumber \stepParameter + 2 \stepParameter^2 \right)
\left\|\Delta\right\|^2 _{(\nabla^2  P(w_*))^{-1}}
\right]
\end{align*}
Using that by strong convexity $P(w_t) - P(w_*) \geq \frac{\mu}{2} \|w_t - w_*\|^2$ we have
\begin{align*}
\E L\|w_{t+1}-w_*\|^2
&\leq
\E \left[
\left(1 - \frac{\stepParameter}{2 \conditionNumber}\right) L
\|w_t - w_*\|^2 \right]
+ \stepParameter \left(P(\tilde{w}_0) -P(w_*)\right)
+ 2 \stepParameter \conditionNumber
\left\|\Delta\right\|^2 _{(\nabla^2  P(w_*))^{-1}}
\end{align*}
Solving for the maximum value of $L \|w_t - w_*\|_2^2$ in this recurrence we have, for all $t$,
\begin{align*}
\E L \|w_{t} - w_*\|^2
&\leq
\frac{3\conditionNumber}{\stepParameter}
\left(
\stepParameter\left(P(\tilde{w}_0) -P(w_*)\right)
+ 2 \stepParameter \conditionNumber
\left\|\Delta\right\|^2 _{(\nabla^2  P(w_*))^{-1}}
\right)
\end{align*}
Using that $\nabla^2 P(w_*) \preceq  L \identityMatrix$ yields the result.
\end{proof}

Finally, we end up needing to bound higher moments of the error
from $\Delta$. For this we provide two technical lemmas.

\begin{lemma}
\label{lem:delta_bound}
Suppose Assumption~\ref{ass:main} and \ref{ass:selfconcordance} hold. For $\tilde{\psi}_i$ sampled independently, we have
\[
\E \left\|\frac{1}{k} \sum_{i \in {k}} \tilde{\psi}_i (w_*)
\right\|_{\hessOptInv}^4
\leq
12 \left(1 + \frac{\kurtosis }{k}\right) \left(\frac{\sigma^2}{k}\right)^2
\]
\end{lemma}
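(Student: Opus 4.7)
The plan is to compute the fourth moment of the empirical mean of the zero-mean i.i.d.\ vectors $\nabla \tilde{\psi}_i(w_*)$ directly by expansion, then collect terms using independence and the given kurtosis control.

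First, I would change coordinates by letting $Y_i \defeq \hessOpt^{-1/2} \nabla \tilde{\psi}_i(w_*)$, so that the $Y_i$ are i.i.d.\ with $\E Y_i = 0$ (since $\nabla P(w_*) = 0$), and $\|X\|_{\hessOptInv}^2 = \|\hessOpt^{-1/2} X\|^2$. In these coordinates the statement of the lemma becomes
\begin{equation*}
\E \bigl\| \tfrac{1}{k} \sum_{i=1}^k Y_i \bigr\|^4 \;\leq\; 12\left(1 + \tfrac{M}{k}\right) \left(\tfrac{\sigma^2}{k}\right)^2,
\end{equation*}
where $\E \|Y_1\|^2 = \sigma^2$ by definition of $\sigma^2$, and $\E \|Y_1\|^4 \leq M \sigma^4$ by the kurtosis hypothesis in Assumption~\ref{ass:selfconcordance}. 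Write $\Sigma \defeq \E[Y_1 Y_1^\top]$, so $\tr(\Sigma) = \sigma^2$ and (since $\Sigma$ is PSD) $\|\Sigma\|_F^2 = \tr(\Sigma^2) \leq \tr(\Sigma)^2 = \sigma^4$.

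Next I would expand $\|\sum_i Y_i\|^4 = \sum_{i,j,k',l} \langle Y_i, Y_j\rangle \langle Y_{k'}, Y_l\rangle$ and take expectations. Since the $Y_i$ are independent and mean zero, any index that appears exactly once gives a vanishing contribution; hence only three types of index patterns survive: (a) all four indices equal ($k$ tuples), (b) two disjoint pairs, partitioned in one of three ways ($3 k(k-1)$ tuples total). For pattern (a) each term contributes $\E \|Y_i\|^4 \leq M \sigma^4$. For pattern (b), the pattern $\{i=j,\,k'=l\}$ gives $\E\|Y_i\|^2 \E\|Y_{k'}\|^2 = \sigma^4$, while each of the other two pairings gives $\E \langle Y_i, Y_j\rangle^2 = \sum_{a,b} \E[Y_i^a Y_i^b] \E[Y_j^a Y_j^b] = \|\Sigma\|_F^2 \leq \sigma^4$.

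Summing, $\E \|\sum_i Y_i\|^4 \leq k M \sigma^4 + 3 k(k-1) \sigma^4 \leq k M \sigma^4 + 3 k^2 \sigma^4$, and dividing by $k^4$ gives
\begin{equation*}
\E \|\bar Y\|^4 \;\leq\; \tfrac{M \sigma^4}{k^3} + \tfrac{3 \sigma^4}{k^2} \;\leq\; 12 \left(1 + \tfrac{M}{k}\right) \tfrac{\sigma^4}{k^2},
\end{equation*}
which is the required bound. The only real subtlety is the off-diagonal pairing pattern, where one needs that $\E \langle Y_i, Y_j\rangle^2 \leq \sigma^4$ (not merely $M\sigma^4$ from Cauchy--Schwarz); this is where the PSD inequality $\tr(\Sigma^2) \leq \tr(\Sigma)^2$ is essential, and all other steps are straightforward bookkeeping.
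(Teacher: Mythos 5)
Your proof is correct and follows essentially the same route as the paper's: expand the fourth moment of the average of i.i.d.\ zero-mean vectors, discard the index patterns killed by independence, and control the diagonal terms with the kurtosis hypothesis. You are in fact slightly more careful than the paper, which records the expansion as an exact equality $k\,\E\|Y_1\|^4 + 3k(k-1)\bigl(\E\|Y_1\|^2\bigr)^2$ and thereby glosses over the two off-diagonal pairings, each of which actually contributes $\E\langle Y_i, Y_j\rangle^2 = \tr(\Sigma^2)$; your observation that $\tr(\Sigma^2) \le \bigl(\tr\Sigma\bigr)^2$ for PSD $\Sigma$ is precisely the missing justification for that step.
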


\begin{proof}
By Assumption~\ref{ass:selfconcordance} we have
\begin{align*}
&\E \left\|\frac{1}{k} \sum_{i \in {k}} \tilde{\psi}_i (w_*)
\right\|_{\hessOptInv}^4\\
=&
\frac{1}{k^4}
\left[
k \left( \E_{\psi \sim \D}  \| \nabla \psi(w_*) \|_{\hessOptInv}^4 \right)
+ 3k(k - 1)
\left(
\E_{\psi \sim \D}  \| \nabla \psi_i(w_*) \|^2_{\hessOptInv}
\right)^2
\right]
\\
\leq&
\frac{3k(k - 1) + \kurtosis k}{k^4}
\left(
\E_{\psi \sim \D}  \| \nabla \psi_i(w_*) \|^2_{\hessOptInv}
\right)^2
\end{align*}
Recalling the definition of $\sigma^2$ yields the result.
\end{proof}

\begin{lemma}
\label{lem:random-fourth-moment}
Suppose $a$ is a random variable such that $\E[a^4] \le \tilde C \cdot
(\E[a^2])^2$,
$b$ is a random variable, and $c$ is a constant. We have
\begin{align}
\E[a^2 \min\{b^2, c\}] \le 2 \E[a^2] \sqrt{\tilde C \cdot c \cdot \E[b^2]}.
\end{align}
\end{lemma}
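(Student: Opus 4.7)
The plan is to apply a Cauchy--Schwarz step to decouple the $a^2$ factor from the $\min\{b^2,c\}$ factor, and then use the fourth-moment bound on $a$ together with the elementary inequality $\min\{b^2,c\} \le \sqrt{c \cdot b^2}$ (which follows from $\min\{b^2,c\} \le c$ and $\min\{b^2,c\} \le b^2$, so $\min\{b^2,c\}^2 \le c \cdot b^2$).

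Concretely, first I would write
\[
\E\bigsqbra{ a^2 \min\{b^2, c\} } \;\le\; \sqrt{\E[a^4]} \cdot \sqrt{\E\bigsqbra{\min\{b^2,c\}^2}}
\]
by Cauchy--Schwarz. Next, invoking the hypothesis $\E[a^4] \le \tilde C (\E[a^2])^2$ gives $\sqrt{\E[a^4]} \le \sqrt{\tilde C}\,\E[a^2]$. For the second factor, I would observe that since $\min\{b^2, c\}$ is bounded above by both $b^2$ and $c$, we have $\min\{b^2,c\}^2 \le c\,b^2$ pointwise, so taking expectations yields $\E[\min\{b^2,c\}^2] \le c\,\E[b^2]$.

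Combining these gives
\[
\E\bigsqbra{ a^2 \min\{b^2,c\} } \;\le\; \sqrt{\tilde C}\,\E[a^2]\,\sqrt{c\,\E[b^2]} \;=\; \E[a^2]\,\sqrt{\tilde C \cdot c \cdot \E[b^2]},
\]
which is in fact stronger than the claimed bound by a factor of $2$; the stated inequality then follows immediately by absorbing the slack.

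There is not really a main obstacle here: the argument is a one-line Cauchy--Schwarz followed by the trivial truncation bound $\min\{b^2,c\}^2 \le c b^2$. The only place to be careful is keeping track of which quantities are inside versus outside the square roots so that the final expression matches the form $\E[a^2]\sqrt{\tilde C\,c\,\E[b^2]}$ stated in the lemma; the factor of $2$ in the lemma appears to be an unused slack.
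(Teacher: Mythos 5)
Your proof is correct, and in fact it establishes a bound that is stronger than the stated one by a factor of $2$: Cauchy--Schwarz gives $\E[a^2\min\{b^2,c\}] \le \sqrt{\E[a^4]}\,\sqrt{\E[\min\{b^2,c\}^2]}$, the kurtosis hypothesis gives $\sqrt{\E[a^4]}\le\sqrt{\tilde C}\,\E[a^2]$, and the pointwise inequality $\min\{b^2,c\}^2\le c\,b^2$ (valid since $c\ge0$, which is implicit in the statement because $\sqrt{c}$ appears in the conclusion) handles the second factor. This is a genuinely different route from the paper's: the paper instead truncates on the event $\{a^2\ge T\,\E[a^2]\}$, bounds the contribution of the large-$a$ event via Markov's inequality applied to $\E[a^4]$ and the complementary event via the deterministic bound $a^2\le T\,\E[a^2]$, and then optimizes over $T$ --- the balancing of the two terms is precisely where the factor of $2$ enters. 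Your argument is shorter and yields the sharper constant; the truncation argument's only relative advantage is that it would adapt more readily to hypotheses phrased as tail bounds on $a^2$ rather than a fourth-moment (kurtosis) bound, but under the hypothesis as actually stated, Cauchy--Schwarz is the more direct and efficient tool. Since the lemma is only used to bound the term $(E)$ in the proof of Theorem~\ref{thm:selfconcordanceconverge}, the improved constant would merely tighten the constants there and changes nothing structurally.
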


\begin{proof}
Let $E_1$ be the indicator variable for the event $a^2 \ge T\E[a^2]$ where $T$ is chosen later. Let $E_2 = 1-E_1$.

On one hand, we have $\E[a^2 E_1] T\E[a^2] \le \E[a^4]$, therefore $\E[a^2 E_1] \le \frac{\tilde C}{T}\E[a^2]$.
On the other hand, $\E[\min\{b^2, c \} a^2 E_2] \le \E[b^2 a^2 E_2] \le T\E[a^2] \E[b^2]$.
Combining these two cases we have:
\begin{align*}
\E[\min\{b^2, c \} a^2] & = \E[\min\{b^2, c \} a^2 E_1] + \E[\min\{b^2, c \} a^2E_2] \\
& \le c \E[a^2 E_1] + \E[b^2a^2 E_2] \\
& \le \frac{c \cdot \tilde C}{T} \E[a^2] + T\E[a^2]\E[b^2] \\
& = 2\E[a^2]\sqrt{c \cdot \tilde C\E[b^2]}.
\end{align*}
In the last step we chose $T = \sqrt{\frac{c \cdot \tilde C}{\E[b^2]}}$ to balance the terms.
\end{proof}

Using these lemmas, we are ready to provide the proof.

\begin{proof}\textbf{of Theorem~\ref{thm:selfconcordanceconverge}.}
We analyze stage $s$ of the algorithm. Let us define the variance term (A) as
\begin{align*}
(A) =
\E\left[
\left(
\frac{\alpha_{\tilde m} + 2 \stepParameter}{2}
\right)
\| \Delta \|_{\hessOptInv}^{2}
\right]
\end{align*}
Our main goal in the proof is to bound $(A)$. First, for all $\alpha, x, y$ and positive semidefinite $H$ we have
\begin{align}
\E \alpha \|x + y\|_{H}^2
&= \E \left[\|
H^{-1/2} \sqrt{\alpha} x + H^{-1/2} \sqrt{\alpha} y \|_2^2
\right]
\nonumber
\\
&\leq
\left(
\sqrt{\E \|\sqrt{\alpha} H^{-1/2} x\|^2_{2}}
+
\sqrt{\E \|\sqrt{\alpha} H^{-1/2} x\|^2_{2}}
\right)^2
\nonumber
\\
& \leq
\left(
\sqrt{\E \alpha \|x\|^2_{H}}
+
\sqrt{\E \alpha \|y\|^2_{H}}
\right)^2 \, .
\label{eq:psd_fact}
\end{align}
By the definition of $\Delta$ we have
\[
(A)
\leq
\left(
\sqrt{(B)}
+
\sqrt{(\kurtosis )}
\right)^2
\]
where (B) and (C) are defined below. Using that $\E \|a -
\E[a]\|^2_{H} \leq \E \|a\|^2_{H}$, Lemma~\ref{lemma:smoothness},
and the strong convexity of $P$ we have
\begin{align*}
(B)
&=
\E \left(\frac{\alpha_{\tilde m} + 2 \stepParameter}{2}\right)
\left\|
\frac{1}{k} \sum_{i \in {k}} \nabla \tilde{\psi}_i (\tilde{w}_s) - \nabla \tilde{\psi}_i (w_*)
- \nabla P(\tilde{w}_s)
\right\|^2_{\hessOptInv}
\\
&\leq \left(\frac{\conditionNumber + 2 \stepParameter}{2}\right)
\cdot
\frac{2 \conditionNumber}{k} \cdot \E[\pError{\tilde{w}_s}]
\\
&\leq
\frac{2 \conditionNumber^2}{k} \cdot \E[\pError{\tilde{w}_s}] \, .
\end{align*}
We use that $\min\{a,b+c\} \leq b +\min\{a,c\}$ (for positive $a$, $b$, and $c$)
by Lemma~\ref{lemma:smoothness}, the definition of $\sigma^2$, as well as \eqref{eq:psd_fact}
\begin{align*}
(C)
&=
\E \left(\frac{\alpha_{\tilde m} + 2 \stepParameter}{2}\right)
\left\|\frac{1}{k} \sum_{i \in {k}} \nabla \tilde{\psi}_i (w_*)
\right\|_{\hessOptInv}^2
\\
&\leq
\frac{2 \stepParameter \sigma^2}{k}
+
\E
\left[
\frac{
\min \{\conditionNumber, (1+\selfConcordance\|w_t-w_*\|_{\nabla^2 P(w_*)})^2 \}
}{2}
\left\|\frac{1}{k} \sum_{i \in {k}} \nabla \tilde{\psi}_i (w_*)
\right\|_{\hessOptInv}^2
\right]
\\
&=
\frac{2 \stepParameter \sigma^2}{k}
+
\E
\left[
\left\|
\frac{\min \{\sqrt{\conditionNumber}, 1+\selfConcordance\|w_t-w_*\|_{\nabla^2 P(w_*)} \}}{\sqrt{2}k}
\sum_{i \in {k}} \nabla \tilde{\psi}_i (w_*)
\right\|_{\hessOptInv}^2
\right]
\\
&\leq
\frac{2 \stepParameter \sigma^2}{k}
+
\E
\left[
\left\|
\frac{1}{\sqrt{2}k}
\sum_{i \in {k}} \nabla \tilde{\psi}_i (w_*)
+
\frac{\min \{\sqrt{\conditionNumber}, \selfConcordance\|w_t-w_*\|_{\nabla^2 P(w_*)} \}}{\sqrt{2}k}
\sum_{i \in {k}} \nabla \tilde{\psi}_i (w_*)
\right\|_{\hessOptInv}^2
\right]
\\
&\leq
\frac{2 \eta \sigma^2}{k}
+
\left(
\sqrt{\frac{\sigma^2}{k}}
+
\sqrt{ (D)}
\right)^2
\end{align*}
where (D) is defined below. Using Lemma~\ref{lem:expected-residual}
and the independence of the different types of $\psi$
\begin{align*}
(D)
&=
\E
\left[
\frac{
\min \{\conditionNumber, \selfConcordance^2 \|w_t-w_*\|_{\nabla^2 P(w_*)}^2 \}
}{2}
\left\|\frac{1}{k} \sum_{i \in {k}} \nabla \tilde{\psi}_i (w_*)
\right\|_{\hessOptInv}^2
\right]
\\
&\leq
\E \left[
\frac{\min \left\{
\conditionNumber,
\selfConcordance^2
\left(3 \conditionNumber \cdot \pError{\tilde{w}_s}
+ 6 \conditionNumber^2 \|\Delta \|_{\hessOptInv}^{2}\right)
\right\}}{2}
\left\|\frac{1}{k} \sum_{i \in {k}} \nabla \tilde{\psi}_i (w_*)
\right\|_{\hessOptInv}^2
\right]
\\
&\leq
\frac{3 \conditionNumber^2 \selfConcordance^2 \sigma^2}{k} \E[\pError{\tilde{w}_s}]
+ \frac{\conditionNumber}{2} (E)
\end{align*}
where (E) is defined below. Using kurtosis,
\begin{align*}
\E \left[ \left\|\frac{1}{k} \sum_{i \in {k}} \tilde{\psi}_i (w_*) \right\|_{\hessOptInv}^4 \right] &\leq 14 (\sigma^2/k)^2.
\end{align*}
By
Lemma~\ref{lem:delta_bound} and applying
Lemma~\ref{lem:random-fourth-moment} we have
\begin{align*}
(E)
&\leq
\E \left[
\min \left\{
1, 6\conditionNumber \selfConcordance^2 \|\Delta \|_{\hessOptInv}^{2}
\right\}
\left\|\frac{1}{k} \sum_{i \in {k}} \nabla \tilde{\psi}_i (w_*)
\right\|_{\hessOptInv}^2
\right]
\\
&\leq
\E \left[
\min \left\{
1, 12\conditionNumber \selfConcordance^2
\left(
\left\|\frac{1}{k} \sum_{i \in {k}} \nabla \tilde{\psi}_i (\tilde{w}_s) - \nabla \tilde{\psi}_i (w_*)
-\nabla P(\tilde{w}_s)\right\|_{\hessOptInv}^2
+
\left\|\frac{1}{k} \sum_{i \in {k}} \nabla \tilde{\psi}_i (w_*)
\right\|_{\hessOptInv}^2\right)
\right\}
\right.
\\
& \enspace \enspace
\left.
\cdot
\left\|\frac{1}{k} \sum_{i \in {k}} \nabla \tilde{\psi}_i (w_*)
\right\|_{\hessOptInv}^2
\right]
\\
&\leq
\E \left[
\min \left\{
1, 12\conditionNumber \selfConcordance^2
\left\|\frac{1}{k} \sum_{i \in {k}} \nabla \tilde{\psi}_i (\tilde{w}_s) - \nabla \tilde{\psi}_i (w_*)
-\nabla P(\tilde{w}_s)\right\|_{\hessOptInv}^2
\right\}
\cdot
\left\|\frac{1}{k} \sum_{i \in {k}} \nabla \tilde{\psi}_i (w_*)
\right\|_{\hessOptInv}^2
\right]
\\
& \enspace \enspace
+ 170 \conditionNumber \selfConcordance^2 \left(\frac{\sigma^2}{k}\right)^2
\\
&\leq
\frac{4 \sigma^2}{k}
\sqrt{14 \cdot 1 \cdot 24 \conditionNumber^2 \selfConcordance^2\frac{\pError{\tilde{w}_s}}{k}}
+ 170 \conditionNumber \selfConcordance^2 \left(\frac{\sigma^2}{k}\right)^2
\\
&\leq
2 \frac{4 \sqrt{\conditionNumber} \selfConcordance \sigma^2}{k}
\sqrt{96 \conditionNumber \frac{\pError{\tilde{w}_s}}{k}}
+ 170 \conditionNumber \selfConcordance^2
\left(\frac{\sigma^2}{k}\right)^2 \quad \quad \quad \textrm{by manipulation
of constants}
\\
&\leq 16 \conditionNumber \selfConcordance^2 \left(\frac{\sigma^2}{k}\right)^2
+ \frac{96 \conditionNumber [\pError{\tilde{w}_s}]}{ k}
+170 \conditionNumber\selfConcordance^2
\left(\frac{\sigma^2}{k}\right)^2 \quad \quad \textrm{since } 2a \cdot b
\leq a^2 +b^2
\\
&\leq
200 \selfConcordance^2 \conditionNumber \left(\frac{\sigma^2}{k}\right)^2
+ \frac{96 \conditionNumber [\pError{\tilde{w}_s}]}{ k}
\end{align*}
Using that $\sqrt{|x| + |y|} \leq \sqrt{|x|} + \sqrt{|y|}$ this implies
\begin{align*}
(A)
&\leq
\left(
\sqrt{\frac{2 \conditionNumber^2}{k} \cdot \E[\pError{\tilde{w}_s}]}
+
\sqrt{\frac{2 \eta \sigma^2}{k}
+
\left(
\sqrt{\frac{\sigma^2}{k}}
+
\sqrt{
(D)
}
\right)^2
}
\right)^2
\\
&\leq
\left(
\frac{2 \conditionNumber}{\sqrt{k}} \sqrt{ \E[\pError{\tilde{w}_s}]}
+
\frac{2 \sigma \sqrt{\eta}}{\sqrt{k}}
+
\frac{\sigma}{\sqrt{k}}
+
\sqrt{ (D) }
\right)^2
\\
&\leq
\left(
\frac{2 \conditionNumber}{\sqrt{k}} \sqrt{ \E[\pError{\tilde{w}_s}]}
+
\frac{2 \sigma \sqrt{\eta}}{\sqrt{k}}
+
\frac{\sigma}{\sqrt{k}} \right.
\\
& \left.
+
\sqrt{ \frac{3 \conditionNumber^2 \selfConcordance^2 \sigma^2}{k} \E[\pError{\tilde{w}_s}]
+ \frac{\conditionNumber}{2}\left( 200 \selfConcordance^2 \conditionNumber \left(\frac{\sigma^2}{k}\right)^2
+ \frac{96 \conditionNumber [\pError{\tilde{w}_s}]}{ k} \right) }
\right)^2
\\
&\leq
\frac{1}{k}
\left(
\left(
2 \conditionNumber + 2 \selfConcordance \sigma \conditionNumber
+ 7 \conditionNumber
\right) \sqrt{ \E[\pError{\tilde{w}_s}]}
+
\left(
1 + 2\sqrt{\eta}
+ \frac{10 \selfConcordance \sigma \conditionNumber}{\sqrt{k}}
\right) \sigma
\right)^2
\end{align*}
Using this bound in Lemma~\ref{lem:contraction_multiple_steps} then
yields the result.
\end{proof}

\section{Empirical risk minimization ($M$-estimation) for smooth functions}
\label{sec:erm}

We now provide finite-sample rates for the ERM. We take the domain
$\mathcal{S}$ to be compact in \eqref{eq:P} (see
Remark~\ref{remark:compact}).  Throughout this section, define:
\[
\| A \|_* = \|(\nabla^2 P(w_*))^{-1/2} \cdot A \cdot (\nabla^2 P(w_*))^{-1/2}\|
\]
for a matrix $A$ (of appropriate dimensions).

\begin{theorem} \label{theorem:ERM}
Suppose $\psi_1,\psi_2,\ldots $ are an independently drawn sample from
$\mathcal{D}$. Assume:
\begin{enumerate}
\item (Convexity of $\psi$) Assume that $\psi$ is convex (with
probability one).
\item (Smoothness of $\psi$) Assume that $\psi$
is smooth in the following sense: the
first, second, and third derivatives exist at all interior points of
$\mathcal{S}$ (with probability one).
\item (Regularity Conditions)  Suppose
\begin{enumerate}
\item $\mathcal{S}$ is compact (so $P(w)$ is bounded on $\mathcal{S}$).
\item $w_*$ is an interior point of $\mathcal{S}$.
\item $\nabla^2 P(w_*)$ is positive definite (and, thus, is invertible).
\item There exists a neighborhood $B$ of $w_*$ and a constant $L_3$,
such that (with probability one) $\nabla^2\psi(w)$ is
$L_3$-Lipschitz, namely $\|\nabla^2\psi(w)-\nabla^2\psi(w')
\|_{*}\leq L_3 \|w-w'\|_{\nabla^2 P(w_*)}$, for $w,w'$ in this
neighborhood.
\end{enumerate}
\item (Concentration at $w_*$) Suppose $\|\nabla\psi(w_*)\|_{(\nabla^2 P(w_*))^{-1}}\leq
L_1$ and $\|\nabla^2 \psi(w_*)\|_*\leq L_2$ hold with probability
one. Suppose the dimension $d$ is finite (or, in the infinite
dimensional setting, the intrinsic dimension is bounded, as in
Remark~\ref{remark:dimension}).
\end{enumerate}

Then:
\begin{eqnarray*}
\lim_{N \rightarrow \infty}\frac{\E[ P(\widehat w^{\textrm{ERM}}_{N}) - P(w_*)]} {\sigma^2/N}
& = & 1
\end{eqnarray*}
In particular, the following lower and upper bounds hold. Define
\begin{eqnarray*}
\eps_N := c\left( L_1 L_3 + \sqrt{L_2} \right) \sqrt{\frac{p\log  d N}{N}}
\end{eqnarray*}
where $c$ is an appropriately chosen universal constant. Also, let
$c'$ be another appropriately chosen universal constant.  We have that
for all $p\geq 2$, if $N$ is large enough so that $\sqrt{\frac{p\log d
N}{N}}\leq c'
\min\left\{\frac{1}{\sqrt{L_2}},\frac{1}{L_1L_3},\frac{1 \cdot
\textrm{diameter}(B)}{L_1}\right\}$, then
\begin{eqnarray*}
\left(1-\eps_N\right) \frac{\sigma^2}{N}
- \frac{\sqrt{\E[Z^4]}}{N^{p/2}}
&\leq&
\E[ P(\widehat w^{\textrm{ERM}}_{N}) - P(w_*)]\\
&\leq&
\left(1+\eps_N\right) \frac{\sigma^2}{N} + \frac{\max_{w\in
\mathcal{S}} \left(P(w)-P(w_*)\right)}{N^p}
\end{eqnarray*}
where $Z=\left\|\nabla \widehat
P_N(w_*)\right\|_{(\nabla^2 P(w_*))^{-1}}$ and so
$\sqrt{\E[Z^4]}\leq L_1^2$.
The lower bound above holds even if $\mathcal{S}$ is not compact.
\end{theorem}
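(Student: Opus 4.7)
The strategy is a finite-sample version of the classical asymptotic $M$-estimator argument. I define the ``good event'' $\mathcal{E}$ on which (i) $\|\nabla \widehat{P}_N(w_*)\|_{(\nabla^2 P(w_*))^{-1}}$ is small, of order $L_1\sqrt{p\log(dN)/N}$, (ii) $\|\nabla^2 \widehat{P}_N(w_*)-\nabla^2 P(w_*)\|_*$ is small, of order $\sqrt{L_2\,p\log(dN)/N}$, and (iii) $\widehat{w}^{\textrm{ERM}}_N\in B$. The probability of $\mathcal{E}^c$ is driven to $O(N^{-p})$ by a vector Bernstein inequality for (i) (using $\|\nabla\psi(w_*)\|_{(\nabla^2 P(w_*))^{-1}}\le L_1$) and a matrix Bernstein inequality for (ii) (using $\|\nabla^2\psi(w_*)\|_*\le L_2$); the dimension enters only logarithmically, or through the intrinsic dimension in the infinite-dimensional case. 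Event (iii) is forced by (i), (ii), and local strong convexity of $\widehat{P}_N$ near $w_*$: $\nabla^2 P(w_*)$ is positive definite and the perturbation from (ii) keeps $\widehat{P}_N$ strongly convex on $B$ once $N$ is large enough, so the unique minimizer cannot escape $B$.

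On $\mathcal{E}$, first-order optimality $\nabla\widehat{P}_N(\widehat{w}^{\textrm{ERM}}_N)=0$ and the mean value theorem give $\widehat{w}^{\textrm{ERM}}_N - w_* = -\widehat{H}^{-1}\nabla\widehat{P}_N(w_*)$ where $\widehat{H} = \int_0^1 \nabla^2\widehat{P}_N(w_*+t(\widehat{w}^{\textrm{ERM}}_N-w_*))\,dt$. The $L_3$-Lipschitz third-derivative condition shows $\widehat{H}$ differs from $\nabla^2\widehat{P}_N(w_*)$ by $O(L_3\|\widehat{w}^{\textrm{ERM}}_N-w_*\|_{\nabla^2 P(w_*)})$ in $\|\cdot\|_*$, which combined with (ii) gives $\|\widehat{H} - \nabla^2 P(w_*)\|_* = O(\epsilon_N)$ and hence $\widehat{H}^{-1} = (\nabla^2 P(w_*))^{-1}(I + O(\epsilon_N))$. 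Taylor expanding $P$ to second order around $w_*$,
\[
P(\widehat{w}^{\textrm{ERM}}_N)-P(w_*) = \tfrac{1}{2}\|\widehat{w}^{\textrm{ERM}}_N-w_*\|_{\nabla^2 P(w_*)}^2 + O\!\bigl(L_3\|\widehat{w}^{\textrm{ERM}}_N-w_*\|_{\nabla^2 P(w_*)}^3\bigr),
\]
and substituting yields $P(\widehat{w}^{\textrm{ERM}}_N)-P(w_*) = \tfrac{1}{2}(1\pm O(\epsilon_N))\,\|\nabla\widehat{P}_N(w_*)\|_{(\nabla^2 P(w_*))^{-1}}^2$. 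Taking expectation, the leading term equals $\sigma^2/N$ because the $N(N-1)$ cross-terms in the variance of the empirical gradient vanish ($\E\nabla\psi(w_*)=\nabla P(w_*)=0$), and the error multiplies this by $1\pm O(\epsilon_N)$, which produces the $\epsilon_N\sigma^2/N$ slack in the statement.

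For the tail contribution, on $\mathcal{E}^c$ compactness of $\mathcal{S}$ gives $P(\widehat{w}^{\textrm{ERM}}_N)-P(w_*) \le \max_{w\in\mathcal{S}}(P(w)-P(w_*))$, and multiplying by $\Pr(\mathcal{E}^c) = O(N^{-p})$ yields the additive $N^{-p}$ term in the upper bound. The lower bound does not need compactness: one writes $\E[P(\widehat{w}^{\textrm{ERM}}_N)-P(w_*)] \ge \tfrac{1}{2}\E[Z^2] - \tfrac{1}{2}\E[Z^2\mathbf{1}_{\mathcal{E}^c}] - O(\epsilon_N)\sigma^2/N$ and applies Cauchy--Schwarz to get $\E[Z^2\mathbf{1}_{\mathcal{E}^c}] \le \sqrt{\E[Z^4]\,\Pr(\mathcal{E}^c)} = O(\sqrt{\E[Z^4]}/N^{p/2})$, which explains the asymmetric $\sqrt{\E[Z^4]}/N^{p/2}$ term. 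The limiting statement then follows by letting $N\to\infty$ with $p$ fixed so $\epsilon_N\to 0$.

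The main obstacle is the bookkeeping required to pin down the constant in front of $\sigma^2/N$ at exactly $1$ asymptotically: the errors from Hessian concentration (ii), the third-derivative Taylor remainder, and the perturbation of $\widehat{H}^{-1}$ must combine as multiplicative $(1\pm O(\epsilon_N))$ corrections rather than additive constant losses. The matrix Bernstein inequality at the sharp scale $\sqrt{p\log(dN)/N}$ is standard once set up, but ensuring that the $d$-dependence enters only logarithmically (and, in infinite dimensions, via an intrinsic dimension as in Remark~\ref{remark:dimension}) requires an intrinsic-dimension matrix concentration inequality rather than a naive union bound.
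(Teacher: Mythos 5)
Your proposal is correct and follows essentially the same route as the paper: concentration of the empirical gradient and Hessian at $w_*$ via vector/matrix Bernstein, localization of the ERM to a neighborhood of $w_*$ using convexity and the small gradient, the Taylor/first-order-optimality identity $\widehat{w}_N - w_* = -\widehat{H}^{-1}\nabla\widehat{P}_N(w_*)$, and splitting the expectation on the good event, with compactness absorbing the failure event in the upper bound. The only cosmetic difference is your use of Cauchy--Schwarz for $\E[Z^2\mathbf{1}_{\mathcal{E}^c}]$ where the paper truncates at $z_0 = N^{p/2}\sqrt{\E[Z^4]}$ and applies Markov; both yield the same $\sqrt{\E[Z^4]}/N^{p/2}$ term.
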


\begin{remark}[Infinite dimensional setting]
\label{remark:dimension}
Define $M= \nabla^2 \psi(w_*)-\nabla^2 P(w_*)$ and $\tilde d =
\frac{\tr(\E M^2)}{\lambda_{\max}(\E M^2)}$, which we assume to be
finite. Here can replace $d$ with $\tilde d$ in the theorems. See
Lemma~\ref{lemma:matrix-bernstein}.
\end{remark}

\begin{remark}[Compactness of $\mathcal{S}$]
\label{remark:compact}
The lower bound holds even if $\mathcal{S}$ is not compact. For the
upper bound, the proof technique uses the compactness of
$\mathcal{S}$ to bound the contribution to the expected regret due
to a (low probability) failure event that the ERM may not lie in the
ball $B$ (or even the interior of $\mathcal{S}$).
If $P$ is regularized then this last term can be improved, as
$\mathcal{S}$ need not be compact.
\end{remark}

The basic idea of the proof follows that of~\citet{HKZ_regression},
along with various arguments based on Taylor's theorem.

\begin{proof}
Throughout the proof use $\widehat w_N$  to denote the ERM $\widehat w^{\textrm{ERM}}_{N}$.
Define:
\[
\widehat P_N(w) = \frac{1}{N} \sum_{i=1}^N \psi_i(w)
\]
which is convex as it is the average of convex functions.

Throughout the proof we take $t=c p \log ( d N)$ in the tail probability
bounds in Appendix~\ref{appendix:tail} (for some universal constant $c$). This implies a
probability of error less than $\frac{1}{N^p}$.

For all $w \in B$, the empirical function $\nabla^2 \widehat P_N(w) $
is $L_3$-Lipschitz. In Lemma~\ref{lemma:matrix-bernstein} in
Appendix~\ref{appendix:tail}, we may take $v \leq 2 \sqrt{L_2}$ (as all
eigenvalues of of $\nabla^2 P(w_*)$ are one, under the
choice of norm).  Using Lemma~\ref{lemma:matrix-bernstein} in
Appendix~\ref{appendix:tail}, for $w \in B$, we have:
\begin{align}
\|\nabla^2 \widehat P_N(w) - \nabla^2 P(w_*)\|_*
& \leq
\|\nabla^2 \widehat P_N(w)-\nabla^2 \widehat P_N(w_*)\|_*
+\|\nabla^2 \widehat P_N(w_*)-\nabla^2 P(w_*)\|_* \nonumber\\
& \leq
L_3 \|w-w_*\|_{\nabla^2  P(w_*)}
+c \sqrt{\frac{L_2 p \log  d N}{N}} \label{eq:local_lipschitz}
\end{align}
for some (other) universal constant $c$. Now we seek to ensure that $\widehat
P_N(w)$ is a constant spectral approximation to $\nabla^2 P(w_*)$.
By choosing a sufficiently smaller ball $B_1$ (choose $B_1$ to have
radius of $\min\{1/(10 L_3),\textrm{diameter}(B)\}$), the first term can be made
small for $w\in B_1$.
Also, for sufficiently large $N$, the second term can be made
arbitrarily small (smaller than $1/10$), which occurs if $\sqrt{\frac{p\log
d N}{N}}\leq \frac{c'}{\sqrt{L_2}}$.
Hence, for such large enough $N$, we have for $w \in B_1$:
\begin{equation}\label{eq:ball_hessian}
\frac{1}{2} \nabla^2 \widehat P_N(w) \preceq \nabla^2 P(w_*) \preceq 2 \nabla^2 \widehat P_N(w)
\end{equation}
Suppose $N$ is at least this large from now on.

Now let us show that $\widehat w_N \in B_1$, with high probability,
for $N$ sufficiently large.
By Taylor's theorem, for all $w$ in the interior of $\mathcal{S}$, there exists a $\tilde w$, between
$w_*$ and $w$, such that:
\[
\widehat P_N(w) = \widehat P_N(w_*)
+ \nabla \widehat P_N(w_*)^\top (w-w_*) + \frac{1}{2}(w-w_*)^\top  \nabla^2 \widehat P_N(\tilde w) (w-w_*)
\]
Hence, for all $w\in B_1$ and if Equation~\ref{eq:ball_hessian} holds,
\begin{eqnarray*}
\widehat P_N(w) - \widehat P_N(w_*) &=& \nabla \widehat P_N(w_*)^\top (w-w_*)
+\frac{1}{2}\|w-w_* \|^2_{\nabla^2 P(\tilde w)}\\
&\geq& \nabla \widehat P_N(w_*)^\top (w-w_*)
+\frac{1}{4}\|w-w_* \|^2_{\nabla^2 P(w_*)}\\
&\geq&
\|w-w_*\|_{\nabla^2 P(w_*)} \left(-\|\nabla \widehat P_N(w_*)\|_{(\nabla^2
P(w_*))^{-1}} + \frac{1}{4}\|w-w_* \|_{\nabla^2 P(w_*)}\right)
\end{eqnarray*}
Observe that if the right hand side is positive for some $w\in B_1$,
then $w$ is not a local minimum. Also, since $\|\nabla \widehat
P_N(w_*)\| \goesto{} 0$, for a sufficiently small value of $\|\nabla
\widehat P_N(w_*)\| $, all points on the boundary of $B_1$ will have
values greater than that of $w_*$. Hence, we must have a local minimum
of $\widehat P_N(w)$ that is strictly inside $B_1$ (for $N$ large
enough). We can ensure this local minimum condition is achieved by
choosing an $N$ large enough so that $\sqrt{\frac{p\log N}{N}}\leq c'
\min\left\{\frac{1}{L_1L_3},\frac{\textrm{diameter}(B)}{L_1}\right\}$,
using Lemma~\ref{lemma:vector-bernstein} (and our bound on the
diameter of $B_1$).  By convexity, we have that this is the global
minimum, $\widehat w_N$, and so $\widehat w_N \in B_1$ for $N$ large
enough. Assume now that $N$ is this large from here on.

For the ERM, $0 = \nabla \widehat P_N(\widehat w_N) $. Again, by
Taylor's theorem if $\widehat w_N$ is an interior point, we have:
\[
0 = \nabla \widehat P_N(\widehat w_N) = \nabla \widehat P_N(w_*) +  \nabla^2 \widehat P_N(\tilde w_N) (\widehat w_N-w_*)
\]
for some $\tilde w_N$ between $w_*$ and $\widehat w_N$.
Now observe that $\tilde w_N$ is in $B_1$ (since, for $N$ large enough,
$\widehat w_N \in B_1$).
Thus,
\begin{equation}\label{eq:ERM_definition_taylor}
\widehat w_N -w_* = (\nabla^2 \widehat P_N(\tilde w_N))^{-1} \nabla \widehat P_N(w_*)
\end{equation}
where the invertibility is guaranteed by
Equation~\ref{eq:ball_hessian} and the positive definiteness of $\nabla  P(w_*)$.
Using Lemma~\ref{lemma:vector-bernstein} in Appendix~\ref{appendix:tail},
\begin{align}\label{eq:ball_rate}
& \|\widehat w_N -w_*\|_{\nabla^2 P(w_*)} \nonumber \\
\leq & \|(\nabla^2  P(w_*))^{1/2} (\nabla^2 \widehat P_N(\tilde w_N))^{-1} (\nabla^2  P(w_*))^{1/2}\|
\|\nabla \widehat P_N(w_*)\|_{(\nabla^2  P(w_*))^{-1}}
\leq
c L_1 \sqrt{\frac{p \log d N}{N}}
\end{align}
for some universal constant $c$.

Again, by Taylor's theorem, we have that for some $\tilde z_N $:
\[
P(\widehat w_N) - P(w_*) = \frac{1}{2} (\widehat w_N-w_*)^\top  \nabla^2 P(\tilde z_N) (\widehat w_N-w_*)
\]
where $\tilde z_N $ is between $w_*$ and $\widehat w_N$.

Observe that both $\tilde w_N$ and $\tilde z_N$ are between $\widehat
w_N$ and $w_*$ , which implies $\tilde w_N \rightarrow w_*$ and $\tilde z_N \rightarrow
w_*$ since $\widehat w_N \rightarrow w_* $. By
Equations~\ref{eq:local_lipschitz} and ~\ref{eq:ball_rate} (and the
tail inequalities in Appendix~\ref{appendix:tail}),
\begin{align*}
\|\nabla^2 \widehat P_N(\tilde w_N) - \nabla^2 P(w_*) \|_* & \leq
c\left(L_1 L_3 + \sqrt{L_2} \right) \sqrt{\frac{p \log d N}{N}} \\
\|\nabla^2 P_N(\tilde z_N) - \nabla^2 P(w_*) \|_* & \leq
L_3 \|\tilde z_N - w_* \|_{\nabla^2 P(w_*)}
\leq
c L_1 L_3 \sqrt{\frac{p \log d N}{N}}
\end{align*}

Define:
\[
\eps_N = c\left(L_1 L_3 + \sqrt{L_2} \right) \sqrt{\frac{p \log d N}{N}}
\]
Here the universal constant $c$ is chosen so that:
\[
\left(1-\eps_N\right) \nabla^2 P(w_*)
\preceq
\nabla^2 P(\tilde z_N)
\preceq
\left(1+\eps_N\right) \nabla^2 P(w_*)
\]
and
\[
\left(1-\eps_N\right) \nabla^2 P(w_*)
\preceq
\nabla^2 \widehat P_N(\tilde w_N)
\preceq
\left(1+\eps_N\right) \nabla^2 P(w_*)
\]
(using standard
matrix perturbation results).

Define:
\begin{eqnarray*}
M_{1,N} &=& (\nabla^2 P(w_*))^{1/2}
(\nabla^2 \widehat P_N(\tilde w_N))^{-1} (\nabla^2 P(w_*))^{1/2} \\
M_{2,N} &=& (\nabla^2 P(w_*))^{-1/2} \nabla^2 P(\tilde z_N) (\nabla^2 P(w_*))^{-1/2}\,
\end{eqnarray*}
For a lower bound, observe that:
\begin{eqnarray*}
P(\widehat w_N) - P(w_*) & \geq & \frac{1}{2} \lambda_{\min}(M_{2,N})
\left\|\widehat w_N-w_*\right\|^2_{\nabla^2 P(w_*)} \\
& = & \frac{1}{2} \lambda_{\min}(M_{2,N})
\left\|\nabla^2 \widehat P_N(\tilde w_N)(\widehat w_N-w_*)\right\|^2_{(\nabla^2 \widehat P_N(\tilde w_N))^{-1}\nabla^2 P(w_*) (\nabla^2 \widehat P_N(\tilde w_N))^{-1}} \\
& \geq & \frac{1}{2} (\lambda_{\min}(M_{1,N}))^2 \lambda_{\min}(M_{2,N})
\left\|\nabla^2 \widehat P_N(\tilde w_N)(\widehat w_N-w_*)\right\|^2_{(\nabla^2 P(w_*))^{-1}}\\
& = & \frac{1}{2} (\lambda_{\min}(M_{1,N}))^2 \lambda_{\min}(M_{2,N})
\left\|\nabla \widehat P_N(w_*)\right\|^2_{(\nabla^2 P(w_*))^{-1}}\\
\end{eqnarray*}
where we have used the ERM expression in Equation~\ref{eq:ERM_definition_taylor}.

Let $I(\mathcal{E})$ be the indicator that the desired previous events
hold, which we can ensure with probability greater than
$1-\frac{c}{N^p}$.  We have:
\begin{eqnarray*}
&&\E[ P(\widehat w_N) - P(w_*)]\\
&\geq&\E[ (P(\widehat w_N) - P(w_*)) I(\mathcal{E})]\\
& \geq &
\frac{1}{2}\E\left[
(\lambda_{\min}(M_{1,N}))^2 \lambda_{\min}(M_{2,N})
\left\|\nabla \widehat P_N(w_*)\right\|^2_{(\nabla^2 P(w_*))^{-1}} I(\mathcal{E}) \right]\\
& \geq &
(1-c' \eps_N) \frac{1}{2}
\E\left[ \left\|\nabla \widehat P_N(w_*)\right\|^2_{(\nabla^2 P(w_*))^{-1}}
I(\mathcal{E}) \right]\\
& = &
(1-c' \eps_N) \frac{1}{2}
\E\left[ \left\|\nabla \widehat P_N(w_*)\right\|^2_{(\nabla^2 P(w_*))^{-1}}
\left(1-I(\textrm{not }\mathcal{E}) \right) \right]\\
&= &
(1-c' \eps_N) \left( \sigma^2- \frac{1}{2}
\E\left[ \left\|\nabla \widehat P_N(w_*)\right\|^2_{(\nabla^2 P(w_*))^{-1}}
I(\textrm{not }\mathcal{E}) \right]\right)\\
&\geq &
(1-c '\eps_N) \sigma^2-
\E\left[ \left\|\nabla \widehat P_N(w_*)\right\|^2_{(\nabla^2 P(w_*))^{-1}}
I(\textrm{not }\mathcal{E}) \right]\\
\end{eqnarray*}
(for a universal constant $c'$).
Now define the random variable $Z=\left\|\nabla \widehat
P_N(w_*)\right\|_{(\nabla^2 P(w_*))^{-1}}$.  With a failure event
probability of less than $\frac{1}{2 N^p}$, for any $z_0$, we have:
\begin{align*}
\E\left[ Z^2 I(\textrm{not }\mathcal{E}) \right] & =
\E\left[Z^2 I(\textrm{not }\mathcal{E}) I (Z^2\leq z_0) \right] +
\E\left[Z^2 I(\textrm{not }\mathcal{E}) I (Z^2\geq z_0) \right] \\
& \leq z_0 \E\left[I(\textrm{not }\mathcal{E}) \right] +
\E\left[Z^2 I (Z^2\geq z_0) \right] \\
& \leq \frac{z_0}{2N^p}+
\E\left[Z^2\frac{Z^2}{z_0} \right] \\
& \leq \frac{z_0}{2N^p}+
\frac{\E[Z^4]}{ z_0} \\
& \leq  \frac{\sqrt{\E[Z^4]}}{ N^{p/2}}
\end{align*}
where we have chosen $z_0 = N^{p/2}\sqrt{\E[Z^4]}$.

For an upper bound:
\begin{eqnarray}
\E[ P(\widehat w_N) - P(w_*)]
& = &
\E[ (P(\widehat w_N) - P(w_*)) I(\mathcal{E})]
+ \E[ (P(\widehat w_N) - P(w_*)) I(\textrm{not }\mathcal{E})]
\label{eq:erm_upper}\\
& \leq &
\E[ (P(\widehat w_N) - P(w_*)) I(\mathcal{E})] +
\frac{\max_{w\in
\mathcal{S}} \left(P(w)-P(w_*)\right)}{N^p} \nonumber
\end{eqnarray}
since the probability of $\textrm{not }\mathcal{E}$ is less than $\frac{1}{N^p}$.

For an upper bound of the first term, observe that:
\begin{eqnarray*}
&&\E[ (P(\widehat w_N) - P(w_*)) I(\mathcal{E})]\\
& \leq &
\frac{1}{2}\E\left[
(\lambda_{\max}(M_{1,N}))^2 \lambda_{\max}(M_{2,N})
\left\|\nabla \widehat P_N(w_*)\right\|^2_{\nabla^2 P(w_*)} I(\mathcal{E}) \right]\\
& \leq &
(1+c' \eps_N) \frac{1}{2}
\E\left[ \left\|\nabla \widehat P_N(w_*)\right\|^2_{(\nabla^2 P(w_*))^{-1}}
I(\mathcal{E}) \right]\\
& \leq &
(1+c' \eps_N) \frac{1}{2}\E\left[
\left\|\nabla \widehat P_N(w_*)\right\|^2_{(\nabla^2 P(w_*))^{-1}} \right]\\
& =  &
(1+c' \eps_N) \frac{\sigma^2}{N}
\end{eqnarray*}
This completes the proof (using a different universal constant $c'$ in $\eps_N$).
\end{proof}

\section*{Acknowledgments}

The authors would like to thank Jonathan Kelner, Yin Tat Lee, and Boaz
Barak for helpful discussion. Part of this work was done while RF and
AS were at Microsoft Research, New England, and another part done
while AS was visiting the Simons Institute for the Theory of
Computing, UC Berkeley. This work was partially supported by NSF
awards 0843915 and 1111109, NSF Graduate Research Fellowship (grant
no.\ 1122374).

\bibliographystyle{plainnat}
\bibliography{opt}

\appendix

\section{A weaker smoothness assumption}
\label{sec:weaker}

Instead, of the smoothness Assumption~\ref{ass:main} in
Equation~\ref{equation:smoothness}, we could instead directly assume
that:
\begin{align} \label{equation:av_smoothness}
\bigE {\psi \sim \D} { \| \nabla \psi(w) - \nabla \psi(w_*) \|^2 }
&\leq 2 L \left(P(w)-P(w_*)\right).
\end{align}
Our proofs only use this condition, as well as an upper bound on the Hessian of $P$ at $w_*$. However, we can show that this weaker assumption implies such an upper bound as follows.

\begin{lemma}\label{lem:hessian_opt_bound}
If \eqref{equation:av_smoothness} holds then $  \hess P(w_*) \preceq 2 L \identityMatrix $.
\end{lemma}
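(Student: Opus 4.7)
The plan is to exploit the weaker smoothness condition in the limit $w \to w_*$, so that a second-order Taylor expansion on both sides turns the inequality into a spectral relation on $H \defeq \hess P(w_*)$.

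First I would fix an arbitrary direction $v$ and consider $w = w_* + t v$ for small $t > 0$. Since $w_*$ is a minimizer of $P$ and $P$ is twice differentiable (Assumption~\ref{ass:main}), we have $\grad P(w_*) = 0$, and Taylor expansion gives
\[
P(w_* + t v) - P(w_*) = \tfrac{t^2}{2}\, v^\T H v + o(t^2),
\qquad
\grad P(w_* + t v) = t H v + o(t).
\]
So the right-hand side of \eqref{equation:av_smoothness} is $L t^2 v^\T H v + o(t^2)$.

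Next I would lower-bound the left-hand side by Jensen's inequality applied to the convex function $\norm{\cdot}^2$:
\[
\bigE{\psi \sim \D}{\norm{\grad \psi(w) - \grad \psi(w_*)}^2}
\;\geq\;
\norm{\,\E_{\psi \sim \D}[\grad \psi(w) - \grad \psi(w_*)]\,}^2
=
\norm{\grad P(w)}^2
=
t^2\, v^\T H^2 v + o(t^2),
\]
using $\grad P(w_*) = 0$ in the last step. Combining with \eqref{equation:av_smoothness}, dividing by $t^2$, and letting $t \downarrow 0$ yields the spectral inequality
\[
v^\T H^2 v \;\leq\; L\, v^\T H v
\qquad \text{for every } v.
\]

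Finally, since $H$ is symmetric positive semidefinite (Hessian at a minimizer), diagonalizing $H$ and reading off the above inequality on each eigenvector gives $\lambda_i^2 \leq L \lambda_i$, and hence $\lambda_i \leq L$, for every eigenvalue $\lambda_i$. Therefore $H \preceq L \identityMatrix \preceq 2L \identityMatrix$, as claimed. There is no real obstacle here; the only thing to be careful with is ensuring that the $o(t^2)$ error terms genuinely vanish relative to the leading $t^2$ terms on both sides before passing to the limit, which follows from twice differentiability of $P$ at $w_*$.
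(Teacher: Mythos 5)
Your proof is correct, and it diverges from the paper's after the shared first step. Both arguments open with the same Jensen inequality, lower-bounding the left side of \eqref{equation:av_smoothness} by $\|\nabla P(w)\|^2$ (using $\nabla P(w_*)=0$). From there the paper stays global: it combines $\|\nabla P(w)\|^2 \leq 2L(P(w)-P(w_*))$ with convexity of $P$ and Cauchy--Schwarz to get the gradient bound $\|\nabla P(w)\| \leq 2L\|w-w_*\|$ for \emph{all} $w$, and only then passes to the limit via the directional difference quotient of $\nabla P$ at $w_*$; the Cauchy--Schwarz step is where the factor of $2$ enters. You instead localize immediately, Taylor-expanding both sides at $w_*$ to second order and reading off the spectral inequality $v^\T H^2 v \leq L\, v^\T H v$, which after diagonalizing gives $\lambda_i \leq L$ and hence the strictly sharper conclusion $\nabla^2 P(w_*) \preceq L\,\identityMatrix$ (the paper's Remark~\ref{remark:av_smoothness} explicitly concedes the extra factor of $2$, which your argument shows is unnecessary). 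The trade-offs: your route does not need convexity of $P$ at all, only twice differentiability at $w_*$ (in the Peano sense, which Assumption~\ref{ass:main} provides) and $\nabla P(w_*)=0$; the paper's route produces the globally useful intermediate bound $\|\nabla P(w)\|\leq 2L\|w-w_*\|$ and avoids invoking the second-order Peano expansion of $P$, relying only on the first-order difference quotient of $\nabla P$. Either way the lemma as stated follows.
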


\begin{proof}
First we note that for all $w$, by \eqref{equation:av_smoothness}, the convexity of $P$, and Jensen's inequality, we have:
\begin{equation}
\label{eq:hess_opt_bound:1}
\|\nabla P(w) \|^2 =
\|\nabla P(w) - \nabla P(w_*) \|^2
\leq \E_{\psi \sim D} \| \nabla \psi(w) - \nabla \psi(w_*) \|^2
\leq 2 L (P(w) - P(w_*))
\, .
\end{equation}
Since $P$ is convex we also know that
\begin{equation}
\label{eq:hess_opt_bound:2}
P(w_*) \geq P(w) + \nabla P(w)^\top (w_* - w)
\, .
\end{equation}
Combining \eqref{eq:hess_opt_bound:1} and \eqref{eq:hess_opt_bound:2}  and using Cauchy-Schwarz yields that for all $w$,
\[
\|\nabla P(w) \|^2\leq 2L ( \nabla P(w)^\top (w_* - w) )
\leq 2L \|\nabla P(w) \| \| w_* - w \|
\]
Consequently, for all $w$ we have that
\begin{equation}
\label{eq:hess_opt_bound:3}
\|\nabla P(w) \| \leq 2L \| w_* - w \|,
\end{equation}

Now fix any $\Delta \in \domain$ and let $g : \R \rightarrow \R$ be defined for all $t \in \R$ as
\[
g_\Delta(t) \defeq f(w_* + t \Delta).
\]
By the chain rule we know that
\[
g'(t) = \Delta^\top \nabla P(w_* + t \Delta)
\enspace \text{ and } \enspace
g''(t) = \Delta^\top \nabla^2 P(w_* + t \Delta) \Delta.
\]
Consequently, by definition and the fact that $P$ is twice differentiable at $w_*$ we have
\[
\Delta^\top \nabla^2 P(w_*) \Delta
=
g''(0)
= \lim_{t \rightarrow 0}
\frac{\Delta^\top \nabla P(w_* + t \Delta)
- \Delta^\top \nabla P(w_*)
}
{t}
\enspace.
\]
Applying Cauchy-Schwarz and \eqref{eq:hess_opt_bound:3} yields that
\[
\Delta^\top \nabla^2 P(w_*) \Delta
\leq \lim_{t \rightarrow 0}
\frac{\| \Delta \| \cdot \|\nabla P(w_* + t \Delta) \|}{|t|}
\leq \lim_{t \rightarrow 0} \frac{\| \Delta \| \cdot 2L |t| \| \Delta \|}{|t|}
\leq 2L \| \Delta \|^2
\enspace.
\]
Since $\Delta$ was arbitrary we have the desired result.
\end{proof}

\section{Proofs of Corollaries~\ref{corollary:main} and~\ref{corollary:selfconcordant}}
\label{appendix:instantiations}

Throughout, define:
\[
E_s = \sqrt{\E[\pError{\hat{w}_{N_s}}]}
\]

\begin{proof}\textbf{of Corollary~\ref{corollary:main}.}
From Theorem~\ref{thm:alphaconverge}, we have:
\begin{equation*}
\sqrt{\E[\pError{\tilde w_{s + 1}}]} \leq
\frac{1}{\sqrt{1 - 4\eta }}
\left[\left(\sqrt{\frac{\conditionNumber}{m \eta} + 4\eta } +\sqrt{\conditionNumber\frac{\alpha + 2 \eta }{k}}\right)\sqrt{\E[\pError{\tilde w_s}]}
+ \sqrt{\alpha + 4 \eta }
\frac{\sigma}{\sqrt{k}}\right]
\end{equation*}

Let us first show that:
\begin{eqnarray*}
E_s & \leq \frac{E_{s-1}}{(\sqrt{b})^{p+1}}+
\left(1+\frac{1}{(\sqrt{b})^{(p+1)}}\right) \frac{\alpha\sigma}{\sqrt{ k_{s-1}}}
\end{eqnarray*}
We do this using Theorem~\ref{thm:alphaconverge} and some explicit
calculations as follows.
We shall make use of that for $x\leq 1$,  $\sqrt{1-x} \geq 1-x$, and for $0\leq x\leq
\frac{1}{2}$, $\frac{1}{1-x}\leq 1+x+2x^2 \leq 1+2x$. We have:

\begin{eqnarray*}
\frac{1}{\sqrt{1-4\eta}} & \leq & \frac{1}{\sqrt{1-\frac{1}{45}}} \leq 1.03
\\
\frac{1}{\sqrt{1-4\eta}} & \leq & 1-8\eta \leq 1+\frac{1}{2 b^{p+1}}
\\
\frac{\sqrt{\alpha+2\eta}}{\sqrt{1-4\eta}}
& \leq & \alpha (1+\sqrt{2\eta}) (1+\frac{1}{2 (b^{p+1})} )
\leq \alpha (1+\frac{1}{3 (\sqrt{b})^{p+1}}) (1+\frac{1}{2 (b^{p+1})} )
\leq \alpha \left(1+\frac{1}{(\sqrt{b})^{p+1}}\right)\\
\frac{\sqrt{4\eta+\frac{\conditionNumber}{\eta m}} }
{\sqrt{1-4\eta}}
& \leq & 1.03 \left(\sqrt{\frac{4}{20 (b^{p+1})}+\frac{1}{20 (b^{p+1})} )}\right)
\leq \frac{0.55}{(\sqrt{b})^{p+1}}\\
\frac{\sqrt{\conditionNumber\frac{\alpha + 2 \eta }{k}} }
{\sqrt{1-4\eta}}
& \leq & 1.03 \sqrt{\conditionNumber\frac{\alpha(1+ 1/40)}{k}}
\leq \frac{0.1}{(\sqrt{b})^{p+1}}\\
\frac{\sqrt{\frac{\conditionNumber}{m \eta} + 4\eta } +\sqrt{\conditionNumber\frac{\alpha + 2 \eta }{k}} }
{\sqrt{1-4\eta}}
&\leq & \frac{1}{(\sqrt{b})^{p+1}}
\end{eqnarray*}
This completes the claim, by substitution into Theorem~\ref{thm:alphaconverge}.

We now show:
\begin{eqnarray} \label{eq:err_bound}
E_s & \leq \frac{E_0}{(\sqrt{b})^{(p+1)s}}+
\left(1+\frac{2}{(\sqrt{b})^p}\right) \frac{\alpha\sigma}{\sqrt{ k_{s-1}}}
\, .
\end{eqnarray}
We do this by induction. The claim is true for $s=1$. For the
inductive argument,
\begin{align*}
E_s & \leq \frac{E_{s-1}}{(\sqrt{b})^{p+1}}+
\left(1+\frac{1}{(\sqrt{b})^{(p+1)}}\right) \frac{\alpha \sigma}{\sqrt{ k_{s-1}}}\\
& \leq \frac{E_0}{(\sqrt{b})^{(p+1)s}} +
\frac{1}{(\sqrt{b})^{(p+1)}} \left(1+\frac{2}{(\sqrt{b})^p}\right) \frac{\alpha\sigma}{\sqrt{ k_{s-2}}}
+
\left(1+\frac{1}{(\sqrt{b})^{p+1}}\right) \frac{\alpha\sigma}{\sqrt{
k_{s-1}}}\\
& = \frac{E_0}{(\sqrt{b})^{(p+1)s}} +
\left(1+\frac{2}{(\sqrt{b})^p}\right) \frac{\alpha \sigma}{(\sqrt{b})^{p}\sqrt{ k_{s-1}}}
+
\left(1+\frac{1}{(\sqrt{b})^{p+1}}\right) \frac{\alpha \sigma}{\sqrt{
k_{s-1}}}\\
&\leq \frac{E_0}{(\sqrt{b})^{(p+1)s}} + \left(1+\frac{2}{(\sqrt{b})^p}\right) \frac{\alpha \sigma}{\sqrt{ k_{s-1}}}
\end{align*}
which completes the inductive argument.

We now relate $k_s$ and $\sqrt{b}^{(p+1)s}$ to the sample size $N_s$.
First, observe that $m$ is bounded as:
\[
m = 400 b^{(p+1)^2} \conditionNumber
\leq
20\base^{(p+1)^2+3} \conditionNumber
=
20\base^{p^2+2p+4} \conditionNumber
\]
Using  $s>p^2+6p$,
\[
N_s = \sum_{\tau=1}^s (m+k_\tau)
\leq
s 20\base^{p^2+2p+4} \conditionNumber
+\frac{k_{s-1}}{1-\frac{1}{b}}
=
\frac{sk_s\base^{p^2+p+3}}{\base^s}
+\frac{k_s}{1-\frac{1}{b}}
\leq
(1+\frac{1.1}{b}) k_s
\]
and so:
\begin{equation}\label{eq:err_bound_term2}
\alpha \left(1+\frac{2}{(\sqrt{b})^p}\right) \frac{\sigma}{\sqrt{
k_{s-1}}} \leq
\left(1+\frac{2}{(\sqrt{b})^p}\right) \sqrt{1+\frac{1.1}{b}}
\frac{\alpha \sigma}{\sqrt{N_s}} \leq
\left(1+\frac{4}{b}\right) \frac{\alpha\sigma}{\sqrt{N_s}}
\end{equation}
Also,
\[
(\base^s)^{p+1} =(\base^s)^p \base^s
= \left( k_s \frac{1}{20 \alpha  \base^{p+1}\conditionNumber } \right)^p \base^s
\geq
\left( N_s \frac{1}{20 (1+\frac{1.1}{b}) \alpha  \base^{p+1}\conditionNumber} \right)^p \base^s
\geq
\left( \frac{1}{\alpha \conditionNumber} N_s \right)^p \frac{\base^s}{(\base^{p+5})^p}
\geq
\left( \frac{1}{\alpha \conditionNumber} N_s \right)^p
\]
where we have used that, for $s>p^2+6p$,
$\frac{\base^s}{(\base^{p+5})^p} \geq 1$.
Hence,
\begin{equation}\label{eq:err_bound_term1}
\frac{1}{\base^{s(p+1)}} \leq \frac{1}{\left( \frac{N_s}{\alpha
\conditionNumber} \right)^p } \, .
\end{equation}
The proof is completed substituting
~\eqref{eq:err_bound_term2},~\eqref{eq:err_bound_term1} in ~\eqref{eq:err_bound}.
\end{proof}

\begin{proof}\textbf{ of Corollary~\ref{corollary:selfconcordant}.}
Under the choice of parameters and using
Theorem~\ref{thm:alphaconverge}, we have
$$
E_{s} \le \frac{E_{s-1}}{\sqrt{\base}^{p+1}} + \frac{\sigma}{\sqrt{k_{s-1}}}\sqrt{3\conditionNumber + \frac{1}{\base}}.
$$

On the other hand, suppose $t_0$ is the first time that $k_{t_0} \ge
(M\sigma + 1)^2 400\conditionNumber^2 \base^{2p+3} = (M\sigma+1)^2
k_0$. When $s \ge t_0$, we can use
Theorem~\ref{thm:selfconcordanceconverge}, and under the choice of
parameters we have:
$$
E_{s} \le \frac{E_{s-1}}{\sqrt{\base}^{p+1}} + \frac{\sigma}{\sqrt{k_{s-1}}}\left(1+\frac{1}{\base}\right).
$$

Now we shall prove by induction that
$$
E_s \le \frac{E_{0}}{\sqrt{\base}^{(p+1)s}} + \frac{\sigma}{\sqrt{k_{s-1}}} (1+2/\base)\sqrt{3\conditionNumber+1/\base} \cdot \min\{1,(\sqrt{\base}^{-p(s-t_0)})\} + \frac{\sigma}{\sqrt{k_{s-1}}} (1+2/\base).
$$

Here $E_0$ is the initial error. When $s = 1$
the statement is true. When $s \le t_0$ we use the first
recursion (from Theorem~\ref{thm:alphaconverge}), clearly

\begin{align*}
E_s & \le \frac{E_{s-1}}{\sqrt{\base}^{p+1}} + \frac{\sigma}{\sqrt{k_{s-1}}}\sqrt{3\conditionNumber + \frac{1}{\base}}\\
& \le \left(\frac{E_0}{\sqrt{\base}^{(p+1)(s-1)}} + \frac{\sigma}{\sqrt{k_{s-2}}} (1+2/\base)\sqrt{3\conditionNumber+1/\base} + \frac{\sigma}{\sqrt{k_{s-2}}} (1+2/\base)\right)\cdot \frac{1}{\sqrt{\base}^{p+1}} + \frac{\sigma}{\sqrt{k_{s-1}}}\sqrt{3\conditionNumber + \frac{1}{\base}} \\
& = \frac{E_0}{\sqrt{\base}^{(p+1)s}} + \frac{\sigma}{\sqrt{k_{s-1}}} \sqrt{3\conditionNumber+1/\base} \left(\frac{1+2/\base}{\sqrt{b}^p} + 1\right) + \frac{\sigma}{\sqrt{k_{s-1}}} (1+2/\base) \cdot \frac{1}{\sqrt{b}^p} \\
& \le \frac{E_0}{\sqrt{\base}^{(p+1)s}} + \frac{\sigma}{\sqrt{k_{s-1}}} (1+2/\base)\sqrt{3\conditionNumber+1/\base} + \frac{\sigma}{\sqrt{k_{s-1}}} (1+2/\base).
\end{align*}
Here the second step uses induction hypothesis, and the third step uses the fact that $k_{s-1}/k_{s-2} = \base$ and $(1+2/b)/\sqrt{b}^{p} + 1 \le 1+2/b$.

When $s > t_0$ we can use the second recursion (from Theorem~\ref{thm:selfconcordanceconverge}), now we have
\begin{eqnarray*}
E_{s} & \le &\frac{E_{s-1}}{\sqrt{\base}^{p+1}} + \frac{\sigma}{\sqrt{k_{s-1}}}\sqrt{1 + \frac{1}{\base}}\\
& \le & \left(\frac{E_0}{\sqrt{\base}^{(p+1)(s-1)}} +
\frac{\sigma}{\sqrt{k_{s-2}}}
(1+2/\base)\sqrt{3\conditionNumber+1/\base} \cdot
(\sqrt{\base}^{-p(s-t_0-1)}) + \frac{\sigma}{\sqrt{k_{s-2}}}
(1+2/\base)\right)\frac{1}{\sqrt{\base}^{p+1}}\\
&& +\frac{\sigma}{\sqrt{k_{s-1}}}\left(1 +
\frac{1}{\base}\right) \\
& = & \frac{E_0}{\sqrt{\base}^{(p+1)s}} +
\frac{\sigma}{\sqrt{k_{s-1}}}
(1+2/\base)\sqrt{3\conditionNumber+1/\base} \cdot
(\sqrt{\base}^{-p(s-t_0)})
\\
&& +\frac{\sigma}{\sqrt{k_{s-1}}}
\left( \frac{(1+2/\base)}{\sqrt{\base}^{p}} + 1 +  \frac{1}{\base}\right) \\
& \le & \frac{E_0}{\sqrt{\base}^{(p+1)s}} + \frac{\sigma}{\sqrt{k_{s-1}}} (1+2/\base)\sqrt{3\conditionNumber+1/\base} \cdot (\sqrt{\base}^{-p(s-t_0)}) + \frac{\sigma}{\sqrt{k_{s-1}}} (1+2/\base).
\end{eqnarray*}
Again, the second step uses induction hypothesis, and final step uses $k_{s-1}/k_{s-2} = \base$ and $(1+2/b)/\sqrt{b}^{p} + 1+1/b \le 1+2/b$. This concludes the induction.

Finally, we need to relate the values $k_s$, $\sqrt{b}^{(p+1)(s+1)}$, $\sqrt{b}^{(p+1)(s-t_0)}$ with $N_s$.

First, it is clear that $k_s \ge \base m$ for all $s$, therefore
$$
N_s \le (1+1/\base) \sum_{t=0}^{s-1} k_s \le k_{s-1} (1-1/\base)^{-1}(1+1/\base) \le k_{s-1}(1+3/\base) \le 2k_{s-1}.
$$

Therefore we can substitute $1/\sqrt{k_{s-1}}$ with $\sqrt{1+3/b}/\sqrt{N_s}$.
Also, we know $N_s/2k_0 \le b^{s-1}$, therefore $\sqrt{b}^{-(p+1)s} \le \sqrt{b}^{-(p+1)(s-1)} \le (N/2k_0)^{-(p+1)/2}$.

Finally, since $k$ increase by a factor of $\base$, we know $k_{t_0}$
is at most $\base (M\sigma+1)^2k_0$. Therefore
$\frac{N_s}{2(M\sigma+1)^2k_0} \le bN_s/2k_{t_0}\le k_s/k_{t_0} =
b^{s-t_0}$, which means $\sqrt{b}^{-p(s-t_0)} \le
\left(\frac{N_s}{2(M\sigma+1)^2k_0}\right)^{p/2}$.
\end{proof}

\section{Self-concordance for logistic regression}

The following straightforward lemma, to handle self-concordance for
logistic regression, is included for completeness (see \citet{bach2010self} for a more detailed
treatment for analyzing the self-concordance of logistic regression).

\begin{lemma}
\label{lemma:selfcorcordantbound_logistic}
(Self-Concordance for Logistic Regression) For the logistic regression
case (as defined in Section~\ref{sec:applications}), define $\selfConcordance=\alpha\E[\|X\|^3_{(\nabla^2 P(w_*))^{-1}}] $, then
\[
P(w_*) \ge P(w_t) + (w_*-w_t)^\top \nabla P(w_t) +
\frac{\|w_t-w_*\|^2_{\nabla^2
P(w_*)}}{2(1+\selfConcordance\|w_t-w_*\|_{\nabla^2 P(w_*)})^2} \, .
\]
\end{lemma}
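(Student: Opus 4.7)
The plan is to restrict $P$ to the line segment between $w_*$ and $w_t$, establish a one-dimensional self-concordance inequality for the restriction, and then reuse the machinery of Lemma~\ref{lemma:selfcorcordantbound} verbatim. Let $\Delta \defeq w_t - w_*$, $g(s) \defeq P(w_* + s \Delta)$ for $s \in [0,1]$, and $D \defeq \|\Delta\|_{\nabla^2 P(w_*)}$. Taylor's theorem with the integral remainder (equivalently, two integrations by parts) gives
\[
P(w_*) - P(w_t) - (w_* - w_t)^\top \nabla P(w_t) \;=\; \int_0^1 s\, g''(s)\, ds,
\]
so the lemma reduces to lower bounding this integral by $D^2 / [2(1 + \selfConcordance D)^2]$.

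The main step will be to verify a one-dimensional self-concordance inequality $|g'''(s)| \le \selfConcordance\, g''(s)^{3/2}$. For the logistic loss, direct differentiation of the per-sample restriction $f_X(s) = -\log\Pr(Y \mid w_* + s\Delta, X) + \lambda \|w_* + s\Delta\|^2$ yields
\[
f_X''(s) = \sigma_s(1-\sigma_s)(X^\top\Delta)^2 + 2\lambda\|\Delta\|^2, \qquad f_X'''(s) = \sigma_s(1-\sigma_s)(1-2\sigma_s)(X^\top\Delta)^3,
\]
where $\sigma_s \defeq \sigma(X^\top(w_* + s\Delta))$. Taking expectations and using $|1-2\sigma_s| \le 1$, $\sigma_s(1-\sigma_s) \le 1/4$, together with the Cauchy--Schwarz bound $|X^\top \Delta| \le \|X\|_{(\nabla^2 P(w_*))^{-1}}\, D$, we obtain
\[
|g'''(s)| \;\le\; \tfrac{1}{4}\,\E\bigl[\|X\|^3_{(\nabla^2 P(w_*))^{-1}}\bigr]\, D^3.
\]
Invoking the $\alpha$-bounded Hessian hypothesis $\nabla^2 P(w_*) \preceq \alpha\, \nabla^2 P(w_s)$ converts $D^2 \le \alpha\, g''(s)$, and hence $D^3 \le \alpha^{3/2} g''(s)^{3/2}$, producing the one-dimensional self-concordance inequality with parameter of the same order as $\selfConcordance = \alpha\,\E[\|X\|^3_{(\nabla^2 P(w_*))^{-1}}]$.

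With self-concordance of $g$ in hand, the remainder follows by applying the proof of Lemma~\ref{lemma:selfcorcordantbound} verbatim to $g$: the integral form of self-concordance gives $g''(s) \ge g''(0)/(1 + s \tfrac{\selfConcordance}{2}\sqrt{g''(0)})^2$, the conversion from the $\nabla^2 P(w_t)$-norm to the $\nabla^2 P(w_*)$-norm proceeds as in that proof, and the elementary inequality $x - \ln(1+x) \ge x^2/[2(1+x)]$ completes the bound. The main obstacle I anticipate is bookkeeping the universal constant in the self-concordance parameter: the natural estimate above gives $\tfrac{\alpha^{3/2}}{4}\E[\|X\|^3_{(\nabla^2 P(w_*))^{-1}}]$ rather than exactly the stated $\alpha\,\E[\|X\|^3_{(\nabla^2 P(w_*))^{-1}}]$. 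Since the desired conclusion is monotone decreasing in the self-concordance parameter, any $\selfConcordance$ at least the natural estimate suffices; matching the stated form exactly either requires a slightly sharper expectation estimate (as in \citet{bach2010self}) or a harmless adjustment of the universal constant in the final inequality.
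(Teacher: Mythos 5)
Your route is genuinely different from the paper's, and it has a quantitative gap that is not, as you suggest, a matter of a universal constant. Following your argument: $|g'''(s)|\le \tfrac14\,\E[\|X\|^3_{(\nabla^2 P(w_*))^{-1}}]\,D^3$ and $g''(s)\ge D^2/\alpha$, so converting $D^3\le \alpha^{3/2}g''(s)^{3/2}$ yields one-dimensional self-concordance with parameter $M'=\tfrac{\alpha^{3/2}}{4}\E[\|X\|^3_{(\nabla^2 P(w_*))^{-1}}]$. Since the conclusion is monotone in the parameter, this implies the stated bound only when $M'\le \selfConcordance=\alpha\,\E[\|X\|^3_{(\nabla^2 P(w_*))^{-1}}]$, i.e.\ only when $\alpha\le 16$. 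For larger $\alpha$ your bound is weaker by a factor $\sqrt{\alpha}/4$ inside the denominator, and since $\selfConcordance$ is fed into Theorem~\ref{thm:selfconcordanceconverge} and Corollary~\ref{corollary:selfconcordant} (e.g.\ through $k_0$ and the $\selfConcordance\sigma\conditionNumber$ terms), this is a real loss, not one absorbable into a universal constant. The source of the loss is structural: forcing the third-derivative bound into the form $|g'''|\le M g''^{3/2}$ requires paying $\alpha^{3/2}$ to pass from the $\nabla^2P(w_*)$-norm to $g''(s)$, whereas the paper pays only a single power of $\alpha$.

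The paper avoids this by never establishing genuine self-concordance. It writes the second-order Taylor remainder as $\tfrac12\Delta^\top\nabla^2P(z_1)\Delta$, expands $\nabla^2P(z_1)$ once more around $w_*$, and bounds the correction by $\widetilde M D^3$ with $\widetilde M=\E[\|X\|^3_{(\nabla^2 P(w_*))^{-1}}]$ and \emph{no} factor of $\alpha$; separately, the $\alpha$-bounded-Hessian definition gives the uniform lower bound $\tfrac{1}{\alpha}D^2$. This yields $\Delta^\top\nabla^2P(z_1)\Delta\ge D^2\max\{1/\alpha,\,1-\widetilde M D\}$, and the elementary inequality $\max\{1/\alpha,1-Z\}\ge (1+\alpha Z)^{-2}$ delivers the stated form with $\selfConcordance=\alpha\widetilde M$. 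You can repair your argument inside your own one-dimensional framework by the same device: instead of invoking Lemma~\ref{lemma:selfcorcordantbound}, integrate your third-derivative bound to get $g''(s)\ge g''(0)-\tfrac{s}{4}\widetilde M D^3\ge D^2(1-\widetilde M D)$, take the max with $g''(s)\ge D^2/\alpha$, and apply the same elementary inequality to $\int_0^1 s\,g''(s)\,ds$. As written, though, the proposal proves a weaker statement than the lemma claims whenever $\alpha>16$.
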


\begin{proof}
For $\widetilde M=\E[\|X\|^3_{(\nabla^2 P(w_*))^{-1}}] $, first let us
show that:
\begin{equation}\label{eq:logistic_taylor}
P(w_*) \ge P(w_t) + (w_*-w_t)^\top \nabla P(w_t) +
\frac{1}{2}\|w_t-w_*\|^2_{\nabla^2
P(w_*)}\max \left\{\frac{1}{\alpha},1-\widetilde M\|w_t-w_*\|_{\nabla^2 P(w_*)} \right\}
\end{equation}
By Taylor's theorem,
\[
P(w_*) = P(w_t) + (w_*-w_t)^\top \nabla P(w_t) + \frac{1}{2}(w_t-w_*)^\top \nabla^2 P(z_1) (w_t-w_*)
\]
where $z_1$ is between $w_*$ and $w_t$. Again, by Taylor's theorem,
\[
\nabla^2 P(z_1) = \nabla^2 P(w_*) + \nabla^3 P(z_2) (z_1-w_*)
\]
where $z_2$ is between $w_*$ and $z_1$.

By taking derivatives, we have that:
\begin{align*}
& (w_t-w_*)^\top \nabla^2 P(z_1) (w_t-w_*) \\
= & (w_t-w_*)^\top
\nabla^2 P(w_*) (w_t-w_*)  \\
&+ \E[\Pr(Y|z_2,X)
(1-\Pr(Y|z_2,X))(1-2\Pr(Y|z_2,X)) ((w_t-w_*)^\top X)^2 (z_1-w_*)^\top X]\\
\geq &\|w_t-w_*\|^2_{\nabla^2 P(w_*)}
-\|w_t-w_*\|^2 _{\nabla^2 P(w_*)} \|z_1-w_*\| _{\nabla^2 P(w_*)} \E[\|X\|^3_{(\nabla^2
P(w_*))^{-1}}]\\
\geq & \|w_t-w_*\|^2_{\nabla^2 P(w_*)}
-\|w_t-w_*\|^3 _{\nabla^2 P(w_*)}
\E[\|X\|^3_{(\nabla^2 P(w_*))^{-1}}]\\
=& \|w_t-w_*\|^2_{\nabla^2 P(w_*)}\left(1-\widetilde M\|w_t-w_*\|_{\nabla^2 P(w_*)})\right)
\end{align*}
Using the definition of $\alpha$, shows that
Equation~\ref{eq:logistic_taylor} holds.

Now for $Z>0$, consider the quantity $\max\{\frac{1}{\alpha},1-Z\}$, and
observe that the $1-Z$ term achieves the max when $1-Z\geq \frac{1}{\alpha}$ or equivalently when $-1+\alpha
-\alpha Z \geq 0$. Hence,
\begin{eqnarray*}
\max\left\{\frac{1}{\alpha},1-Z\right\} & = &
\max\left\{\frac{1}{\alpha},\frac{(1-Z)(1+\alpha Z)}{1+\alpha Z}\right\}\\
& = &
\max\left\{\frac{1}{\alpha},\frac{1-Z+\alpha Z-\alpha Z^2}{1+\alpha  Z}\right\}\\
& = &
\max\left\{\frac{1}{\alpha},\frac{1+Z(-1+\alpha-\alpha Z)}{1+\alpha Z}\right\}\\
& \geq &
\max\left\{\frac{1}{\alpha},\frac{1}{1+\alpha Z}\right\}\\
& \geq &
\frac{1}{1+\alpha Z}\\
& \geq &
\frac{1}{(1+\alpha Z)^2}\\
\end{eqnarray*}
Using this completes the proof.
\end{proof}

\section{Probability tail inequalities}
\label{appendix:tail}

The following probability tail inequalities are used in our analysis.

The first tail inequality is for sums of bounded random vectors; it
is a standard application of Bernstein's inequality.
\begin{lemma}[Vector Bernstein bound; \eg see
\citet{HKZ_vector}]
\label{lemma:vector-bernstein}
Let $x_1,x_2,\dotsc,x_n$ be independent random vectors such that
\[
\sum_{i=1}^n \E[ \|x_i\|^2 ] \leq v
\quad \text{and} \quad
\|x_i\| \leq r
\]
for all $i=1,2,\dotsc,n$, almost surely.
Let $s := x_1 + x_2 + \dotsb + x_n$.
For all $t > 0$,
\[
\Pr\left[ \|s\| > \sqrt{v} (1 + \sqrt{8t}) + (4/3) r t
\right] \leq e^{-t}
\]
\end{lemma}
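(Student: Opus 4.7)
The plan is to combine two ingredients: a bound on $\E\|s\|$, and a dimension-free concentration inequality for $\|s\|$ around its mean. For the first ingredient, Jensen's inequality gives $\E\|s\| \leq \sqrt{\E\|s\|^2}$. Under the (implicit) zero-mean assumption on the $x_i$ — which is essentially without loss of generality — independence then yields $\E\|s\|^2 = \sum_i \E\|x_i\|^2 \leq v$, so that $\E\|s\| \leq \sqrt{v}$.

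For the concentration piece, I would apply a Bernstein-type martingale inequality (Freedman's) to the Doob martingale $M_k := \E[\|s\| \mid x_1, \dots, x_k]$. Two inputs are needed: (i) bounded differences $|M_k - M_{k-1}| \leq 2r$, which follows because $\|s\|$ is $1$-Lipschitz in each coordinate by the triangle inequality, so the conditional expectation changes by at most $2r$ when $x_k$ varies over its range; and (ii) a conditional variance bound $\sum_k \mathrm{Var}(M_k \mid \mathcal{F}_{k-1}) \leq \sum_k \E\|x_k\|^2 \leq v$, which one gets from the $1$-Lipschitz property of $g(x_k) := \E[\|s\| \mid x_1,\dots,x_k]$ via $\mathrm{Var}(g(x_k)) \leq \E\|x_k - 0\|^2 = \E\|x_k\|^2$. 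Freedman's inequality then yields
\[
\Pr[\,\|s\| - \E\|s\| > u\,] \leq \exp\!\left(-\frac{u^2}{C_1 v + C_2 r u}\right)
\]
for absolute constants $C_1, C_2$.

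Solving this quadratic in $u$ via the elementary bound $\sqrt{a+b} \leq \sqrt{a} + \sqrt{b}$ shows that choosing $u = c_1\sqrt{v t} + c_2 r t$ is sufficient to make the right-hand side at most $e^{-t}$. Adding $\E\|s\| \leq \sqrt{v}$ then recovers the stated form $\sqrt{v}(1 + \sqrt{8t}) + (4/3)\, r\, t$, with the particular constants $\sqrt{8}$ and $4/3$ arising from the standard choice of scalar Bernstein constants in Freedman's inequality once the factor of $2$ in the difference bound is tracked.

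The main obstacle is keeping the concentration step dimension-free. A naive approach via $\|s\| = \sup_{\|u\| \leq 1} \langle u, s\rangle$ combined with a union bound over an $\varepsilon$-net of the unit ball would introduce a $\log \dim$ factor, which is unacceptable here (and indeed the whole point of the vector Bernstein form). The Doob/Lipschitz route circumvents this by leveraging the triangle inequality directly, exploiting the Hilbert-space structure without ever passing to a basis. Alternative dimension-free routes that lead to the same form are Pinelis' inequality for Banach-space-valued martingales, or Bousquet's version of Talagrand's empirical-process concentration; any of these yields an inequality of the same shape, with only the absolute constants differing.
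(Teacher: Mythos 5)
The paper never proves this lemma: it is imported verbatim from \citet{HKZ_vector}, with the appendix merely remarking that it is ``a standard application of Bernstein's inequality,'' so there is no in-paper argument to compare yours against. Your sketch is a correct, self-contained derivation by one of the standard dimension-free routes. The two inputs to Freedman's inequality both check out: the Doob martingale $M_k=\E[\|s\|\mid x_1,\dots,x_k]$ has increments bounded by $2r$ because $\|s\|$ is $1$-Lipschitz in each argument, and the conditional variance bound $\mathrm{Var}(M_k\mid \mathcal{F}_{k-1})\le \E\bigl[(g(x_k)-g(0))^2\bigr]\le \E\|x_k\|^2$ is legitimate since the variance is the minimal centered second moment. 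With increment bound $2r$, Freedman yields $\exp\bigl(-u^2/(2v+\tfrac{4}{3}ru)\bigr)$, for which $u=\sqrt{2vt}+\tfrac{4}{3}rt$ already suffices; since $\sqrt{2vt}\le\sqrt{v}\cdot\sqrt{8t}$, the stated constants follow with slack, so your vagueness about tracking them is harmless. You are also right that a zero-mean hypothesis is tacitly required~-- without it the conclusion is false (take every $x_i$ equal to a fixed vector of norm $r$), and it does hold in the paper's one application, to $\nabla\widehat P_N(w_*)=\frac1N\sum_i\nabla\psi_i(w_*)$. The only quibble: ``essentially without loss of generality'' is too quick, since centering would alter $v$ and $r$ by constant factors; the honest fix is simply to read $\E x_i=0$ into the hypotheses.
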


The next tail inequality concerns the spectral accuracy of an empirical
second moment matrix, where we do not assume the dimension is finite.
\begin{lemma}[Infinite Dimensional Matrix Bernstein bound; \citet{HKZ_matrix}]
\label{lemma:matrix-bernstein}
Let $X$ be a random matrix, and $r > 0$, $v > 0$, and $\tilde d >
0$ be such that, almost surely,
\begin{gather*}
\E[X] = 0 , \quad
\lambda_{\max}[X] \leq r , \quad
\lambda_{\max}[\E[X^2]] =  v , \quad
\tr(\E[X^2]) = v \tilde d
.
\end{gather*}
Define $\tilde d$ as the intrinsic dimension.
If $X_1,X_2,\dotsc,X_n$ are independent copies of $X$, then for any $t >
0$,
\begin{equation*}
\Pr\left[
\lambda_{\max}\left[ \frac1n \sum_{i=1}^n X_i \right]
> \sqrt{\frac{2vt}{n}} + \frac{rt}{3n}
\right]
\leq \tilde d t (e^t - t - 1)^{-1}
.
\end{equation*}
If $t \geq 2.6$, then $t (e^t - t - 1)^{-1} \leq e^{-t/2}$.
\end{lemma}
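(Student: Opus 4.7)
The plan is to follow the matrix Laplace transform framework of Ahlswede--Winter and Tropp, specialized with the intrinsic dimension refinement (this is the technique behind the cited \citet{HKZ_matrix} bound, which is a reformulation of Tropp's intrinsic-dimension matrix Bernstein inequality). Let $S = \frac{1}{n}\sum_{i=1}^n X_i$ and $V = \E[X^2]$, so that $\E[\sum_i X_i^2] = nV$ with $\lambda_{\max}(V) = v$ and $\tr(V) = v\tilde d$.

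\textbf{Step 1 (Laplace transform / Chernoff).} For any $\theta > 0$ and threshold $u > 0$, I will use the matrix Markov inequality in the form
\[
\Pr\bigl[\lambda_{\max}(S) > u\bigr]
\;\le\; e^{-\theta u}\,\E\bigl[\tr \exp\bigl(\theta S\bigr)\bigr].
\]
\textbf{Step 2 (Lieb/subadditivity of CGFs).} Using Lieb's concavity theorem (equivalently, the Tropp subadditivity lemma for matrix cumulant generating functions), the expected trace-exponential factorizes: writing $\theta = n\vartheta$,
\[
\E\bigl[\tr\exp(\theta S)\bigr]
\;\le\;
\tr\exp\!\Bigl(\sum_{i=1}^n \log \E \bigl[\exp(\vartheta X_i)\bigr]\Bigr).
\]
\textbf{Step 3 (Bernstein moment estimate).} Using $\lambda_{\max}(X) \le r$ a.s., $\E[X]=0$, and the scalar inequality $e^x - 1 - x \le \tfrac{x^2/2}{1 - |x|/3}$ extended to the matrix setting via the transfer principle, I get
\[
\log \E\bigl[\exp(\vartheta X_i)\bigr]
\;\preceq\;
g(\vartheta)\, V,
\qquad
g(\vartheta) \defeq \frac{\vartheta^2/2}{1 - r\vartheta/3},
\]
valid for $0 < \vartheta < 3/r$. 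Plugging back in gives
\[
\E\bigl[\tr\exp(\theta S)\bigr] \;\le\; \tr \exp\!\bigl(n\,g(\vartheta)\,V\bigr).
\]

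\textbf{Step 4 (Intrinsic dimension trace bound).} The key refinement is to avoid the crude estimate $\tr\exp(A) \le d\cdot e^{\lambda_{\max}(A)}$. Instead, since $A = n g(\vartheta) V$ is PSD with $\tr(A)/\lambda_{\max}(A) = \tilde d$, I use the trace inequality
\[
\tr\exp(A) \;-\; \dim(A) \;\le\; \tilde d\,\bigl(e^{\lambda_{\max}(A)} - 1\bigr),
\]
or, in a more convenient form tailored to this problem, a bound of the shape $\tr\exp(A) \le \tilde d\,\phi(\lambda_{\max}(A))$ with $\phi(x) = e^x$ applied carefully to the non-constant part. Combined with the Markov step, this produces a tail bound of the form
\[
\Pr\bigl[\lambda_{\max}(S) > u\bigr] \;\le\; \tilde d\,\cdot\,\inf_{0<\vartheta<3/r}\, \exp\!\bigl(-\theta u + ng(\vartheta) v\bigr)\,\cdot\,\text{(correction factor)},
\]
where the correction factor $t(e^t - t - 1)^{-1}$ emerges from tracking the $\psi(x)=e^x-x-1$ form that the intrinsic-dimension bound naturally produces (rather than the simpler $e^x$).

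\textbf{Step 5 (Optimize).} Setting $u = \sqrt{2vt/n} + rt/(3n)$ and choosing $\vartheta = \frac{1}{r/3 + \sqrt{nv/(2t)}}$ (the standard Bernstein optimization) makes the exponent in Step~4 evaluate to $-t$, yielding the claimed bound $\tilde d\, t(e^t - t - 1)^{-1}$. Finally, the tail simplification for $t \ge 2.6$ follows from a one-variable calculus check that $\log\!\bigl(t/(e^t - t - 1)\bigr) + t/2 \le 0$ on $[2.6, \infty)$, verified by evaluating at $t=2.6$ and checking monotonicity of the derivative.

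The main obstacle is Step~4: delivering the exact constant prefactor $\tilde d \,t(e^t-t-1)^{-1}$ (as opposed to a cruder $\tilde d \,e^{-t}$-type bound times a constant) requires using Tropp's intrinsic-dimension trace inequality with $\psi(x)=e^x-x-1$ rather than $e^x$, and carefully matching the matrix CGF bound in Step~3 to the form that the $\psi$ inequality wants. Everything else is a routine instance of the Ahlswede--Winter/Tropp recipe.
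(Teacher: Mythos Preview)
The paper does not prove this lemma at all: it is stated in Appendix~\ref{appendix:tail} as a citation from \citet{HKZ_matrix}, with no accompanying argument. So there is no ``paper's own proof'' to compare against.

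That said, your sketch is the correct route and is precisely the argument underlying the cited result. Steps~1--3 (matrix Chernoff, Lieb subadditivity, and the Bernstein CGF bound $\log \E e^{\vartheta X} \preceq \tfrac{\vartheta^2/2}{1-r\vartheta/3}\,V$) are exactly as in Tropp's framework. Your identification of Step~4 as the crux is right: the intrinsic-dimension refinement replaces the crude $\tr e^{A} \le d\, e^{\lambda_{\max}(A)}$ by a bound that tracks $\tr V / \lambda_{\max}(V) = \tilde d$, and the specific prefactor $t(e^t - t - 1)^{-1}$ arises because one works with $\psi(x) = e^x - x - 1$ (which satisfies $\tr\,\psi(A) \le \tilde d\,\psi(\lambda_{\max}(A))$ for PSD $A$) rather than $e^x$. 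Your Step~5 optimization and the calculus check for $t \ge 2.6$ are also right. The only caveat is that, as written, Step~4 is a pointer rather than a proof; to make the argument self-contained you would need to actually execute the $\psi$-based trace inequality and show how the $t(e^t-t-1)^{-1}$ factor falls out after the optimization, which takes a few lines but is routine once you have the right $\psi$.
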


\end{document}